\def\dist{\text{dist}}
\def\vec{\text{vec}}
\def\tr{\text{tr}}
\def\reals{\mathcal{R}}
\def\E{\mathbb{E}}
\def\dist{\textnormal{dist}}
\newcommand{\nw}[2]{w^{(#1)}_{\# #2}}
\newcommand{\tparw}[3]{w^{(#1), \parallel }_{\# #2, #3}}
\newcommand{\tpenw}[3]{w^{(#1), \perp }_{\# #2, #3}}
\newcommand{\tw}[3]{w^{(#1)}_{\# #2, #3}}
\newcommand{\w}[1]{w^{(#1)}}
\newcommand{\parw}[1]{w^{(#1),\parallel}}
\newcommand{\penw}[1]{w^{(#1),\perp}}
\let\Ginclude@graphics\@org@Ginclude@graphics 
\title[Understanding How Over-Parametrization Leads to Acceleration]{Understanding How Over-Parametrization Leads to Acceleration: A case of learning a single teacher neuron}
  \author{\Name{Jun-Kun Wang} \Email{jun-kun.wang@yale.edu}\\
  \addr Yale University 
  \AND
  \Name{Jacob Abernethy} \Email{prof@gatech.edu}\\
  \addr Georgia Institute of Technology
 }
\begin{document}

\maketitle

\begin{abstract}
Over-parametrization has become a popular technique in deep learning. 
It is observed that by over-parametrization, 
a larger neural network needs a fewer training iterations than a smaller one to achieve a certain level of performance --- namely, over-parametrization leads to acceleration in optimization.
However, despite that over-parametrization is widely used nowadays, little theory is available to explain the acceleration due to over-parametrization.
In this paper, we propose understanding it by studying a simple problem first. Specifically, we consider the setting that there is a single teacher neuron with quadratic activation, where over-parametrization is realized by having multiple student neurons learn the data generated from the teacher neuron. 
We provably show that over-parametrization
helps the iterate generated by gradient descent to enter the neighborhood of a global optimal solution that achieves zero testing error faster.
\end{abstract}

\begin{keywords}
Over-parametrization
\end{keywords}

\section{Introduction}

Over-parametrization has become a popular technique in deep learning, as it is now widely observed larger neural nets can achieve better performance. Furthermore, a larger network can be trained to achieve a certain level of prediction performance with fewer iterations than that of a smaller net.
This observation, to our knowledge, can be dated back as early as the work of \citet{LSS14}, who try different levels of over-parametrization and report that SGD converges much faster and finds a better solution when it is used to train a larger network. However, the reason why over-parametrization can lead to an acceleration still remains a mystery, and very little theory has helped explain the observation, with perhaps the notable exception of \citep{ACH18}.
\citet{ACH18} consider over-parametrizing a single-output linear regression with $l_p$ loss for $p>2$--the square loss corresponds to $p=2$--and they study the linear regression problem by replacing the model $w \in \reals^d$ by another model $w_1 \in \reals^d$ times a scalar $w_2 \in \reals$. They show that the dynamics of gradient descent on the new over-parametrized model are equivalent to the dynamics of gradient descent on the original objective function with an adaptive learning rate plus some momentum terms. However, in practice, people actually use the techniques of over-parametrization, adaptive learning rate, and momentum simultaneously in deep learning (see e.g. \citep{HHS17,KB15}), as each technique appears to contribute to performance and they may, to some extent, be complementary. It has been suggested that over-parameterizing a model leads implicitly to an adaptive learning rate or momentum, but this does not appear to fully explain the performance improvement.

\begin{figure}[t]
\centering
\subfigure[\footnotesize Obj. (\ref{obj:over}) on training data]{
\includegraphics[width=0.3\textwidth]{./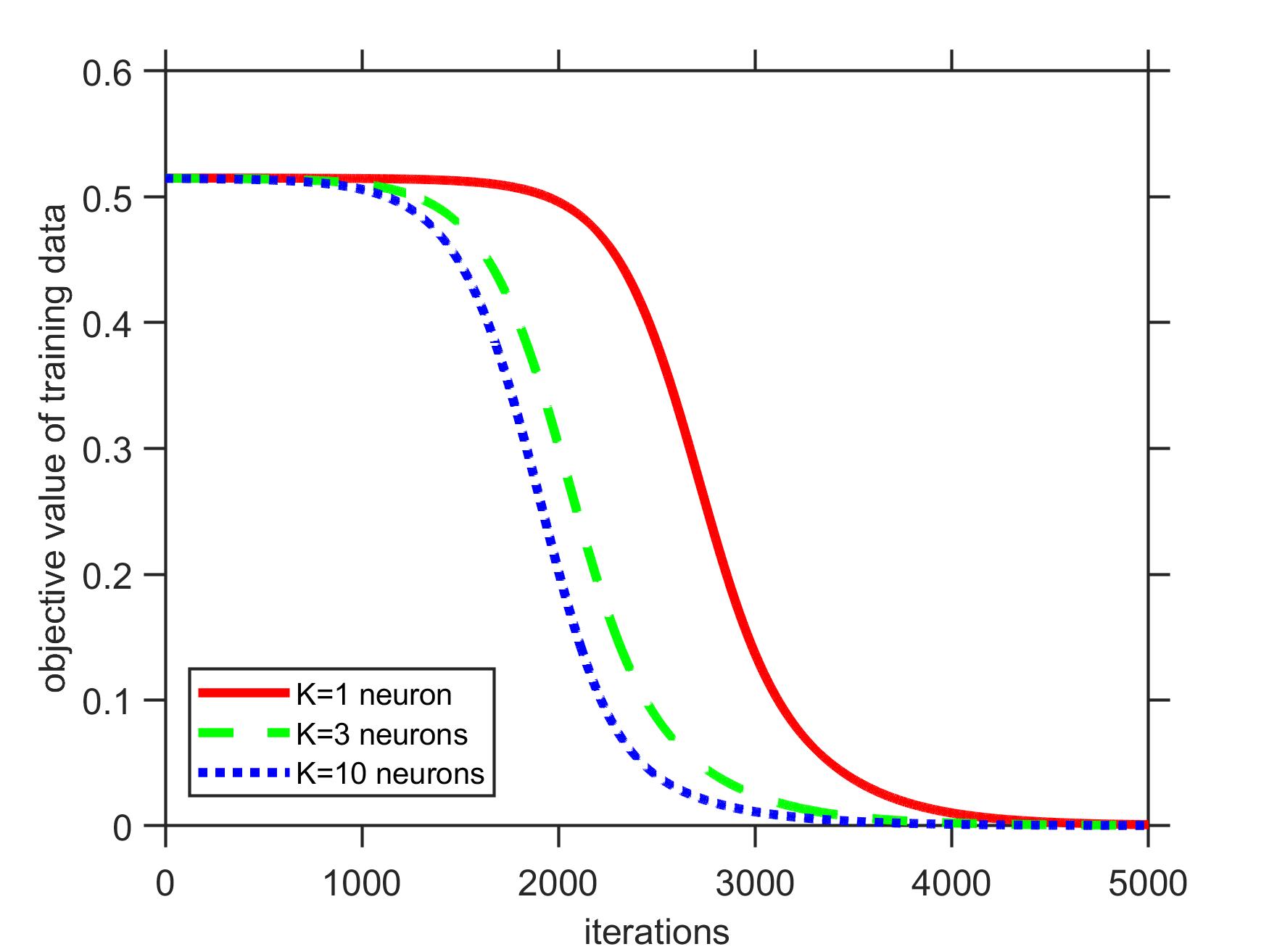}
}
\subfigure[\footnotesize  $\dist(W^{\#K}_t, w_*)$ vs. $t$]{
\includegraphics[width=0.3\textwidth]{./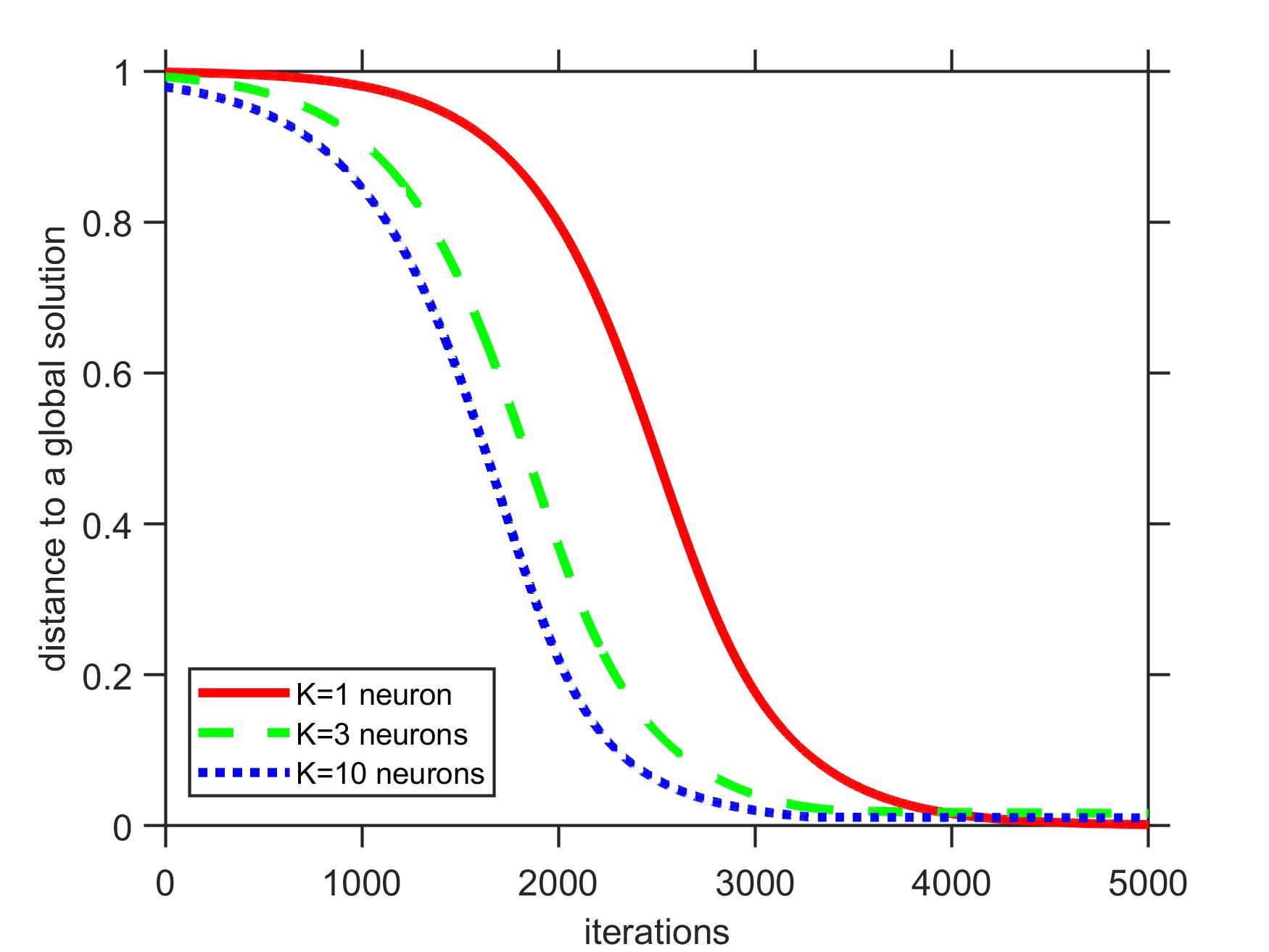}
}
\subfigure[\footnotesize  Quantity (\ref{eq:vHv}) over $t$]{
\includegraphics[width=0.3\textwidth]{./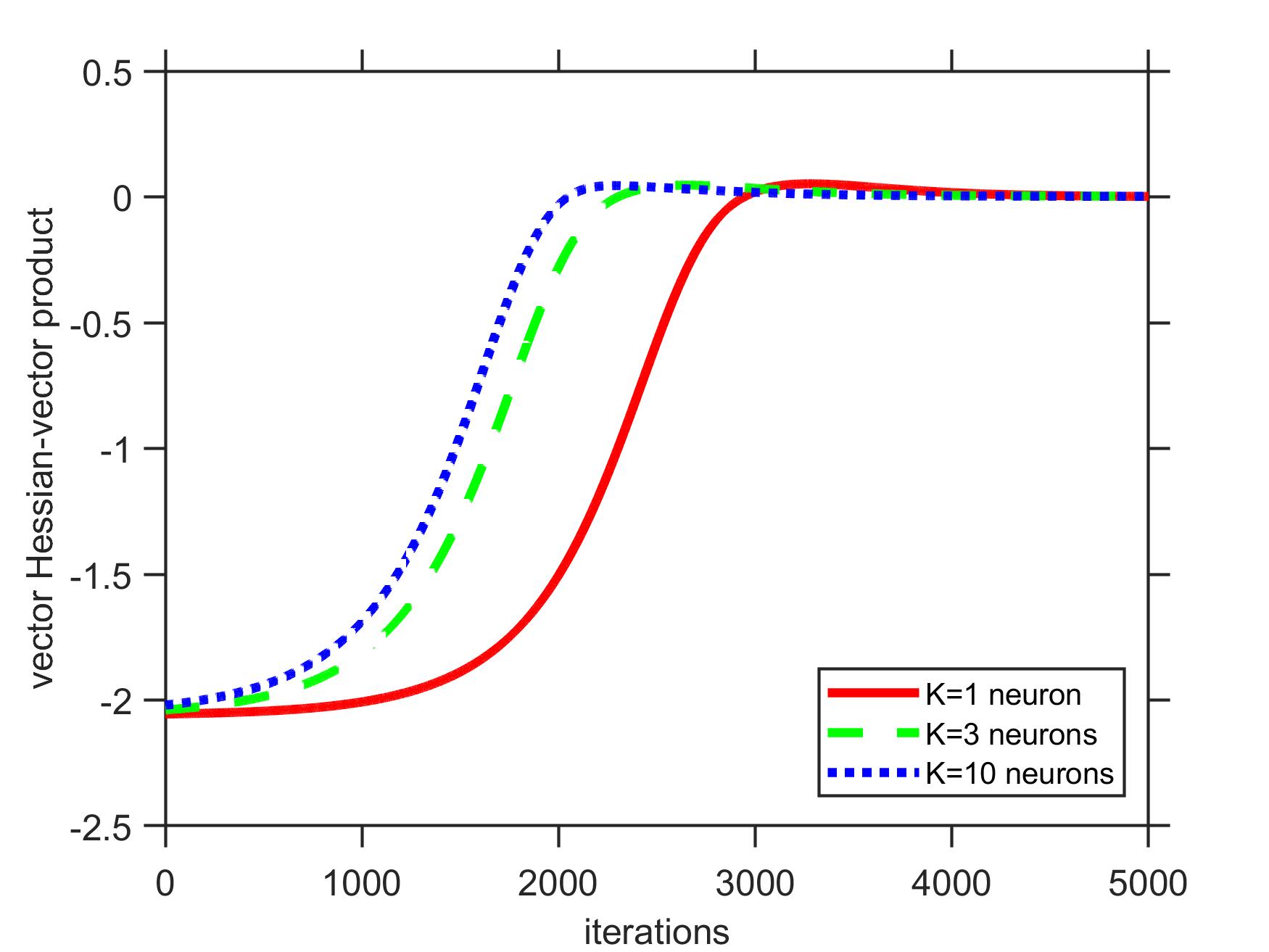}
}
     \caption{\footnotesize
In the experiment, we set the dimension $d=10$ and the number of training samples $n=200$.
Additional $200$ samples are sampled and served as testing data.
We let $w_* = e_1$ with $e_1$ being the unit vector.
Each neuron $w^{(k)} \in \reals^d$ ($k\in [K]$) of the student network is initialized by sampling from an isotropic distribution 
and is close to the origin (i.e. $w_0^{(k)} \sim 0.01 \cdot N(0,I_d/d)$ ).
We apply gradient descent with the same step size $\eta = 0.001$ to train different sizes of neural networks. Each curve represents the progress of gradient descent for different $K$. 
     Subfigure (a): Objective value (\ref{obj:over}) vs. iteration $t$ on training data (testing data, respectively). 
For subfigure (b) and (c), please see Section~3 for the precise definition and details. 
     }
     \label{fig:trts}
\end{figure}

To understand the benefits of overparameterization, let us begin by studying a simple canonical problem: a single teacher neuron $w_* \in \reals^d$ with 
quadratic activation function. Specifically, the label $y_i$ and the design vector $x_i \sim \mathcal{N}( 0, \mathcal{I}_d )$ of sample $i$
satisfies $y_i = (x_i^\top w_*)^2$.
Therefore, the \emph{standard} objective function for learning the teacher neuron $w_*$ is 
\begin{equation} \label{obj}
\begin{split}
\textstyle
 \min_{w \in \reals^d} f(w)  :=    \frac{1}{4n} \sum_{i=1}^n \big( (x_i^\top w)^2 - y_i    \big)^2,
\end{split}
\end{equation}
where $n$ denotes the number of samples.
Problem (\ref{obj}) is called \emph{phase retrieval} in signal processing literature, which has real applications in physical science such as astronomy and microscopy (see e.g. \citep{CSV13}, \citep{AS20}, \citep{SECCMS15}). There are also some specialized algorithms designed for achieving a better sample complexity or computational complexity to recover $w_*$ modulo the unrecoverable sign. 
We choose this problem as a starting point of understanding acceleration due to over-parametrization. Specifically, we consider the following way to over-parametrize the original objective (\ref{obj}),
\begin{equation} \label{obj:over}
\begin{split}
& \textstyle
 \min_{W \in \reals^{d \times K} }f(W)  :=   
\textstyle  \frac{1}{4n} \sum_{i=1}^n \big( (x_i^\top w^{(1)})^2 + (x_i^\top w^{(2)})^2 + ... + (x_i^\top w^{(K)})^2  - y_i    \big)^2,
\end{split}
\end{equation}
where $w^{(j)}$ denote the $j_{th}$ column of the weight matrix $W \in \reals^{d \times K}$.
Optimizing the objective function (\ref{obj:over}) can be viewed as training a student network with $K$ student neurons. 
While a $d$ dimensional model exists which perfectly predicts the labels generated by the teacher neuron (i.e. $\pm w_*$), one can consider training a much larger model instead (i.e. $d \times K$ number of parameters). Note that if $K=1$, objective (\ref{obj:over}) reduces to the original objective (\ref{obj}). On the other hand, a larger $k>1$ means a higher degree of over-parametrization.

Let us now establish, empirically, the clear advantage of over-parametrization for accelerating the learning and optimization process. 
We tried $K=\{1,3,10\}$ number of student neurons in (\ref{obj:over}),
which represent different degrees of over-parametrization,
and we applied gradient descent to train the networks. 
Figure~\ref{fig:trts} shows the results. 
The empirical findings displayed on the figure are quite stark.
First, we see that over-parametrization not only helps to decrease the training error faster but also decrease the testing error faster (c.f. subfigure (a) and (b)). Furthermore, the generalization error is very small since both training error and testing error approach zero. 
Second, regardless of the size $K$, a common pattern is that the dynamics of gradient descent can be divided into two stages.
In the first stage, gradient descent makes little progress on decreasing the function value; while in the second stage, the iterate generated by gradient descent exhibits a linear convergence to a global solution.
Specifically, by over-parametrization, gradient descent spends fewer iterations in the first stage and enters the linear convergence regime that makes the fast progress more quickly. 
We provide more results in Section~\ref{app:diff}. 
Specifically, we also tried different values of the step size $\eta$ and we observed similar patterns as Figure~\ref{fig:trts} shows.
Even when gradient descent uses the best step size for each model with different neurons $K$, 
we still observe that gradient descent enters the linear convergence regime faster for a larger model. Thus, the acceleration due to over-parametrization \emph{cannot} be simply explained by that gradient descent uses a larger \emph{effective} step size, as the effect due to parameters $\eta$ and $K$ is complementary in the experiment.

\section{Related works}
\noindent
\textbf{Over-parametrization:}
Though our work focuses on understanding why over-parametrization leads to acceleration in optimization (i.e. improving the convergence time), we also want to acknowledge some related works of understanding over-parametrization in different aspects (e.g. \citep{ACHL19,BG19,EGKZ20,GASKZ19,T20}.
There is also a trend of works studying how over-parametrization changes 
the optimization landscape of empirical risk minimization 
for neural nets with quadratic activation. 
(e.g. \citep{DL18,GKZ19,GWZ19,KLD19,SJL18,NH17,VBB19,MVZ20}). 
The goals of these works are different from ours.

\noindent
\textbf{Quadratic activation and matrix sensing:}
The optimization landscape of problem (\ref{obj}) (i.e. phase retrieval) and its variants has been studied by \citep{DDP18,S14,SQW16,WSW16}, which shows that as long as the number of samples is sufficiently large, it has no spurious local optima and all the local optima are globally optimal. 
\citet{CCFMY18} provably show that applying gradient descent with an isotropic random initialization for solving (\ref{obj}) leads to an optimal solution that recovers the teacher neuron $w_*$ modulo the unrecoverable sign. In this work we show that an over-parametrized student network 
trained by gradient descent takes even fewer iterations to recover $w_*$.
We also note that the optimization problem (\ref{obj:over}) can be rewritten as the form of matrix sensing (see e.g. \cite{GWBNS17,LMCC19,LMZ18,GBL19}).

\noindent
\textbf{Learning a single neuron:}
Studying learning a single neuron in non-convex optimization is not new (e.g. \cite{GKM19,YS20,KKSK11,GKKT17,KSA19,S17,MBM17,FCG20}).
However, those works are not for showing faster convergence by over-parametrization. The goals are different.

\section{Preliminaries}

\noindent 
\textbf{Notations and assumptions:}
We use the notation $W^{\#K}:= [ \nw{1}{K}, \dots, \nw{K}{K} ]  \in \reals^{d \times K}$ to represent the weights of a student network with $K$ number of neurons. Each column $k$ of the matrix $W^{\#K}$ is the weight vector $\nw{k}{K} \in \reals^d$ that corresponds to the neuron $k$ of the student network. Thus, $W^{\#1}  :=[ \nw{1}{1} ] \in \reals^d$ is the network consists of a single neuron; while for $K>1$, the notation represents the weights of an over-parametrized network. 
In this paper, without loss of generality, we assume that $w_* = \| w_* \| e_1 \in \reals^d$ with $e_1$ being the unit vector whose first element is $1$.
We define the parallel component $\tparw{k}{K}{t}$ and the perpendicular component $\tpenw{k}{K}{t}$ in iteration $t$ as follows,
\begin{equation}
\begin{aligned}
\textstyle
\tparw{k}{K}{t} & \textstyle := \langle \tw{k}{K}{t}, w_* \rangle = \| w_* \| 
\tw{k}{K}{t}[1] 
\\
\textstyle \tpenw{k}{K}{t} & \textstyle := [ \tw{k}{K}{t}[2], \dots, \tw{k}{K}{t}[d] ]^\top.
\end{aligned}
\end{equation}
Namely, the parallel component $\tparw{k}{K}{t}$ is the projection of a student neuron $k$ learned in iteration $t$ on the teacher neuron $w_*$; while the perpendicular component $\tpenw{k}{K}{t}$ is the $d$-$1$ dimensional sub-vector of $\tw{k}{K}{t}$ excluding the first element.
We assume that each neuron $k$ of each network, $\tw{k}{K}{0} \in \reals^d$, is initialized i.i.d. randomly from an isotropic distribution (e.g. gaussian distribution).

\noindent
\textbf{Metric of the progress in optimization:}
The first challenge to show that over-parametrization leads to acceleration in optimization is the design of the metric for the comparison. Since different $K$'s corresponds to different optimization problems, 
the notion of \emph{acceleration} here is non-standard in optimization literature. In the optimization literature, acceleration usually means that an algorithm takes a fewer iterations than other algorithms for the \emph{same} optimization problem. Fortunately, by exploiting the problem structure, we can have a natural metric of the progress as follows.
For any size of the student network $W \in \reals^{d \times K}$, we consider
\begin{equation} \label{dis}
\textstyle \dist( W, w_*):= \min_{q \in \reals^K: \| q\|_2 \leq 1}  \|  W - w_*q^\top \|.
\end{equation}
This is due to the observation that for any $K$,
the global optimal solutions of (\ref{obj:over}) that achieve zero testing error are
$w_* q^\top \in \reals^{d \times K}$ for any $q \in \reals^K$ such that $\| q \|_2 = 1$.
To see this, substitute $W = w_* q^\top  \in \reals^{d \times K}$ into (\ref{obj:over}).
We have that for any $x_i \in \reals^d$ it holds that
$
(x_i^\top w^{(1)})^2 + (x_i^\top w^{(2)})^2 + ... + (x_i^\top w^{(K)})^2  - y_i =  \| x_i^\top W \|^2_F - (x_i^\top w_*)^2  = \tr\big( (x_i^\top w_* q^\top)^\top (x_i^\top w_* q^\top) \big) - (x_i^\top w_*)^2   = 0 $.
Therefore, the metric $\dist( W, w_*)$ as be viewed as a surrogate of the testing error.
In particular,
$\dist( W_t, w_*)$ represents the distance of the current iterate $W_t$ and its closest global optimal solution to the over-parametrized objective (\ref{obj:over}) that achieves zero testing error. 
Note that the argmin of (\ref{dis}) is
$q_* := \frac{ W^\top w_*}{ \|W^\top w_*\|_2 } = \arg\min_{q \in \reals^K: \| q\|_2 \leq 1}  \|  W - w_*q^\top \|.$
On sub-figure (c) of Figure~\ref{fig:trts}, we plot the distance of the iterates generated by gradient descent and its closet global optimal solution for different sizes $K$ of neural nets. We see that over-parametrization enables shrinking the distance $\dist(W^{\#K}_t, w_*)$ faster. 

\noindent
\textbf{Gradient descent dynamics:}
For the notation brevity, we will suppress the symbol $\#K$ when it is 
clear in the context.
The gradient of a student neuron $w^{(k)}$ for the over-parametrized problem (\ref{obj:over}) is 
\begin{equation}
\begin{aligned}
\textstyle
\nabla_{w^{(k)}} f(W) & \textstyle  := \frac{1}{n} \sum_{i=1}^n \big( (x_i^\top w^{(1)})^2 + (x_i^\top w^{(2)})^2 + \dots + (x_i^\top w^{(K)})^2  - y_i   \big) (x_i^\top w^{(k)}) x_i.
\end{aligned}
\end{equation}
Its expectation, which is 
the population gradient of a student neuron $w^{(k)}$ (i.e. gradient when the number of samples $n$ is infinite), is
\begin{equation} \label{Egrad}
\begin{split}
\textstyle
\underset{x \sim N(0,I_d)}{\E}[ \nabla_{w^{(k)}} f(W) ] 
= &
\textstyle \big( 3 \| w^{(k)} \|^2 - \| w_* \|^2 \big) w^{(k)}  - 2 ( w_*^\top w^{(k)}) w_*
\\ &
\textstyle
 + \sum_{j \ne k}^K  2 ((w^{(j)})^\top w^{(k)} ) w^{(j)} +  \| w^{(j)} \|^2 w^{(k)},
\end{split}
\end{equation}
where we use the fact that for any vector $u,v \in \reals^d $, 
$\textstyle 
\E_{x \sim N(0,I_d)}[ (x^\top u)^3 x ]  = 3 \| u\|^2 u$ and 
$\textstyle
 \E_{x \sim N(0,I_d)}[ (x^\top u)^2 (x^\top v)  x]  = 2 ( u^\top v ) u + \| u \|^2 v$.
For $K=1$, (\ref{Egrad}) becomes
$\textstyle \E[ \nabla_{w^{(1)}} f(W^{\#1}) ]
= \big( 3 \| w^{(1)} \|^2 - \| w_* \|^2 \big) w^{(1)} - 2 ( w_*^\top w^{(1)}) w_* .$
If gradient descent uses the population gradient for the update (i.e. $\tw{1}{1}{t+1} = \tw{1}{1}{t} - \eta \E[ \nabla_{w^{(1)}} f(W^{\#1}_t) ] $),
then the dynamics of the student network consists of a single neuron (i.e. $K=1$) evolves as follows,
\begin{equation} \label{dyn:single}
\begin{aligned}
\textstyle \parw{1}_{t+1}
& \textstyle  = 
\parw{1}_{t} \big( 1 + \eta ( 3 \| w_* \|^2 - 3 \| \w{1}_{t} \|^2   ) \big) 
\\ 
\textstyle  \penw{1}_{t+1}
& \textstyle  = 
\penw{1}_{t} \big( 1 + \eta ( \| w_* \|^2 - 3 \| \w{1}_{t} \|^2   ) \big). 
\end{aligned}
\end{equation}
On the other hand, if a student network has $K>1$ neurons, the dynamics of each neuron $k$ of the student network evolves as follows,
\begin{equation} \label{eq:multi-dyn}
\begin{aligned}
\textstyle  \parw{k}_{t+1}
& \textstyle  =  
\underbrace{ \parw{k}_t  \big( 1 + 3\eta (  \| w_* \|^2 - \| \w{k}_t \|^2  ) \big) }_{\text{component A}} -  \big( \underbrace{ 2 \eta \sum_{j \neq k }^K   \langle \w{j}_t, \w{k}_t \rangle  \parw{j}_t
+ \eta \parw{k}_t \sum_{j \neq k}^K \| \w{j}_t \|^2
 }_{ \text{component B} }
\big)
\\ \textstyle 
\penw{k}_{t+1}
& \textstyle  =  
\underbrace{ \penw{k}_t  \big( 1 + \eta ( \| w_* \|^2  -3 \| \w{k}_t \|^2  ) \big) }_{\text{component C}} - \big( \underbrace{  2 \eta \sum_{j \neq k }^K  \langle \w{j}_t, \w{k}_t \rangle  \penw{j}_t
+ \eta \penw{k}_t \sum_{j \neq k}^K \| \w{j}_t \|^2
 }_{ \text{component D} } \big)
\end{aligned}
\end{equation}
where both the component $B$ of $\parw{k}_{t+1}$ and the component $D$ of $\penw{k}_{t+1}$ can be viewed as the terms due to the interaction of student neuron $k$ and the other student neurons.

\noindent
\textbf{More observations:}
On Subfigure (c) of Figure~\ref{fig:trts}, we plot a quantity over iterations, which is 
\begin{equation} \label{eq:vHv}
\textstyle
\vec( w_* q_t^\top - W^{\#K}_t)^\top \nabla^2 f(W^{\#K}_t) \vec(w_* q_t^\top - W^{\#K}_t),
\end{equation}
 where $\nabla^2 f(W^{\#K}_t) \in \reals^{dK\times dK }$ is the Hessian and $w_* q_t^\top$ is the closet global optimal solution to $W^{\#K}_t$ and
the notation $\vec(\cdot)$ represents the vectorization operation of its matrix argument.
The quantity can be viewed as a measure of the strong convexity. Specifically, if the quantity is larger than $0$, then it suggests that the current optimization landscape is strongly convex with respect to 
$w_* q_t^\top$.
Hence, the observation suggests that gradient descent enters a benign region faster for a larger student network.
In the following section, 
We will answer why over-parametrization helps gradient descent to enter a region that has the benign optimization landscape faster.

\section{Analysis}

In this section, we answer why over-parametrization leads to the acceleration.
We first show that gradient descent (GD) exhibits linear convergence to a global optimal solution when $\dist(W^{\#K},w_*)$ is small.
We then answer why over-parametrization helps the iterate to enter the 
neighborhood of a global optimal solution faster.
For the ease of analysis, we assume that gradient descent uses population gradient (\ref{Egrad}) for the update and we denote 
$\nabla F(W):= \E_{x}[ \nabla f(W) ] $ and
$\nabla^2 F(W):= \E_{x}[ \nabla^2 f(W) ] $ accordingly.

\subsection{When does the iterate generated by gradient descent enter a benign region?} 

We first introduce a key lemma. The lemma shows that whenever
the iterate is in the neighborhood of a global optimal solution, gradient descent has a linear convergence rate. 

\begin{lemma} \label{lem:linear}
(locally linear convergence)
Suppose that at time $t_0$, $\dist(W_{t_0}, w_*) :=\| W_{t_0} - w_* q_{t_0}^\top \| \leq \nu \| w_* \|$ where $W_{t_0} \in \reals^{d\times K}$, $\nu > 0$ satisfies $2 - 14 \nu - 2\nu^2 > 0$, and $q_{t_0} := \arg\min_{q \in \reals^K: \| q\|_2 \leq 1}  \|  W_{t_0} - w_*q^\top \|$. Then,
gradient descent with the step size $\eta \leq \frac{ 2 - 14 \nu - 2\nu^2 }{  (13 + 16 \nu^2 )^2 \| w_* \|^4}$
generates iterates $\{ W_{t} \}_{t \geq t_0}$ satisfying
$\textstyle
\dist^2(W_{t+1},w_*) \leq 
\big( 1 -  \eta  (2 - 14 \nu -2 \nu^2) \big) \dist^2( W_{t}, w_*).$
\end{lemma}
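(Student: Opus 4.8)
The plan is to show that $\mathrm{dist}(W,w_*)$ obeys a standard contraction argument for gradient descent on a function that is strongly convex and smooth \emph{in a neighborhood of the optimum}, where here the ``optimum'' is the moving target $w_*q_t^\top$. The two ingredients I would rely on are exactly the (commented-out) local strong convexity bound and the local smoothness bound: for any $W$ with $\|W - w_*q^\top\| \le \nu\|w_*\|$ (where $q$ is the minimizing $q$ for $W$), the Hessian $\nabla^2 F(W)$ satisfies
\[
\vec(V)^\top \nabla^2 F(W)\vec(V) \ge (2 - 14\nu - 2\nu^2)\|w_*\|^2\|V\|_F^2
\]
(dropping the nonnegative $2\,\tr(qw_*^\top V q w_*^\top V)$ term, since it can be shown nonnegative or simply absorbed), and $\|\nabla^2 F(W)\|_2 \le (13 + 16\nu^2)\|w_*\|^2$.

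The key steps, in order. First, I would establish that the neighborhood is \emph{forward invariant}: if $\mathrm{dist}(W_t,w_*)\le \nu\|w_*\|$ then $\mathrm{dist}(W_{t+1},w_*)\le \nu\|w_*\|$ as well. This follows once we prove the contraction inequality for a single step, since $\big(1 - \eta(2-14\nu-2\nu^2)\big) < 1$ under the stated step-size bound; so by induction the entire trajectory $\{W_t\}_{t\ge t_0}$ stays in the region where the two Hessian estimates are valid. Second, fix $t\ge t_0$, write $\Delta_t := W_t - w_*q_t^\top$ where $q_t = \arg\min_q \|W_t - w_*q^\top\|$, so $\|\Delta_t\| = \mathrm{dist}(W_t,w_*)$. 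Because $w_*q_t^\top$ is a global minimizer, $\nabla F(w_*q_t^\top) = 0$; writing the gradient step $W_{t+1} = W_t - \eta\nabla F(W_t)$ and using the fundamental theorem of calculus along the segment from $w_*q_t^\top$ to $W_t$, we get $\nabla F(W_t) = \big(\int_0^1 \nabla^2 F(w_*q_t^\top + s\Delta_t)\,ds\big)\vec(\Delta_t) =: H_t\vec(\Delta_t)$, where $H_t$ is a symmetric PSD operator satisfying $\mu I \preceq H_t \preceq L I$ with $\mu = (2-14\nu-2\nu^2)\|w_*\|^2$ and $L = (13+16\nu^2)\|w_*\|^2$ (each point on the segment lies in the $\nu$-neighborhood, so the two lemmas apply pointwise and integrate). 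Third, bound $\mathrm{dist}(W_{t+1},w_*) \le \|W_{t+1} - w_*q_t^\top\| = \|\vec(\Delta_t) - \eta H_t\vec(\Delta_t)\| = \|(I - \eta H_t)\vec(\Delta_t)\|$, using that $q_{t+1}$ is the \emph{minimizing} $q$ for $W_{t+1}$ and hence can only do better than the fixed choice $q_t$. Finally, $\|(I-\eta H_t)v\|^2 \le (1 - 2\eta\mu + \eta^2 L^2)\|v\|^2$, and the step-size condition $\eta \le \mu / L^2 = \frac{2-14\nu-2\nu^2}{(13+16\nu^2)^2\|w_*\|^4}$ ensures $\eta^2 L^2 \le \eta\mu$, so $1 - 2\eta\mu + \eta^2 L^2 \le 1 - \eta\mu$, which is the claimed bound $\mathrm{dist}^2(W_{t+1},w_*) \le (1-\eta(2-14\nu-2\nu^2))\,\mathrm{dist}^2(W_t,w_*)$.

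The main obstacle is the subtlety introduced by the moving reference point $w_*q_t^\top$: the distance is defined as an infimum over $q$, and $q_t$ changes from step to step, so one must be careful that (a) the Taylor/integral expansion is carried out with the \emph{same} anchor $w_*q_t^\top$ on both sides of the single-step bound, and (b) passing from $\|W_{t+1} - w_*q_t^\top\|$ to $\mathrm{dist}(W_{t+1},w_*)$ only helps (it is a min over $q$). Both are handled by the observation above, but it is the one place where a naive argument could go wrong. A secondary technical point is verifying that every point on the segment $w_*q_t^\top + s\Delta_t$, $s\in[0,1]$, indeed satisfies the $\nu\|w_*\|$ proximity hypothesis with the \emph{correct} minimizing $q$ — this is immediate because $\|(w_*q_t^\top + s\Delta_t) - w_*q_t^\top\| = s\|\Delta_t\| \le \|\Delta_t\| \le \nu\|w_*\|$, and the strong-convexity lemma only needs proximity to \emph{some} global optimum $w_*q^\top$ within $\nu\|w_*\|$, which $w_*q_t^\top$ supplies.
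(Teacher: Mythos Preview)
Your proposal follows essentially the same route as the paper's proof: fix the anchor $w_*q_t^\top$ for one step, use $\nabla F(w_*q_t^\top)=0$ and the fundamental theorem of calculus to write $\nabla F(W_t)=H_t\,\vec(\Delta_t)$ with $H_t=\int_0^1 \nabla^2 F(w_*q_t^\top+s\Delta_t)\,ds$, bound $\|W_{t+1}-w_*q_t^\top\|^2$ by expanding the square, then pass to $\dist(W_{t+1},w_*)$ via the minimization over $q$. The forward-invariance and the segment-stays-in-the-neighborhood points are exactly as in the paper.

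There is one imprecision you should tighten. You assert $\mu I \preceq H_t$ after ``dropping the nonnegative $2\,\tr(q w_*^\top V q w_*^\top V)$ term.'' That trace term is \emph{not} nonnegative for arbitrary $V$ (take any $V$ for which $A:=q w_*^\top V$ has $\tr(A^2)<0$, e.g.\ $A$ skew), so the operator inequality $H_t\succeq \mu I$ does not follow from the strong-convexity lemma alone. What the paper uses---and what your expansion actually needs---is only the scalar inequality $\vec(\Delta_t)^\top H_t\,\vec(\Delta_t)\ge \mu\|\Delta_t\|^2$, i.e.\ the quadratic form in the \emph{specific} direction $V=\Delta_t=W_t-w_*q_t^\top$. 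For this $V$ the trace term is indeed nonnegative, because with $q_t=W_t^\top w_*/\|W_t^\top w_*\|$ the matrix $q_t w_*^\top(W_t-w_*q_t^\top)$ is symmetric, so $\tr\big((q_t w_*^\top\Delta_t)^2\big)=\|q_t w_*^\top\Delta_t\|_F^2\ge 0$. With that correction your bound $\|(I-\eta H_t)\vec(\Delta_t)\|^2\le (1-2\eta\mu+\eta^2 L^2)\|\Delta_t\|^2$ goes through verbatim (it only uses $\vec(\Delta_t)^\top H_t\vec(\Delta_t)\ge\mu\|\Delta_t\|^2$ and $\|H_t\|\le L$), and the rest of your argument matches the paper.
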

The proof is available in Appendix~\ref{app:benign}.
Note that the lemma holds for \emph{any} size $K$ of neural nets, as long as the condition, $\dist(W,w_*) \leq \nu \| w_* \|_2$ with the required $\nu$, is satisfied.
The condition ensures the locally linear convergence of gradient descent
and might be easily satisfied by having a larger number of student neurons $K>1$. Specifically, each neuron $\w{k} \in \reals^d$ of $W \in \reals^{d \times K}$ only needs to have a smaller component on the direction of $w_*$ in order to have $W$ be sufficiently close to the teacher neuron $w_*$ up to a transform $q^\top$, compared to the case when one only has a single student neuron ($K=1$). 
To support this argument, we plot the quantities
$|\tparw{k}{K}{t}| = |\langle \tw{k}{K}{t}, w_* \rangle|$
for each $k \in [K]$ of different student networks trained by gradient descent on Figure~\ref{fig:breakdown}. 
The figure shows that with more student neurons $K$, each $w^{(k)}$ only needs a smaller projection on $w_*$ for the aggregate projection $\sqrt{ \sum_{k=1}^K | \tparw{k}{K}{t} |^2 }$ to achieve certain level.
In the later subsections, we will provide a formal analysis.     

\begin{figure}[t]
\centering
\subfigure[\footnotesize $|\tparw{k}{K}{t}|$ ($K=3$)]{
\includegraphics[width=0.35\textwidth]{./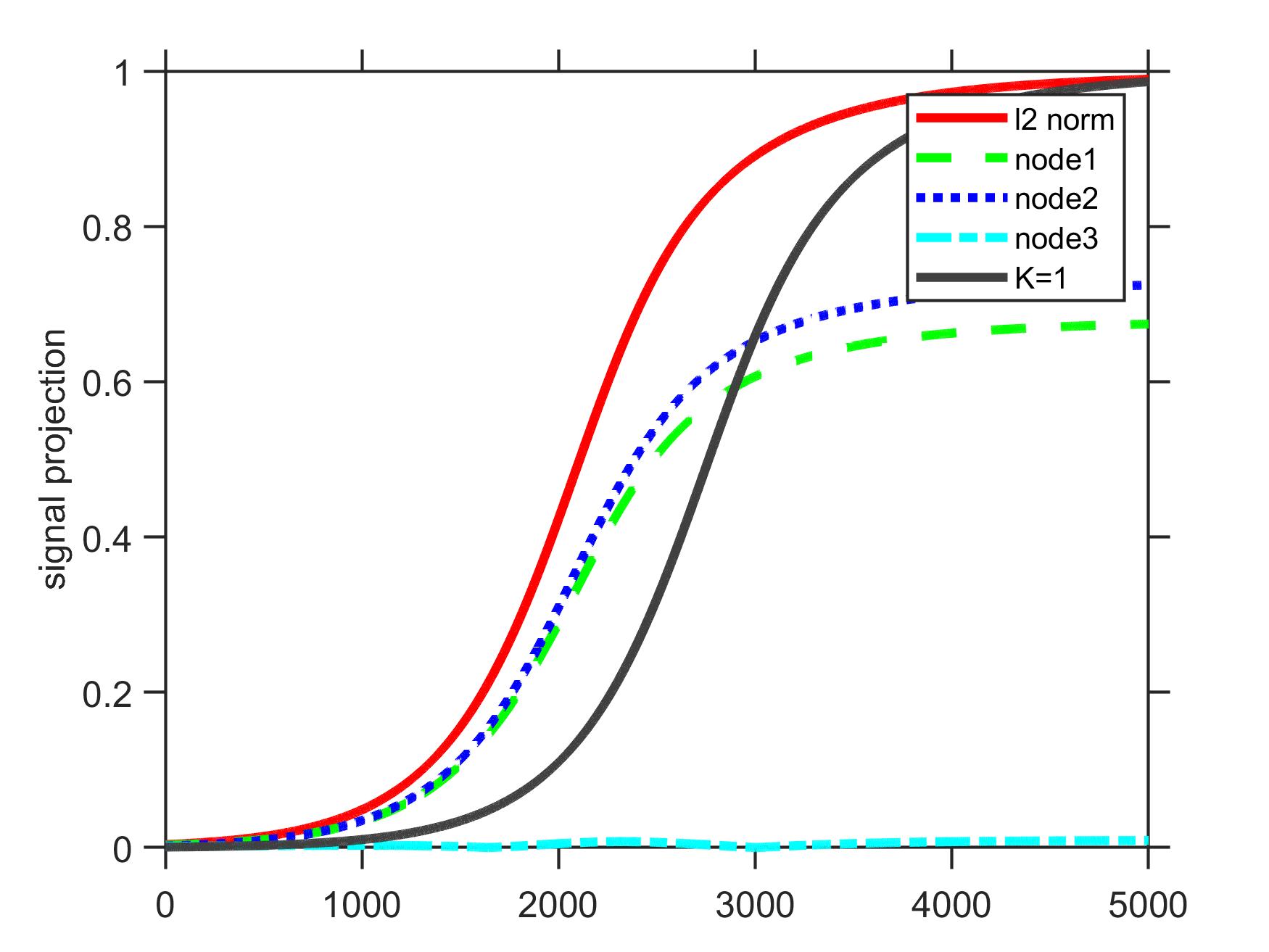}
}
\subfigure[\footnotesize $|\tparw{k}{K}{t}|$ ($K=10$)]{
\includegraphics[width=0.35\textwidth]{./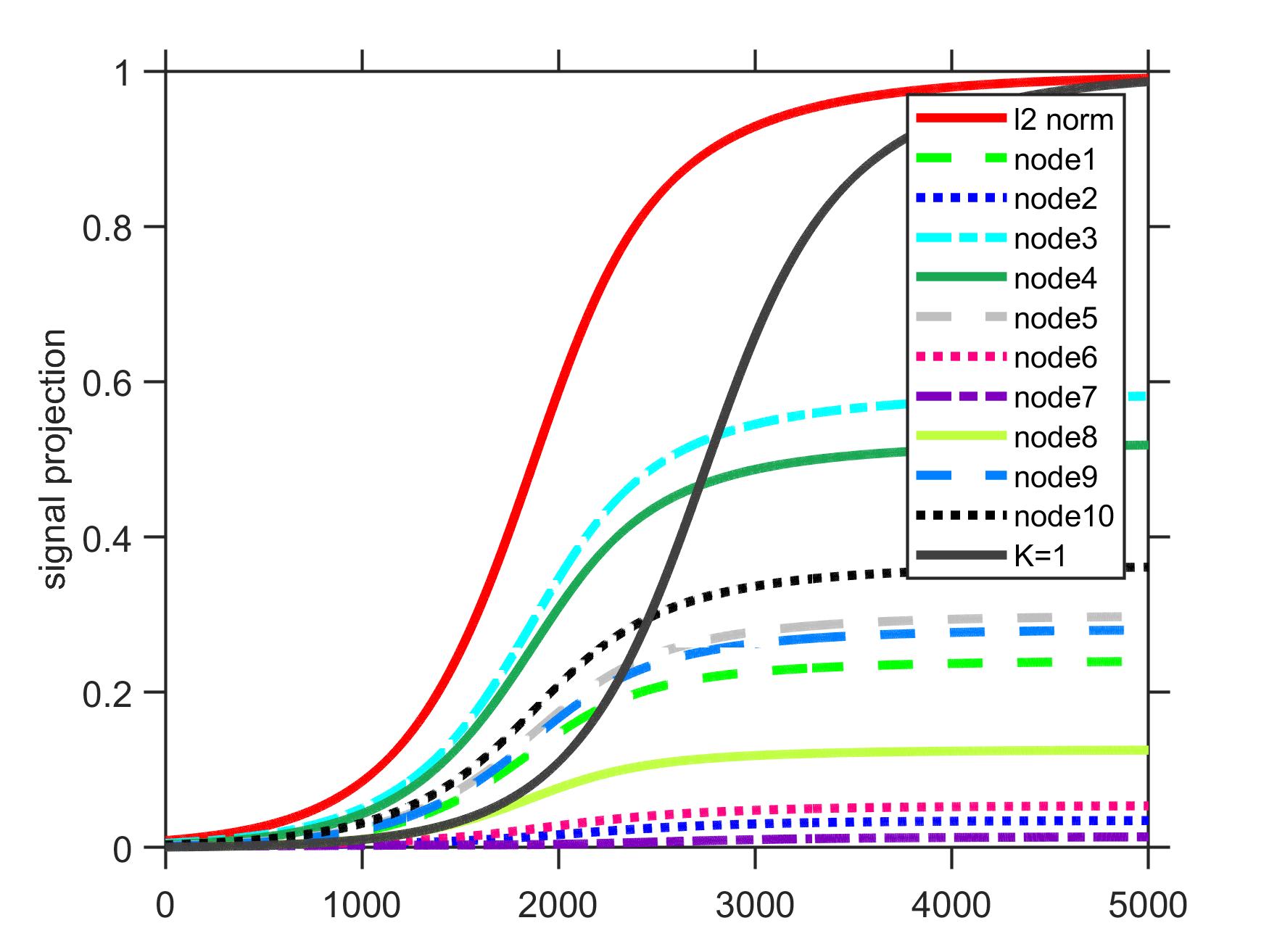}
}\\
\subfigure[\footnotesize $\| \tw{k}{K}{t} \|^2$  ($K=3$)]{
\includegraphics[width=0.35\textwidth]{./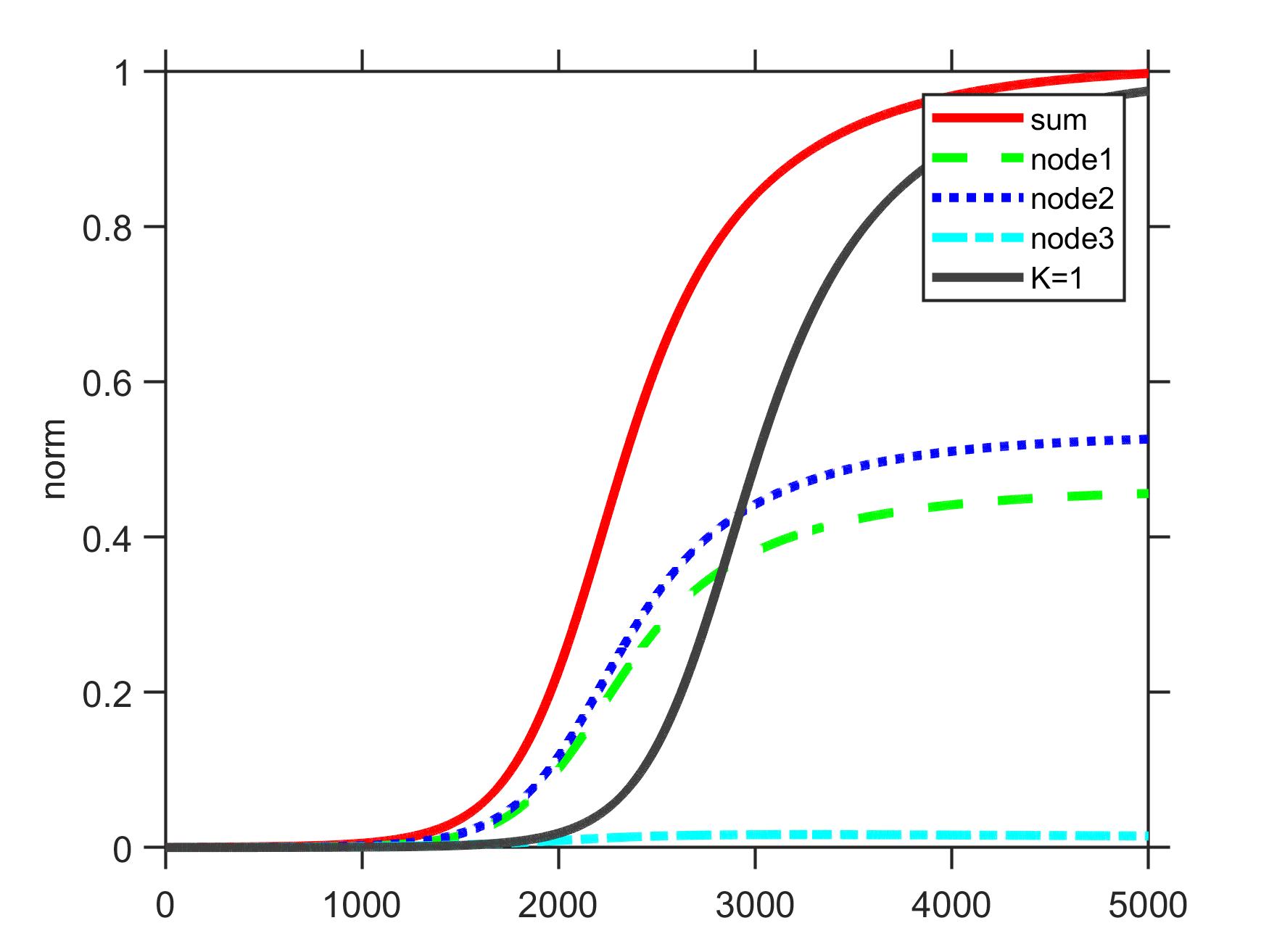}
}
\subfigure[\footnotesize $\| \tw{k}{K}{t} \|^2$  ($K=10$)]{
\includegraphics[width=0.35\textwidth]{./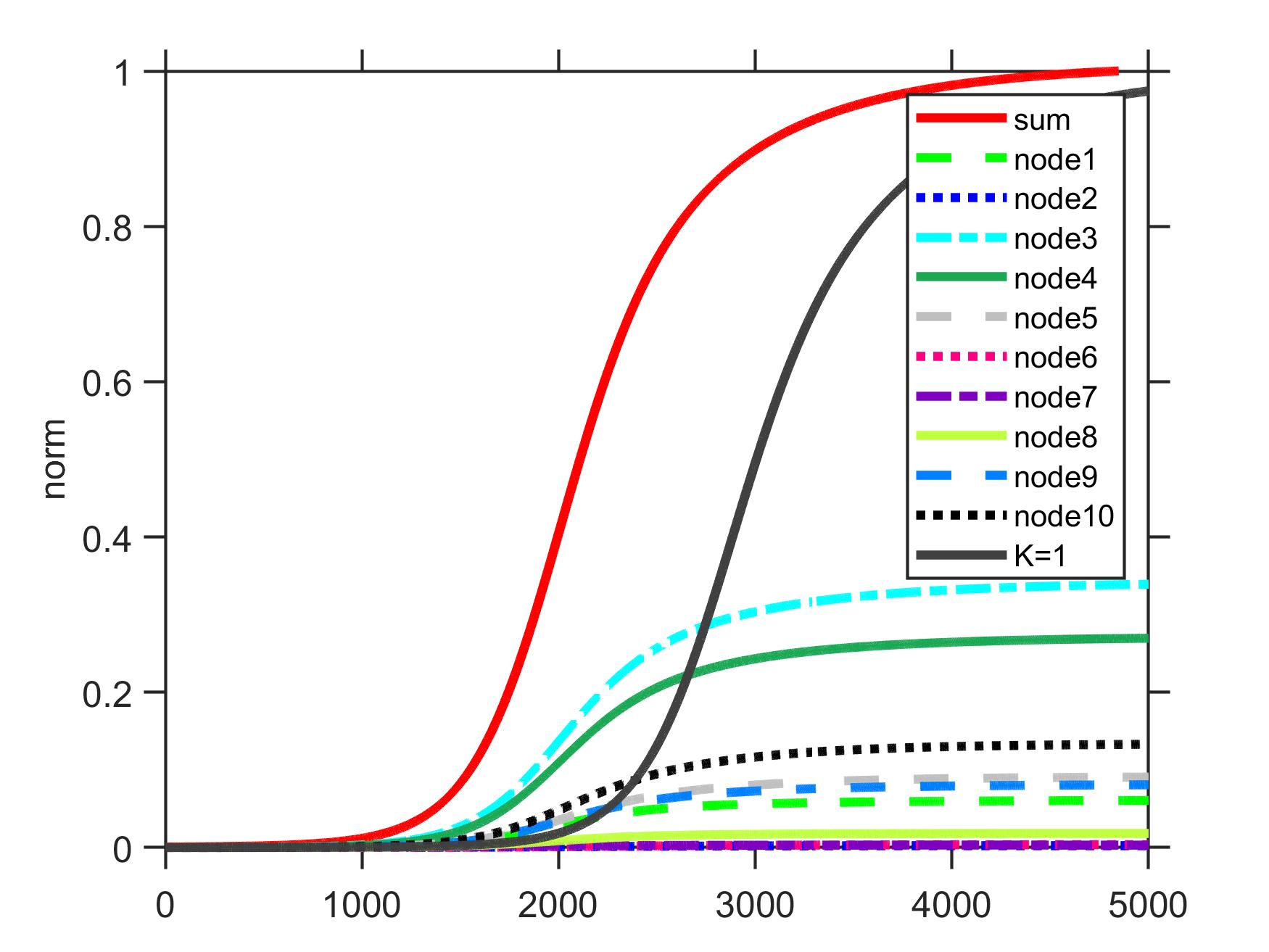}
}
     \caption{\footnotesize Subfigure (a) and (b): parallel component $|\tparw{k}{K}{t}|$ of each neuron $k$ vs. iteration $t$ for student networks with size $K=\{3,10\}$ trained by gradient descent.
     The curve  ``l2 norm'' represents $\sqrt{ \sum_{k=1}^K | \tparw{k}{K}{t} |^2 }$, which can be viewed as the aggregate projection of a student network $W \in \reals^{d \times K}$ on the teacher neuron $w_* \in \reals^d$. For comparison, we also plot the curve for $|\tparw{1}{1}{t}|$ labeled by $K=1$ on the same subfigures. Subfigure (c) and (d): the square norm $\| \tw{k}{K}{t} \|^2$ of each neuron $k$ for different $K$'s. The curve ``sum'' represents 
     $\| W^{\#K} \|^2_F$. For comparison, we also plot the curve for 
     $\|\tw{1}{1}{t}\|^2$ labeled by $K=1$ on the subfigures.} 
     \label{fig:breakdown}
\end{figure}
\vspace{-0.05in}

\subsection{How does gradient descent work for the student network consists of a single neuron?}
In this subsection, we analyze the case that the student network only has a single neuron. For brevity, we suppress the notation $\#1$ in the following.
Define
$\textstyle
T_{\gamma}:= \min \{t : 
| |w_t[1]| - \| w_* \| | \leq \gamma \text{ and } \| w_t^\perp \| \leq \gamma \}$.
 Note that if 
$| |w_t[1]| - \| w_* \| | \leq \gamma$ and that  $\| w_t^\perp \| \leq \gamma$, then $\dist^2( w_t, w_*) = |
|w_t[1]| - \|w_* \|  |^2 + \| w_t^\perp \|_2^2 \leq 2 \gamma^2$.
Let us assume that (C1) $\| w_* \| > 1.1 \gamma$ (strong signal) and  
(C2) $\gamma \geq 10 \| w_0 \|$ (small initialization).
Note that to invoke Lemma~\ref{lem:linear} for showing locally linear convergence after the iterate gets into a benign region, we will set $2 \gamma^2 = \nu^2 \|w_* \|^2$ with $\nu$ satisfying $2 - 14 \nu - 2\nu^2 > 0$ (i.e. $\nu \leq 0.141$); consequently, (C1) is trivially satisfied. 

\begin{theorem} \label{lem:single}
Suppose that the conditions (C1-C2) hold.
Assume that the step size satisfies $\eta \leq c / \| w_* \|^2$ for some sufficiently small constant $c>0$. Then gradient descent  
for problem (\ref{obj}) (i.e. $K=1$) has
$ \textstyle
T_{\gamma} \leq 
\frac{ \log ( \frac{ \| w_* \| - \gamma  }{ |w_0[1]| }  ) }{ \log( 1 + \eta \Delta) },
$
where $\Delta := 6 \gamma ( \| w_* \| - \gamma ) > 0$.
Furthermore, for $0 \leq t \leq T_{\gamma}$, we have that
$| w_t^\parallel | \geq (1 + \eta \Delta)^t | w_0^\parallel |$ and
$\| w_t^\perp \| \leq \gamma$.
\end{theorem}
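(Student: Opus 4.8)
The plan is to collapse the vector recursion (\ref{dyn:single}) to a two-dimensional one in the scalars $a_t := |w_t[1]|$ (so that $|w_t^\parallel| = \|w_*\| a_t$) and $b_t := \|w_t^\perp\|$, for which $\|w_t\|^2 = a_t^2 + b_t^2$ and, whenever the two multiplicative factors stay nonnegative,
\[
a_{t+1} = a_t\bigl(1 + \eta(3\|w_*\|^2 - 3(a_t^2+b_t^2))\bigr), \qquad b_{t+1} = b_t\bigl(1 + \eta(\|w_*\|^2 - 3(a_t^2+b_t^2))\bigr).
\]
(The isotropic initialization guarantees $a_0 = |w_0[1]| > 0$, so the bound is well posed.) The whole statement will follow from one induction on $t$, run up to $\tau := \min\{t : a_t \geq \|w_*\| - \gamma\}$, carrying the hypotheses: (i) $b_t \leq \gamma$; (ii) $a_t < \|w_*\|-\gamma$ for $t < \tau$, hence $\|w_t\|^2 < (\|w_*\|-\gamma)^2 + \gamma^2 =: R^2 < \|w_*\|^2$; (iii) both factors above lie in $(0,\,1+3c]$; and (iv) $a_t \geq (1+\eta\Delta)^t a_0$.

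The routine direction of the inductive step: (i)--(ii) give $3\|w_*\|^2 - 3\|w_t\|^2 > 3(\|w_*\|^2 - R^2) = 6\gamma(\|w_*\|-\gamma) = \Delta > 0$, which makes the $a$-factor at least $1+\eta\Delta$ (in particular positive) and, since $\|w_t\|^2 \geq 0$, makes the $b$-factor at least $1 - 3\eta\|w_*\|^2 \geq 1-3c > 0$; this is (iii), and it yields $a_{t+1} \geq (1+\eta\Delta) a_t$, hence (iv). Because also $a_{t+1} \leq (1+3\eta\|w_*\|^2)a_t \leq (1+3c)(\|w_*\|-\gamma)$, for $c$ small we get $a_{t+1} < \|w_*\|+\gamma$, so when $a_t$ first crosses $\|w_*\|-\gamma$ it lands in $[\|w_*\|-\gamma, \|w_*\|+\gamma]$; together with (i) this forces $T_\gamma = \tau$, and the geometric growth in (iv) gives $\tau \leq \log\bigl((\|w_*\|-\gamma)/|w_0[1]|\bigr)/\log(1+\eta\Delta)$, which is the claimed bound; the two displayed inequalities of the theorem then hold for $0\leq t\leq T_\gamma$ directly from (i) and (iv).

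The crux, and the step I expect to be the main obstacle, is propagating (i) --- keeping the perpendicular part below $\gamma$ --- since $b_t$ genuinely \emph{grows} in the early phase (while $\|w_t\|^2 < \|w_*\|^2/3$ the $b$-factor exceeds $1$). The key structural fact is that the parallel (``signal'') component grows strictly faster than the perpendicular (``noise'') one: writing $\beta_t$ for the $b$-factor, the $a$-factor is exactly $\beta_t + 2\eta\|w_*\|^2$, so the potential $\Phi_t := b_t^2/a_t$ obeys $\Phi_{t+1}/\Phi_t = \beta_t^2/(\beta_t + 2\eta\|w_*\|^2) \leq 1$, because $0 < \beta_t \leq 1 + \eta\|w_*\|^2$ and $\eta\|w_*\|^2 \leq c \leq 1$ imply $\beta_t(\beta_t - 1) \leq 2\eta\|w_*\|^2$. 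Hence $\Phi_t$ is non-increasing, so $b_t \leq b_0\sqrt{a_t/a_0}$; combined with $a_t \leq \|w_*\|+\gamma$ from the no-overshoot bound, the target $b_t \leq \gamma$ reduces to an inequality of the shape $b_0^2(\|w_*\|+\gamma) \leq \gamma^2 |w_0[1]|$, which (together with the base case $b_0 \leq \|w_0\| \leq \gamma/10 < \gamma$) is exactly what the strong-signal assumption (C1) and the small-initialization assumption (C2) are in place to supply. The remaining pieces --- positivity of the factors, the geometric lower bound on $a_t$, and no overshoot --- are all short once the perpendicular bound is secured; the only genuinely delicate bookkeeping is the constant-chasing there.
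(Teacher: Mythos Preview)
Your inductive skeleton, the parallel growth bound (iv), and the no-overshoot step all match the paper's argument. The gap is in your control of the perpendicular component.

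The potential $\Phi_t = b_t^2/a_t$ is indeed non-increasing --- your computation $\beta_t(\beta_t-1)\le 2\eta\|w_*\|^2$ is correct --- so you do get $b_t \le b_0\sqrt{a_t/a_0}$. But this bound is too weak to close. You reduce $b_t\le\gamma$ to $b_0^2(\|w_*\|+\gamma)\le \gamma^2|w_0[1]|$ and assert that (C1)--(C2) supply it; they do not. Conditions (C1)--(C2) give $b_0\le\|w_0\|\le\gamma/10$ and $\|w_*\|>1.1\gamma$, but provide \emph{no lower bound on $|w_0[1]|$ whatsoever}. Under an isotropic initialization in dimension $d$ one typically has $|w_0[1]|\asymp \|w_0\|/\sqrt{d}$ while $b_0=\|w_0^\perp\|\asymp\|w_0\|$, so your required inequality becomes roughly $\|w_0\|(\|w_*\|+\gamma)\sqrt{d}\le\gamma^2$, which already fails for moderate $d$ or $\|w_*\|/\gamma$. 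The structural reason is that in the early phase ($\|w_t\|^2\approx 0$) the $b$-factor is $1+\eta\|w_*\|^2$ and the $a$-factor is $1+3\eta\|w_*\|^2$, so the sharp relation is $b_t/b_0\approx (a_t/a_0)^{1/3}$; the exponent $1/2$ delivered by $\Phi_t=b_t^2/a_t$ throws away exactly the margin you need.

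The paper handles this step differently. It introduces the first time $t^*$ at which $\|w_t\|^2\ge\|w_*\|^2/3$ (after which $\beta_t\le 1$ and $b_t$ is non-increasing, so only $b_{t^*}\le\gamma$ must be checked), bounds $t^*$ from above via the geometric growth of $|w_t[1]|$, and then controls $\log\prod_{s<t^*}\beta_s\le\sum_{s<t^*}\eta\bigl(\|w_*\|^2-3w_s^2[1]\bigr)$ directly: the negative part is a geometric series in $w_s^2[1]$ that is played off against the positive $\eta t^*\|w_*\|^2$, with the residual absorbed by (C2) and the small-step-size assumption. That argument uses the full $3{:}1$ separation between the parallel and perpendicular growth exponents, which is what your quadratic potential misses. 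If you want to salvage a potential approach, the natural candidate is $b_t^3/a_t$ (borderline non-increasing for small $c$), but even then you will need more than (C1)--(C2) as stated to close --- which is why the paper resorts to the explicit sum.
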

Theorem~\ref{lem:single} states that to achieve $\dist^2( w_t, w_*) \leq 2 \gamma^2$,
gradient descent only needs at most $\log ( \frac{ \| w_* \| - \gamma  }{ |w_0[1]| }  ) / \log( 1 + \eta \Delta) $ number of iterations.
Furthermore, on the signal direction, $|w_t[1]|$ (and hence $w_t^\parallel$) grows exponentially at a rate at least $1+\eta \Delta$ before reaching at $\| w_* \| - \gamma$. On the other hand, by a close-to-zero initialization (C2), the perpendicular component $\| w^\perp \|$ remains small.
Consequently, we have that $\dist^2(W^{\#1}_t, w_*) \leq \max( \gamma^2, \big( | w_* | -| w_0[1] | (1+\eta \Delta)^t  \big)^2 ) + \gamma^2$, for $0\leq t \leq T_{\gamma}$ before gradient descent enters the linear convergence regime.
The proof of Theorem~\ref{lem:single} is available in Appendix~\ref{app:single}. A similar result as Theorem~\ref{lem:single} was shown in \citep{CCFMY18}.

\subsection{How does over-parametrization help entering a benign region?}

Let us begin by providing a more detailed observation regarding the dynamics of gradient descent.
Figure~\ref{fig:component} plots each component of $\tparw{k}{K}{t}$ and $\tpenw{k}{K}{t}$ in the gradient descent dynamics (\ref{eq:multi-dyn}), while
Figure~\ref{fig:componentRatio} plots
(the magnitude of) component B to component A and the ratio of component D to component C for $K=10$.
The empirical findings show that
the component due to the interaction (component B, or component D respectively) is negligible compared to 
the component that is without the dependency on the other neurons (component A, or component C respectively).
Based on the observation, 
we can re-write the population dynamics (\ref{eq:multi-dyn}) in the early stage (i.e. before gradient descent enters the linear convergence regime) as follows,
\begin{equation} \label{eq:multi-dyn2}
\begin{aligned}
\textstyle  | \tparw{k}{K}{t+1} |
& \textstyle  \geq  (1 - \theta)
| \tparw{k}{K}{t} |  \big( 1 + \eta ( 3 \| w_* \|^2 - 3\| \tw{k}{K}{t} \|^2  ) \big) 
\\ \textstyle 
\| \tpenw{k}{K}{t+1} \|
& \textstyle  \leq (1 + \vartheta)  
\| \tpenw{k}{K}{t} \| \big( 1 + \eta ( \| w_* \|^2  -3 \| \tw{k}{K}{t} \|^2  ) \big) .
\end{aligned}
\end{equation}
for some small numbers $\theta, \vartheta \ll 1$ (empirically $\approxeq 10^{-4}$ as Figure~\ref{fig:component} shows).
\begin{figure}[t]
\centering
     \subfigure[Component A vs. $t$. \label{subfig-1:dummy}]{%
       \includegraphics[width=0.35\textwidth]{./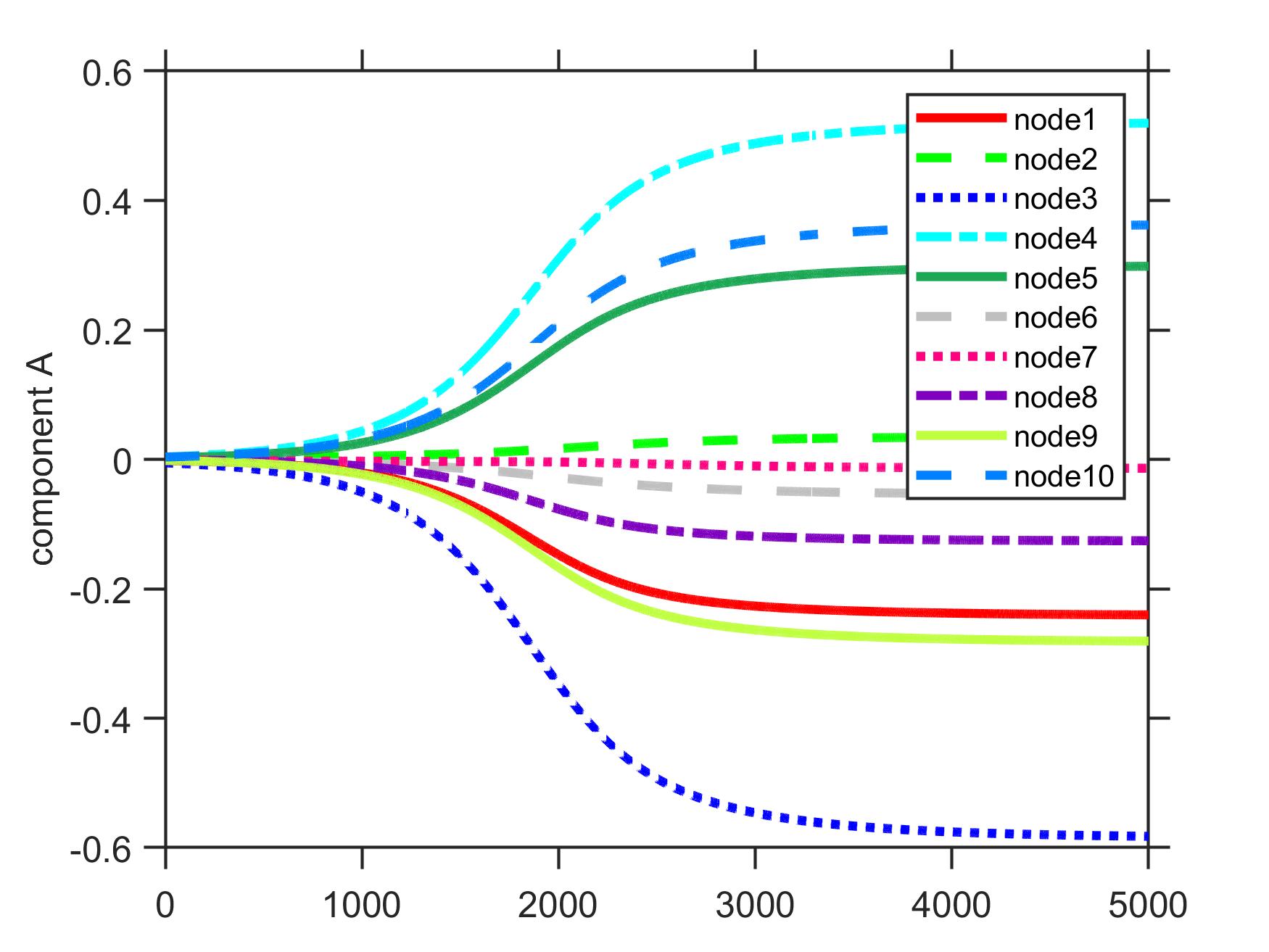}
     } 
     \subfigure[Component B vs. $t$.\label{subfig-2:dummy}]{%
       \includegraphics[width=0.35\textwidth]{./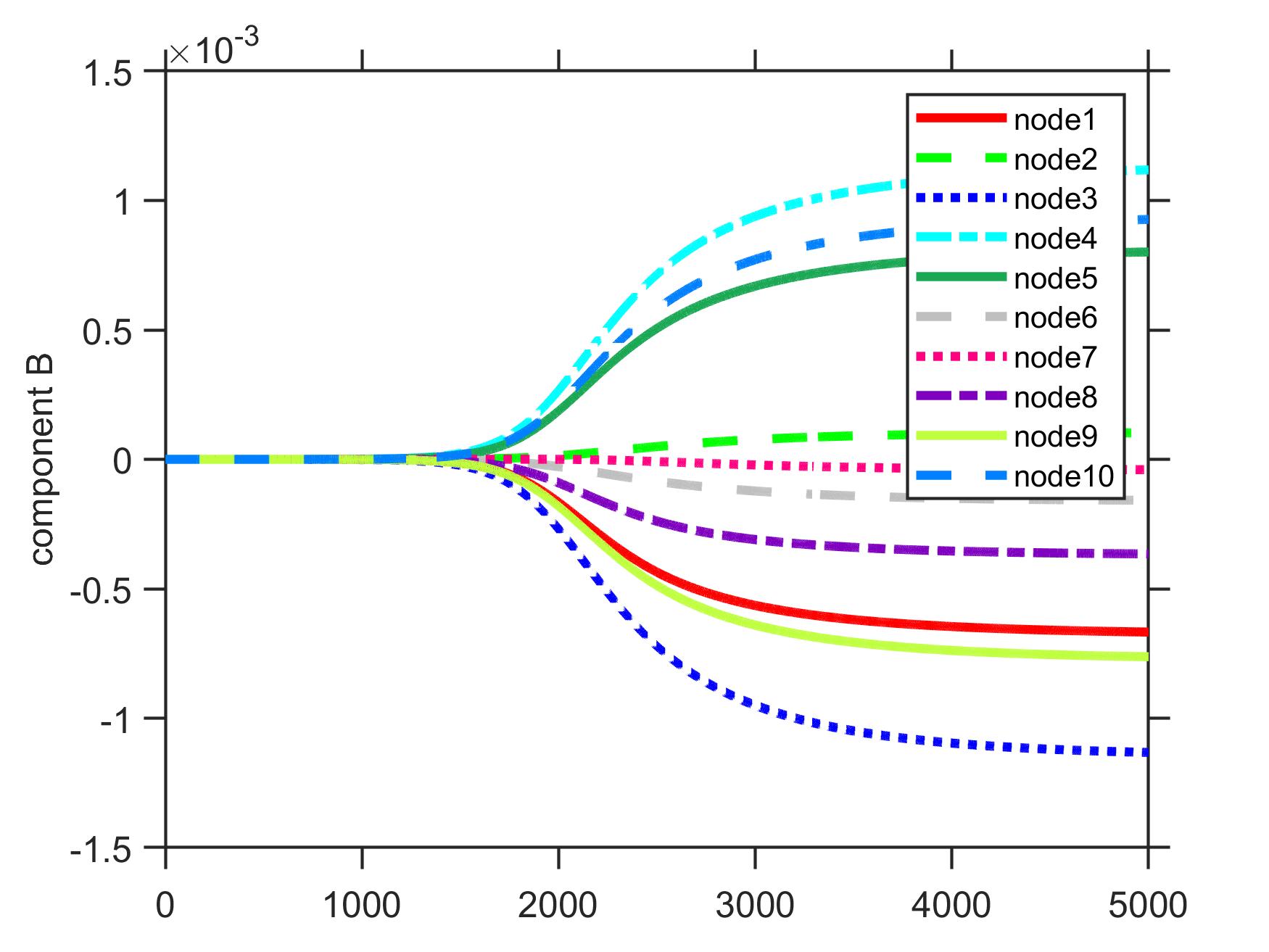}
     } \\
     \subfigure[Component C vs. $t$. \label{subfig-1:dummy}]{%
       \includegraphics[width=0.35\textwidth]{./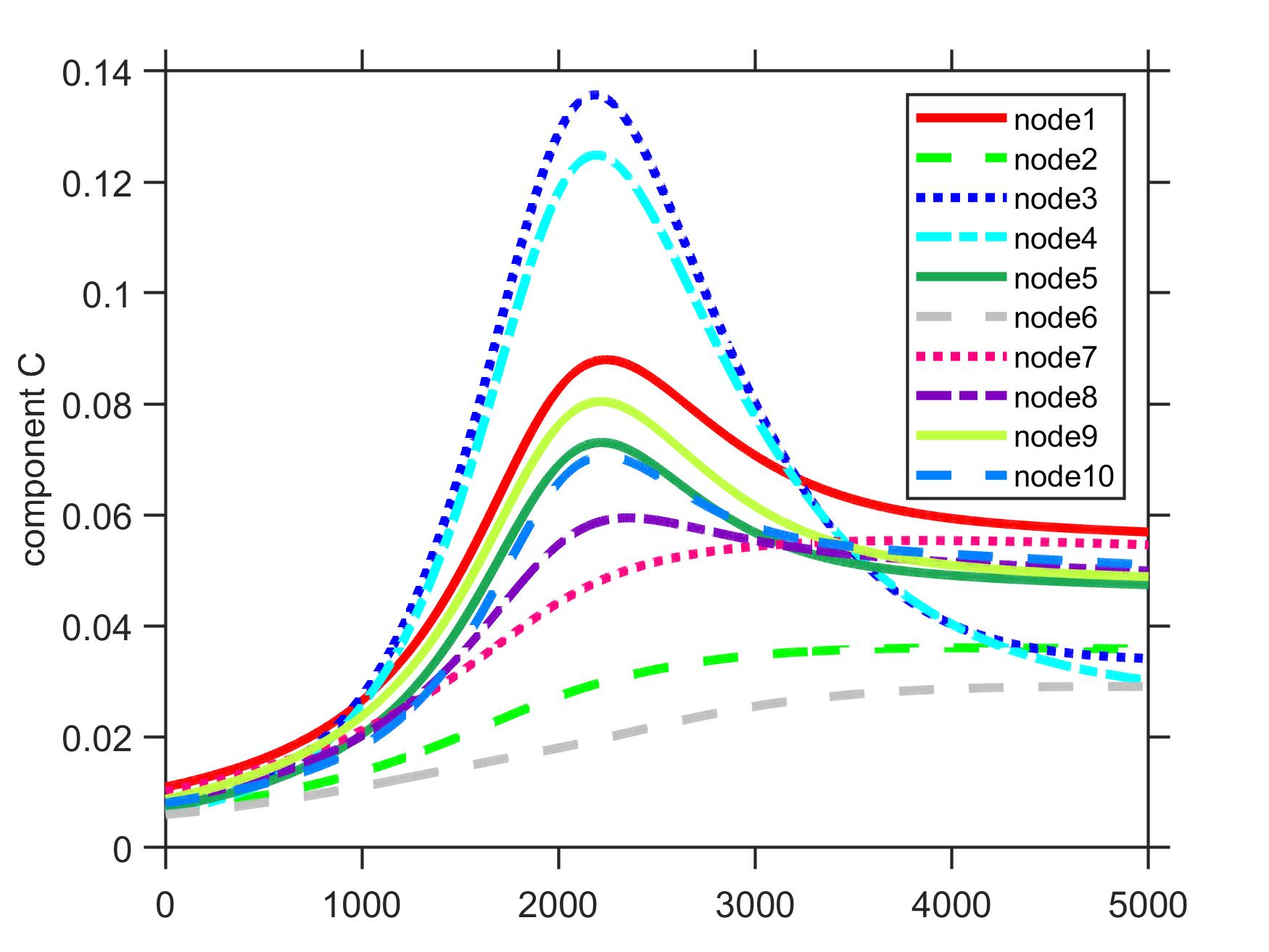}
     } 
     \subfigure[Component D vs. $t$.\label{subfig-2:dummy}]{%
       \includegraphics[width=0.35\textwidth]{./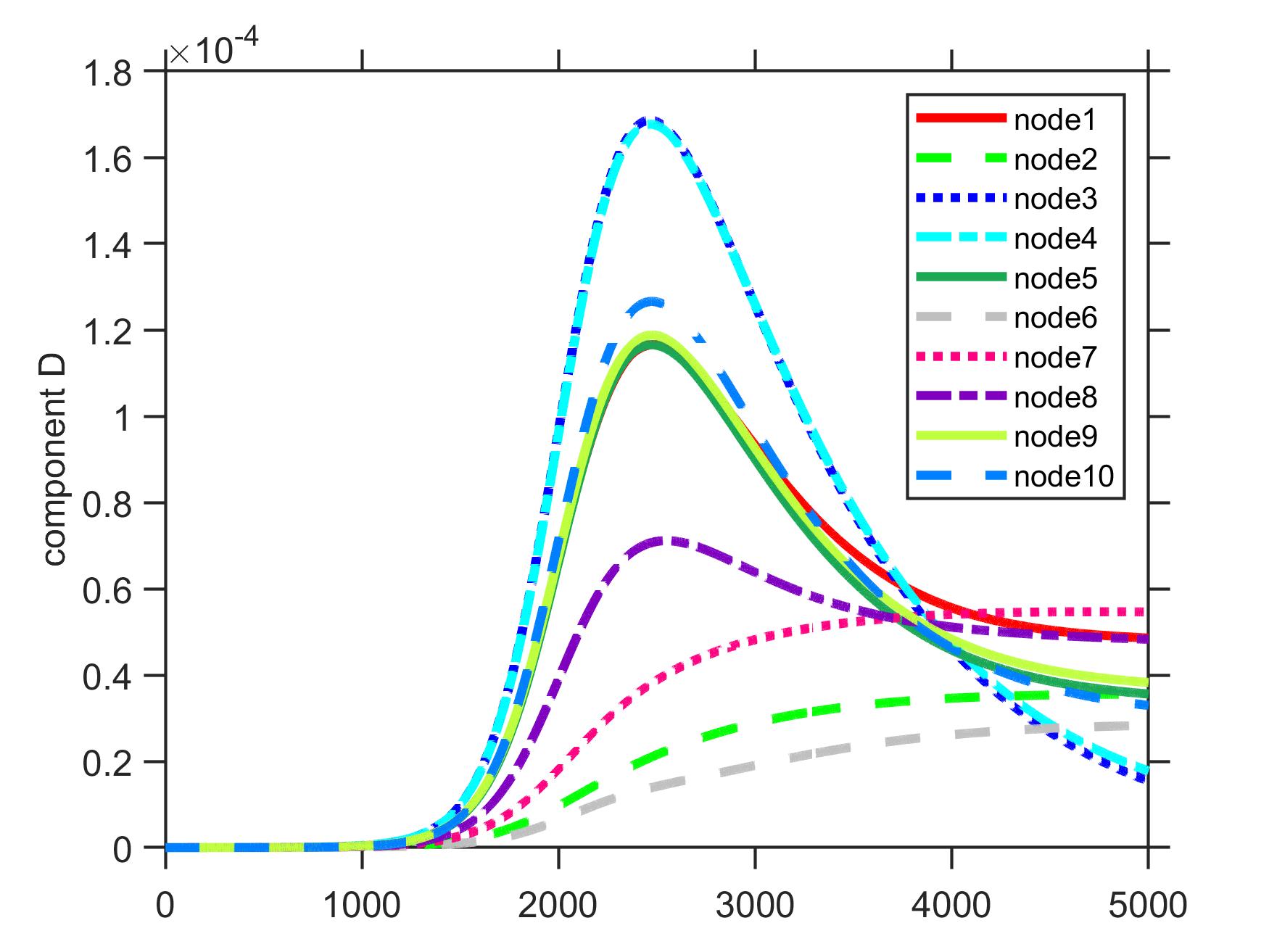}
     }
     \caption{\footnotesize 
Subfigure (a) shows
$\parw{k}_t  \big( 1 + \eta ( 3 \| w_* \|^2 - 3\| \w{k}_t \|^2  ) \big)$ 
(i.e. component A of $\tparw{k}{K}{t}$)
versus iteration $t$ for each neuron $k$.
Subfigure (b) plots $ 2 \eta \sum_{j \neq k }^K  ( (\w{j}_t)^\top \w{k}_t  )  \parw{j}_t
+ \eta \parw{k}_t \sum_{j \neq k}^K \| \w{j}_t \|^2$ 
(i.e. component B of $\tparw{k}{K}{t}$)
versus iteration $t$ for each neuron $k$. 
Subfigure (c) plots the norm of component C of $\tpenw{k}{K}{t}$, while subfigure (d) plots the norm of component D of $\tpenw{k}{K}{t}$ for each neuron $k$.
The empirical findings show that the components due to interaction of the other neurons (i.e. component B and D) are small (notice that the scale of the vertical axis
of (a) and (b), (c) and (d) are different) compared to their counterparts (i.e. component A and C respectively), which suggests that $\theta, \vartheta \approxeq 10^{-4}$ on (\ref{eq:multi-dyn2}) empirically. 
Similar patterns exhibit in training under different $K$'s.
      }
     \label{fig:component}
\end{figure}

\begin{figure}[h]
  \centering
    \includegraphics[width=0.4\textwidth]{./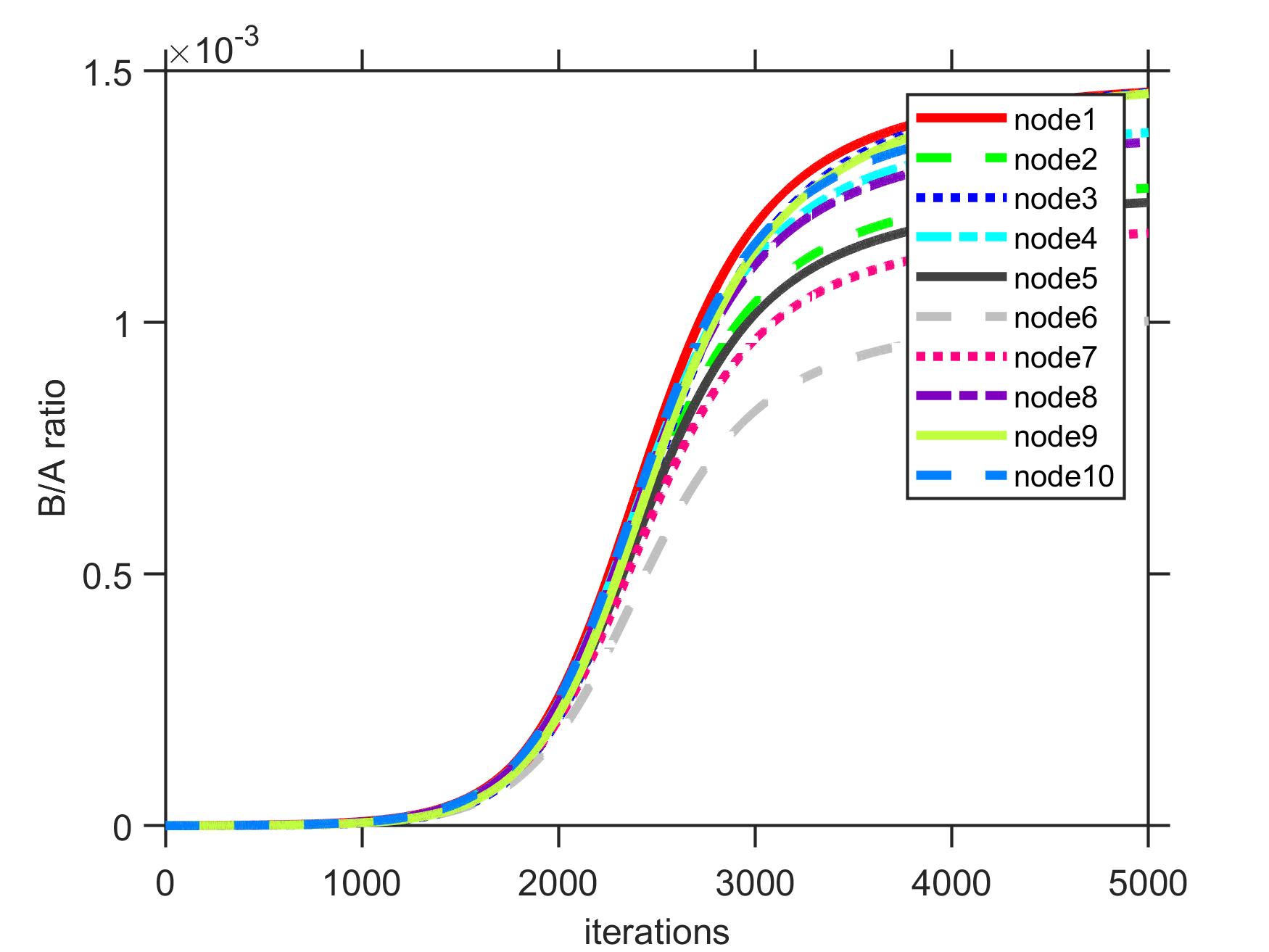}
   \includegraphics[width=0.4\textwidth]{./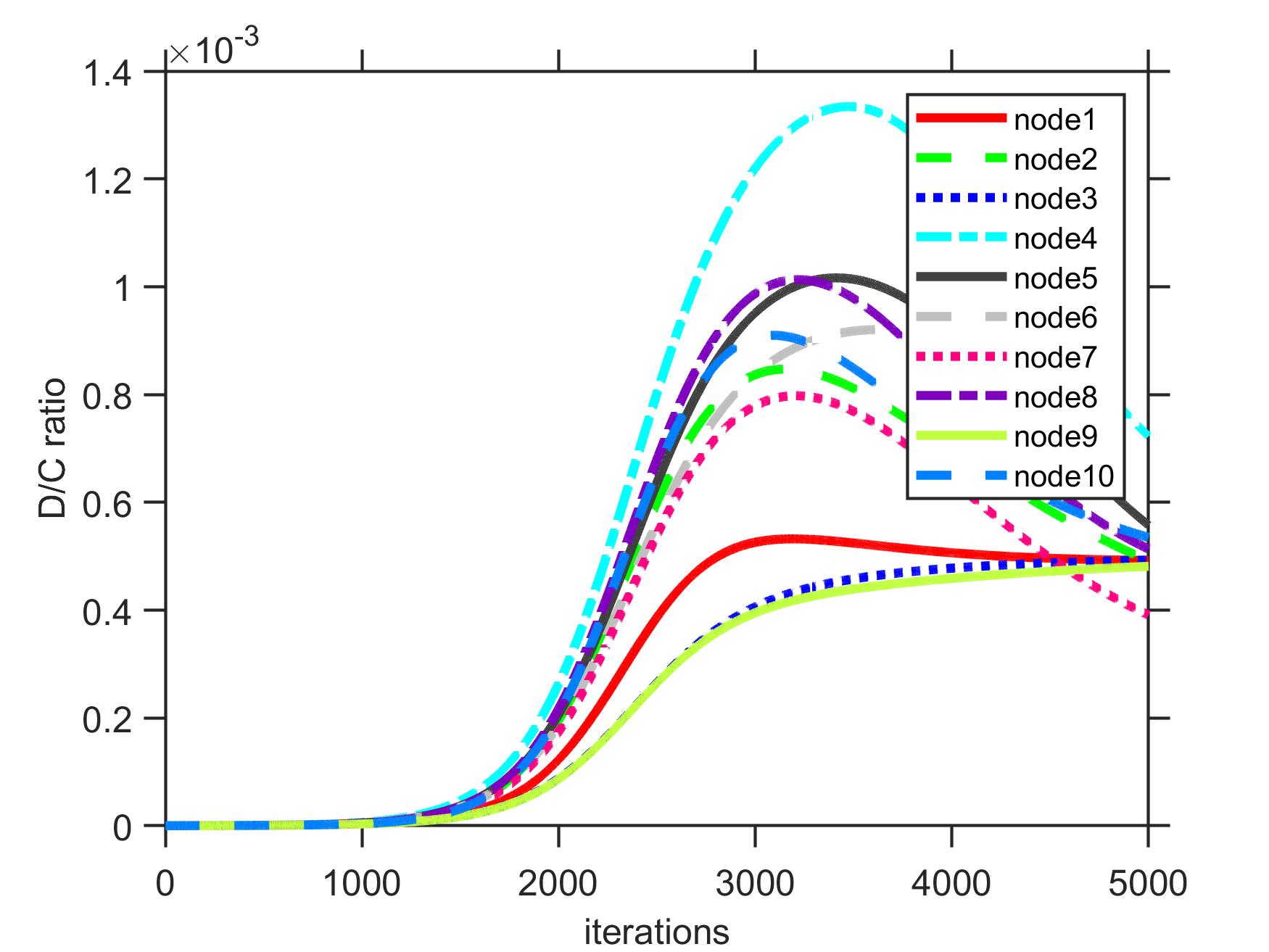}
    \caption{\footnotesize The ratio of (the magnitude of) component B to component A and the ratio of component D to component C.
    The plot shows that the components due to the interaction (component B and component D) are negligible.
    } \label{fig:componentRatio}   
\vspace{-2.5mm}
\end{figure}

That is, in the early stage, the approximated dynamics (\ref{eq:multi-dyn2}) of each student neuron of an over-parametrized network does not deviate too much from the original dynamics without over-parametrization (\ref{dyn:single}). 
The approximated dynamics (\ref{eq:multi-dyn2}) will be only
used for analyzing the stage before the iterate enters the benign region, i.e. used only before entering a linear convergence regime,
 though empirically the approximated dynamics hold all the time during the execution of the algorithm (Figure~\ref{fig:component}).

On the other hand, 
by comparing subfigure (a) and subfigure (b) of Figure~\ref{fig:component}, 
we see that component A and B of each neuron $k$ are with the same sign during the execution.
This observation together with the dynamics (\ref{eq:multi-dyn}) tend to
implies that
$\textstyle  | \tparw{k}{K}{t+1} |
\textstyle  \leq  | \tparw{k}{K}{t} |  \big( 1 + \eta ( 3 \| w_* \|^2 - 3\| \tw{k}{K}{t} \|^2  ) \big) 
$ when $K>1$. On the other hand, the dynamics (\ref{dyn:single}) has
$| \tparw{1}{1}{t+1} | = | \tparw{1}{1}{t} |  \big( 1 + \eta ( 3 \| w_* \|^2 - 3\| \tw{1}{1}{t} \|^2  ) \big)$. Also, 
as the case of single neuron,
the perpendicular component of each neuron $k$ of $K$ (i.e. $\| \tpenw{k}{K}{t} \|$) remains small (Figures~\ref{fig:perpen-app}).
Consequently, we could write
\begin{equation} \label{multi-par}
\textstyle |\tparw{1}{1}{t}| \gtrsim |\tparw{k}{K}{t}| \text{ and }
\textstyle \| \tw{1}{1}{t} \| \gtrsim  \| \tw{k}{K}{t} \|, 
\end{equation}
where the approximation $\gtrsim$ accounts for the fact that the size of initial points due to the random initialization may be different
and that the small interaction components are present in the dynamics for $K>1$. 
Figure~\ref{fig:breakdown} confirms that the relation (\ref{multi-par}) generally holds on average empirically. 

\begin{figure}[h]
\centering
     \subfigure[\footnotesize $K=1$. \label{subfig-1:dummy}]{%
       \includegraphics[width=0.3\textwidth]{./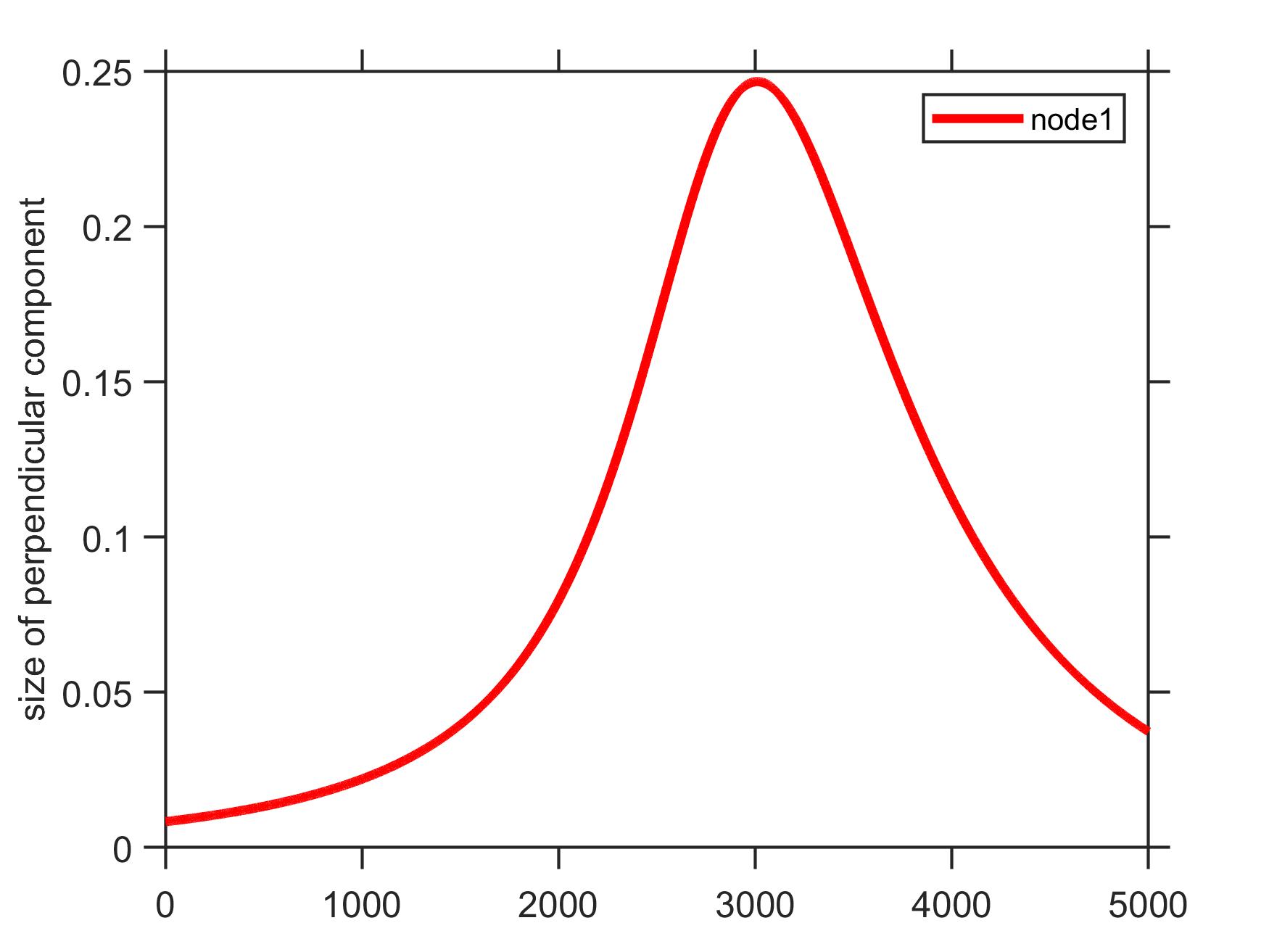}
     } 
     \subfigure[\footnotesize  $K=3$.\label{subfig-2:dummy}]{%
       \includegraphics[width=0.3\textwidth]{./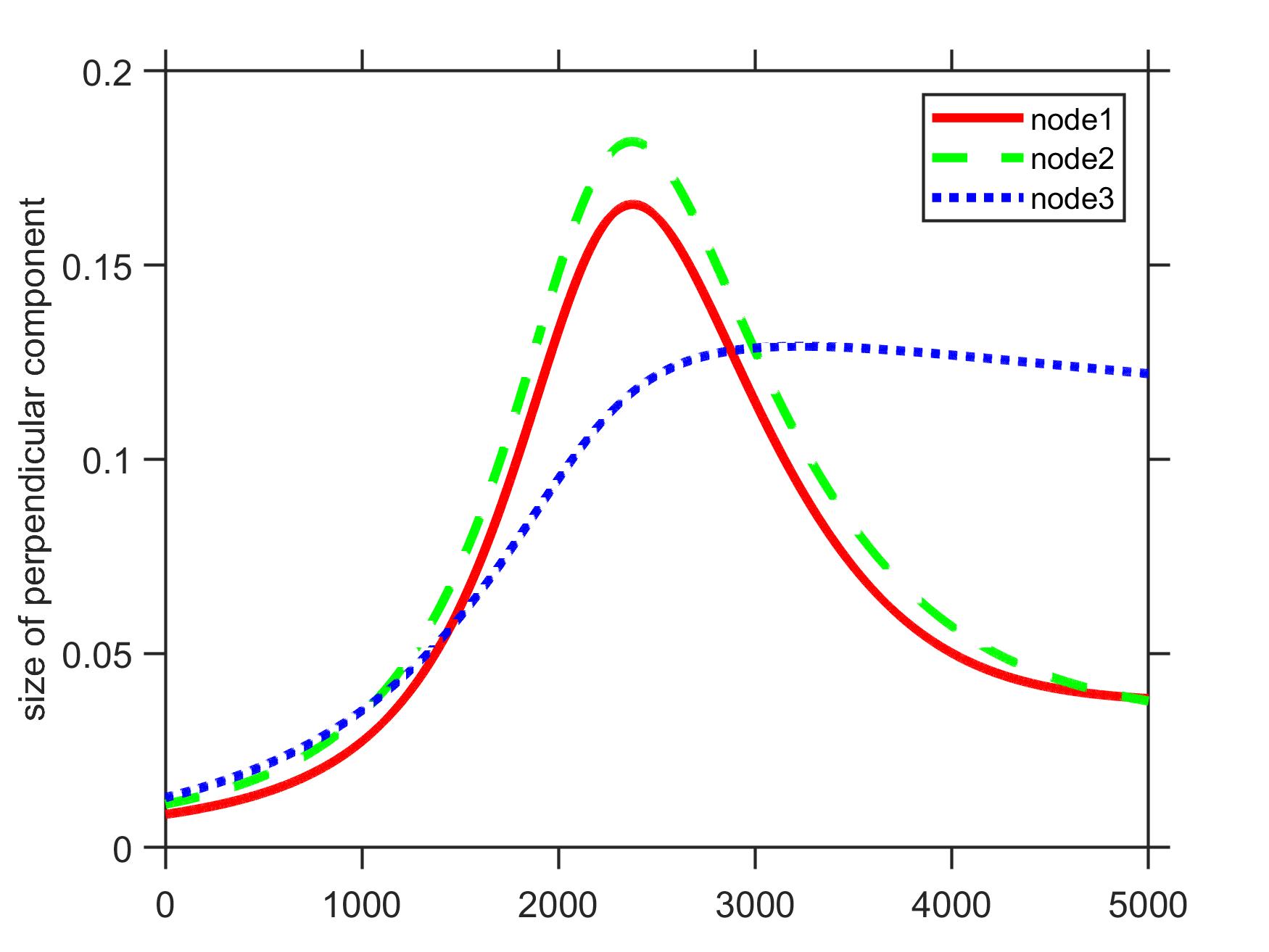}
     }
     \subfigure[\footnotesize $K=10$. \label{subfig-1:dummy}]{%
       \includegraphics[width=0.3\textwidth]{./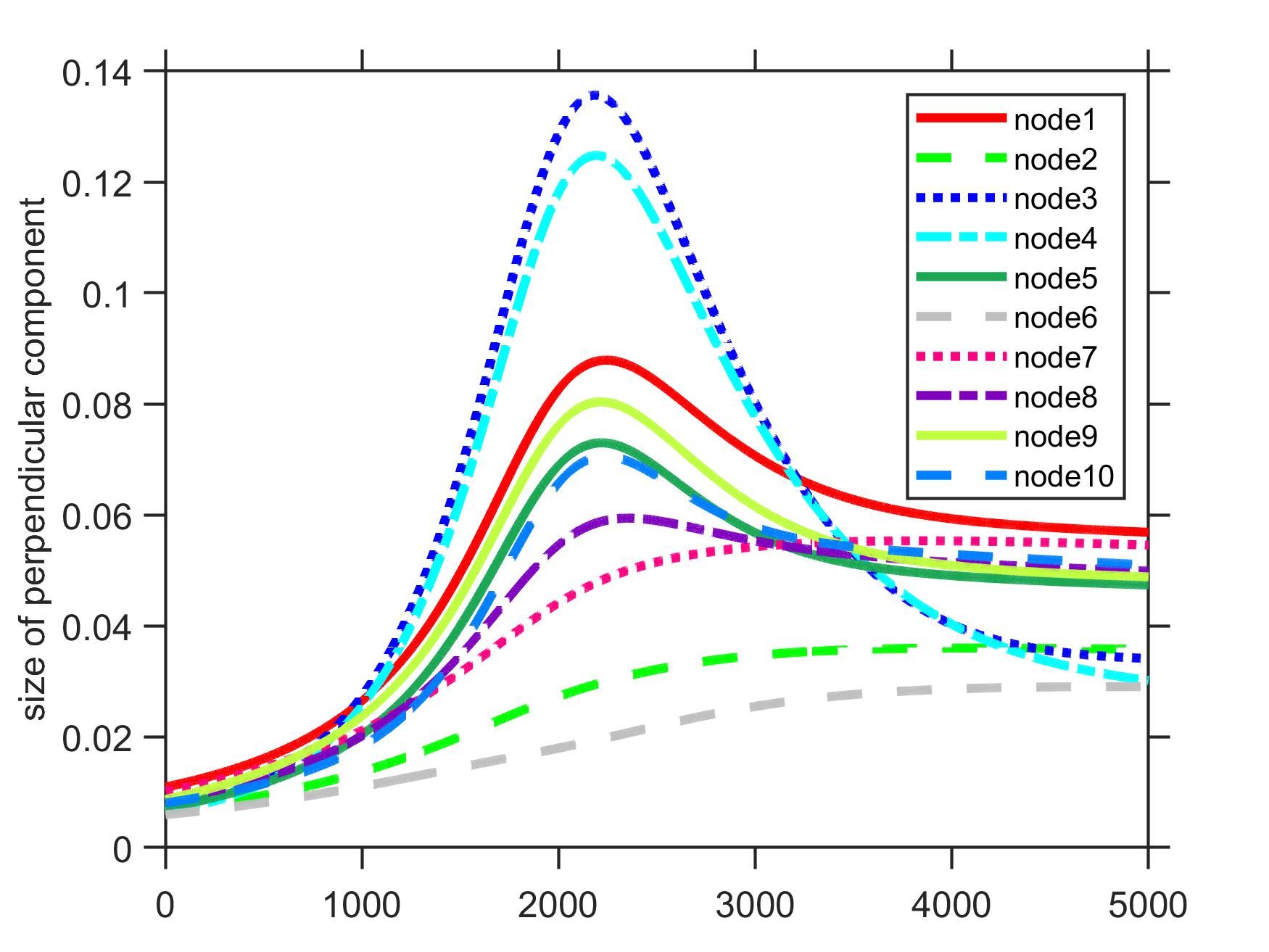}
     }
     \caption{ \footnotesize The perpendicular component 
$ \| \tpenw{k}{K}{t} \|$ of each $k$ over iterations $t$. We see that the perpendicular component remains small, compared to the signal component.
      }
     \label{fig:perpen-app}
\end{figure}

\begin{lemma} \label{lem:syn_up}
Suppose that the approximated dynamics (\ref{eq:multi-dyn2}) and (\ref{multi-par}) hold from iteration $0$ to iteration $t$.
Then, the network with a single neuron and an over-parametrized network trained by GD with the same step size $\eta$
has
$\sqrt{\sum_{k=1}^K | \tparw{k}{K}{t} |^2} \gtrsim \sqrt{ ( 1 - 2 t \theta) } \sqrt{K} | \tparw{1}{1}{t} |$.
\end{lemma}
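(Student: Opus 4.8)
The plan is to unroll the approximate per‑neuron dynamics (\ref{eq:multi-dyn2}), compare each factor of the resulting product with the corresponding factor of the single‑neuron recursion (\ref{dyn:single}) via the norm domination (\ref{multi-par}), and then aggregate over the $K$ neurons using that the columns are initialized i.i.d. First I would iterate the lower bound in (\ref{eq:multi-dyn2}) from $0$ to $t$ to get
\[
| \tparw{k}{K}{t} | \;\geq\; (1-\theta)^t \, | \tparw{k}{K}{0} | \prod_{s=0}^{t-1}\bigl( 1 + \eta ( 3\|w_*\|^2 - 3 \|\tw{k}{K}{s}\|^2 ) \bigr),
\]
which is legitimate in the early stage: there the neuron norms are small, so every factor $1 + \eta(3\|w_*\|^2 - 3\|\tw{k}{K}{s}\|^2)$ is positive, and products of inequalities between positive quantities preserve direction. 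Next, by (\ref{multi-par}) we have $\|\tw{k}{K}{s}\| \lesssim \|\tw{1}{1}{s}\|$ for every $s \le t-1$, hence $3\|w_*\|^2 - 3\|\tw{k}{K}{s}\|^2 \gtrsim 3\|w_*\|^2 - 3\|\tw{1}{1}{s}\|^2$, so each factor above is $\gtrsim$ the corresponding factor in the unrolled single‑neuron recursion $|\tparw{1}{1}{t}| = |\tparw{1}{1}{0}| \prod_{s=0}^{t-1}\bigl(1+\eta(3\|w_*\|^2 - 3\|\tw{1}{1}{s}\|^2)\bigr)$ coming from (\ref{dyn:single}). Combining the two displays gives, for each $k$,
\[
| \tparw{k}{K}{t} | \;\gtrsim\; (1-\theta)^t \, \frac{| \tparw{k}{K}{0} |}{| \tparw{1}{1}{0} |}\, | \tparw{1}{1}{t} |.
\]

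Then I would square, sum over $k \in [K]$, and invoke that each $\tw{k}{K}{0}$ and $\tw{1}{1}{0}$ are drawn i.i.d. from the same isotropic distribution, so $\sum_{k=1}^K |\tparw{k}{K}{0}|^2 \approx K\, |\tparw{1}{1}{0}|^2$ up to the random‑initialization fluctuations that the symbol $\gtrsim$ in the statement absorbs. This yields $\sum_{k=1}^K |\tparw{k}{K}{t}|^2 \gtrsim (1-\theta)^{2t} K\, |\tparw{1}{1}{t}|^2$. Finally, taking square roots and cleaning up the prefactor by Bernoulli's inequality, $(1-\theta)^t \geq 1 - t\theta \geq \sqrt{1 - 2t\theta}$ (the last step since $(1-t\theta)^2 = 1 - 2t\theta + t^2\theta^2 \geq 1 - 2t\theta$), gives the claimed bound $\sqrt{\sum_{k=1}^K |\tparw{k}{K}{t}|^2} \gtrsim \sqrt{1 - 2t\theta}\,\sqrt{K}\,|\tparw{1}{1}{t}|$.

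The main obstacle is essentially bookkeeping of the chain of $\gtrsim$ relations so that none is used in the wrong direction: one must keep the growth factors positive over all of $\{0,\dots,t-1\}$ (which is exactly why the argument is confined to the pre‑linear‑convergence stage, where neuron norms are controlled), and one must use that (\ref{multi-par}) is postulated to hold uniformly in $s$, not merely at the endpoint, so that it can be applied factor‑by‑factor inside the product. The only genuinely probabilistic ingredient is the passage from $\sum_{k}|\tparw{k}{K}{0}|^2$ to $K|\tparw{1}{1}{0}|^2$ via the i.i.d.\ initialization, and it is deliberately only approximate.
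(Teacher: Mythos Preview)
Your proposal is correct and follows essentially the same route as the paper: unroll (\ref{eq:multi-dyn2}) to a product, replace each factor using the norm domination (\ref{multi-par}) against the unrolled single-neuron recursion (\ref{dyn:single}), sum the squares over $k$ using the i.i.d.\ initialization to get the factor $K$, and then reduce $(1-\theta)^{2t}$ to $1-2t\theta$ via Bernoulli's inequality. The paper applies Bernoulli directly as $(1-\theta)^{2t}\ge 1-2t\theta$ rather than going through $(1-\theta)^t\ge 1-t\theta\ge \sqrt{1-2t\theta}$, but the two are equivalent.
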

Lemma~\ref{lem:syn_up} states that if the single neuron of the non-overparametrized network has a certain projection on $w_*$ at time $t$, then the over-parametrized network with $K$ neurons will have approximately 
$\sqrt{K}$ times larger projection on $w_*$, modulo the $\sqrt{1 - 2t \theta}$ factor which is close to $1$ if the product $t \theta$ is small (as Figure~\ref{fig:component} shows).
This demonstrates the advantage of over-parametrization --- over-parametrization helps to make more progress on growing the model's projection on $w_*$.

\begin{lemma} \label{lem:norm_up}
Suppose that $\eta \leq \frac{1}{3 \| w_* \|^2}$.
By following the conditions as Lemma~\ref{lem:syn_up},
we have that
$
\| \tpenw{k}{K}{t} \|
\lesssim \frac{ |\tparw{k}{K}{t}| \| \tpenw{k}{K}{0} \|  }{ | \tparw{k}{K}{0} |   } \frac{1}{\psi^t}
\lesssim 
\frac{ | \tparw{1}{1}{t} | \| \tpenw{k}{K}{0} \|  }{ | \tparw{k}{K}{0} |   } \frac{1}{\psi^t},$
where $\psi:= \big(1- \theta - \vartheta - \theta \vartheta  \big) \big( 1 + \eta \| w_* \|^2  \big)$.
\end{lemma}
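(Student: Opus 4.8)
The plan is to control, for each fixed student neuron $k$, the ratio $\| \tpenw{k}{K}{t} \| / | \tparw{k}{K}{t} |$ and to show that it shrinks by the factor $1/\psi$ at every step of the early stage. First I would iterate the two approximate recursions in (\ref{eq:multi-dyn2}) from iteration $0$ up to $t$; since $\| \tw{k}{K}{s} \| \lesssim \| w_* \|$ throughout this window (by (\ref{multi-par}) together with the single-neuron bound $\| \tw{1}{1}{s} \| \lesssim \| w_* \|$ from Theorem~\ref{lem:single}), every multiplicative factor appearing is positive, so I may divide the resulting upper bound on $\| \tpenw{k}{K}{t} \|$ by the resulting lower bound on $| \tparw{k}{K}{t} |$ to obtain
\[
\frac{ \| \tpenw{k}{K}{t} \| }{ | \tparw{k}{K}{t} | } \ \lesssim\ \frac{ \| \tpenw{k}{K}{0} \| }{ | \tparw{k}{K}{0} | } \Big( \frac{1+\vartheta}{1-\theta} \Big)^{\!t} \prod_{s=0}^{t-1} R_s, \qquad R_s := \frac{ 1 + \eta( \| w_* \|^2 - 3 \| \tw{k}{K}{s} \|^2 ) }{ 1 + \eta( 3\| w_* \|^2 - 3 \| \tw{k}{K}{s} \|^2 ) }.
\]

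Next I would bound the per-step factor $\tfrac{1+\vartheta}{1-\theta}\,R_s$ by $1/\psi$, in two pieces. For $R_s$: setting $b_s := \eta( 3\|w_*\|^2 - 3\|\tw{k}{K}{s}\|^2 ) \ge 0$, the inequality $R_s \le \tfrac{1}{1+\eta\|w_*\|^2}$ is, after clearing the positive denominators, equivalent to $b_s \le 1 + 2\eta\|w_*\|^2$, and this holds because $b_s \le 3\eta\|w_*\|^2 \le 1$ by the hypothesis $\eta \le \tfrac{1}{3\|w_*\|^2}$. For the $\theta,\vartheta$ part: expanding gives $(1+\vartheta)(1-\theta-\vartheta-\theta\vartheta) = 1-\theta-2\theta\vartheta-\vartheta^2-\theta\vartheta^2 \le 1-\theta$, hence $\tfrac{1+\vartheta}{1-\theta} \le \tfrac{1}{1-\theta-\vartheta-\theta\vartheta}$. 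Multiplying the two gives $\tfrac{1+\vartheta}{1-\theta}R_s \le \big( (1-\theta-\vartheta-\theta\vartheta)(1+\eta\|w_*\|^2) \big)^{-1} = 1/\psi$, so the product over $s$ is at most $\psi^{-t}$; rearranging yields $\| \tpenw{k}{K}{t} \| \lesssim \tfrac{| \tparw{k}{K}{t} | \, \| \tpenw{k}{K}{0} \|}{| \tparw{k}{K}{0} |}\,\psi^{-t}$, which is the first claimed estimate. The second estimate is then immediate from (\ref{multi-par}), which gives $| \tparw{k}{K}{t} | \lesssim | \tparw{1}{1}{t} |$.

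The main obstacle I anticipate is not the algebra, which is elementary, but justifying rigorously that the denominators $1 + \eta(3\|w_*\|^2 - 3\|\tw{k}{K}{s}\|^2)$ stay positive, i.e.\ that no student neuron's norm overshoots $\|w_*\|$ before the iterate enters the linear-convergence region; this is precisely the ``perpendicular stays small, norm stays bounded'' behavior observed in Figure~\ref{fig:perpen-app} and encoded in (\ref{multi-par}), and I would make it precise by coupling the per-neuron dynamics (\ref{eq:multi-dyn2}) with the single-neuron analysis of Theorem~\ref{lem:single}. A secondary bookkeeping point is that the relations (\ref{eq:multi-dyn2}) and (\ref{multi-par}) carry implicit constants, so one has to check that these compound benignly over the $t$ steps; this is fine in the regime of interest since $t\theta$ and $t\vartheta$ are small (as in Figure~\ref{fig:component}).
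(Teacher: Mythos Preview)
Your proposal is correct and follows essentially the same approach as the paper: both track the ratio $\|\tpenw{k}{K}{t}\|/|\tparw{k}{K}{t}|$ (the paper works with its reciprocal), show a per-step contraction by $1/\psi$ using the approximate dynamics (\ref{eq:multi-dyn2}) and the hypothesis $\eta\le \tfrac{1}{3\|w_*\|^2}$, iterate, and then invoke (\ref{multi-par}) for the second inequality. The only cosmetic difference is that the paper derives the per-step bound via the elementary inequality $\tfrac{1+a}{1+b}\ge 1+a-b-ab$ and then simplifies, whereas you bound $R_s$ and $\tfrac{1+\vartheta}{1-\theta}$ separately before multiplying; the two routes are equivalent.
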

Lemma~\ref{lem:norm_up} states that the ratio of the perpendicular component $\| \tpenw{k}{K}{t} \|$ to the parallel component $|\tparw{k}{K}{t}|$ of each neuron decays exponentially if $\psi > 1$ (which holds if $\eta \| w_* \|^2 \gtrsim \frac{\theta + \vartheta}{ 1 -  \theta - \vartheta})$.
By combining Lemma~\ref{lem:syn_up} and ~\ref{lem:norm_up}, we have the following theorem, which characterizes the difference of the distances to $w_*$
at iteration $t$.

\begin{theorem} \label{lem:key}
(Snapshot at $t$)
Suppose that the approximated dynamics (\ref{eq:multi-dyn2}) and (\ref{multi-par}) hold from $0$ to $t$
and that at iteration $t$, the student network with a single neuron trained by GD with the step size $\eta$ has $| \tparw{1}{1}{t} | = c_{1,t} \| w_* \|^2$ for some number $c_{1,t}$ satisfying $1 > c_{1,t} > 0$.
Denote $c_{2,t}$ a number that satisfies $\frac{ c_{1,t}  \| \tpenw{k}{K}{0} \|  }{ | \tw{k}{K}{0}[1] | } \frac{1}{\psi^t}
\leq c_{2,t}$ for each $k \in [K]$.
Suppose that the step size $\eta$ also satisfies
$\eta \leq \frac{1}{3 \| w_* \|^2}$
and makes
$\psi:= \big(1- \theta - \vartheta - \theta \vartheta  \big) \big( 1 + \eta \| w_* \|^2  \big) > 1$.
Then, an over-parametrized network $W_t^{\#K}$ trained by GD with the same $\eta$ has
\[
\textstyle
\dist^2( W_t^{\#1}, w_*) - \dist^2( W_t^{\#K}, w_*) 
\gtrsim
\| w_* \|^2 \big( 2  c_{1,t}( \sqrt{ ( 1 -2 t \theta) }  \sqrt{K} - 1 )   
- K (c_{1,t}^2 + c_{2,t}^2) + c_{1,t}^2 \big).
\]
\end{theorem}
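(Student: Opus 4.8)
The plan is to reduce everything to the two preceding lemmas by first rewriting $\dist^2$ in a form that cleanly separates the ``parallel'' and ``perpendicular'' contributions, then lower-bounding $\dist^2(W_t^{\#1}, w_*)$, upper-bounding $\dist^2(W_t^{\#K}, w_*)$, and subtracting. Using the stated optimal alignment $q_* = W^\top w_* / \| W^\top w_* \|$ and expanding $\| W - w_* q_*^\top \|_F^2$, one obtains, for any $W \in \reals^{d \times K}$ with columns $w^{(k)}$,
\[
\dist^2(W, w_*) \;=\; \Big( \frac{\| W^\top w_* \|}{\| w_* \|} - \| w_* \| \Big)^2 \;+\; \sum_{k=1}^{K} \| \penw{k} \|^2 ,
\]
because $\| W^\top w_* \| = \sqrt{\sum_{k} |\langle w^{(k)}, w_* \rangle|^2}$ is exactly the aggregate projection onto $w_*$ and $\sum_k \| \penw{k} \|^2$ is the aggregate perpendicular mass --- the two quantities that Lemmas~\ref{lem:syn_up} and~\ref{lem:norm_up} were built to control.

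For $K=1$, substituting $\| W^\top w_* \| = | \tparw{1}{1}{t} | = c_{1,t} \| w_* \|^2$ and discarding the nonnegative perpendicular term gives the lower bound $\dist^2(W_t^{\#1}, w_*) \ge \| w_* \|^2 (1 - c_{1,t})^2$. For the over-parametrized network I would first apply Lemma~\ref{lem:syn_up} to get $\| (W_t^{\#K})^\top w_* \| = \sqrt{\sum_k | \tparw{k}{K}{t} |^2} \gtrsim \sqrt{1 - 2 t\theta}\,\sqrt{K}\,| \tparw{1}{1}{t} | = \sqrt{1 - 2 t\theta}\,\sqrt{K}\, c_{1,t} \| w_* \|^2$; since, throughout the pre-linear-convergence stage on which the approximated dynamics are assumed valid, this aggregate projection has not yet reached the target value $\| w_* \|^2$, the first term of $\dist^2$ is decreasing in it, so the lower bound yields
\[
\Big( \frac{\| (W_t^{\#K})^\top w_* \|}{\| w_* \|} - \| w_* \| \Big)^2 \;\le\; \| w_* \|^2 \big( 1 - \sqrt{1 - 2 t\theta}\,\sqrt{K}\, c_{1,t} \big)^2 .
\]
For the perpendicular part I would use the second bound in Lemma~\ref{lem:norm_up} together with $| \tparw{1}{1}{t} | = c_{1,t} \| w_* \|^2$ and $| \tparw{k}{K}{0} | = \| w_* \|\, | \tw{k}{K}{0}[1] |$, which gives $\| \tpenw{k}{K}{t} \| \lesssim \| w_* \|\, c_{2,t}$ for every $k$ (precisely what the definition of $c_{2,t}$ is arranged to guarantee), hence $\sum_{k} \| \tpenw{k}{K}{t} \|^2 \lesssim K \| w_* \|^2 c_{2,t}^2$.

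Combining the three bounds, $\dist^2(W_t^{\#1}, w_*) - \dist^2(W_t^{\#K}, w_*) \gtrsim \| w_* \|^2 \big[ (1 - c_{1,t})^2 - ( 1 - \sqrt{1 - 2 t\theta}\,\sqrt{K}\, c_{1,t} )^2 - K c_{2,t}^2 \big]$. Writing $a := c_{1,t}$ and $b := \sqrt{1 - 2 t\theta}\,\sqrt{K}\, c_{1,t}$, the difference of squares is $(1-a)^2 - (1-b)^2 = 2(b - a) + a^2 - b^2 = 2 c_{1,t} ( \sqrt{1 - 2 t\theta}\,\sqrt{K} - 1 ) + c_{1,t}^2 - ( 1 - 2 t\theta) K c_{1,t}^2$, and bounding $1 - 2 t\theta \le 1$ in the last term and collecting the two $K(\cdot)^2$ terms produces exactly the claimed expression $2 c_{1,t}( \sqrt{1 - 2 t\theta}\,\sqrt{K} - 1 ) - K(c_{1,t}^2 + c_{2,t}^2) + c_{1,t}^2$.

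Everything here is elementary once the two lemmas are available; the place that genuinely needs care is the monotonicity step for the over-parametrized network --- arguing that while the iterate is still in the early stage the aggregate projection $\sqrt{\sum_k |\tparw{k}{K}{t}|^2}$ has not overshot $\| w_* \|^2$, so that a lower bound on it translates into an upper bound on the parallel term of $\dist^2$; this is exactly where the hypothesis that the approximated dynamics~(\ref{eq:multi-dyn2}) and~(\ref{multi-par}) hold on all of $[0,t]$ (rather than at $t$ alone) is used. The only other thing to keep track of is the bookkeeping of the several ``$\gtrsim$/$\lesssim$'' slacks inherited from Lemmas~\ref{lem:syn_up} and~\ref{lem:norm_up} --- which absorb both the mismatch between the random initial scales and the neglected interaction terms (components B and D of~(\ref{eq:multi-dyn})) --- so that they collapse into the single ``$\gtrsim$'' in the conclusion.
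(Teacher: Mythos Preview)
Your proof is correct and arrives at exactly the same final expression, but the route differs from the paper's in one notable way. The paper does \emph{not} complete the square; it keeps the expansion $\dist^2(W,w_*)=\|W\|_F^2-2\sqrt{\sum_k|\parw{k}|^2}+\|w_*\|^2$ and subtracts the $K=1$ and $K>1$ versions directly, obtaining
\[
\dist^2(W_t^{\#1},w_*)-\dist^2(W_t^{\#K},w_*)
=2\Big(\sqrt{\textstyle\sum_k|\tparw{k}{K}{t}|^2}-|\tparw{1}{1}{t}|\Big)-\big(\|W_t^{\#K}\|_F^2-\|W_t^{\#1}\|_F^2\big).
\]
It then lower-bounds the linear term via Lemma~\ref{lem:syn_up} and upper-bounds $\|W_t^{\#K}\|_F^2$ by bounding the parallel part through~(\ref{multi-par}), i.e.\ $|\tparw{k}{K}{t}|\lesssim|\tparw{1}{1}{t}|$, and the perpendicular part through Lemma~\ref{lem:norm_up}; finally $\|W_t^{\#1}\|_F^2\ge c_{1,t}^2\|w_*\|^2$.

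The practical difference is precisely the step you single out as delicate. Your completed-square form forces you to turn a \emph{lower} bound on the aggregate projection into an \emph{upper} bound on $(1-\tilde b)^2$, which only works if $\tilde b:=\sqrt{\sum_k|\tparw{k}{K}{t}|^2}/\|w_*\|^2$ has not overshot $1$. The paper sidesteps this entirely: because it keeps the linear and quadratic contributions separate, the lower bound from Lemma~\ref{lem:syn_up} goes straight into the linear term, and the quadratic term $\sum_k|\tparw{k}{K}{t}|^2$ is upper-bounded independently using~(\ref{multi-par}). So your decomposition is arguably cleaner conceptually (it isolates parallel and perpendicular mass), while the paper's is more robust in that it needs no non-overshoot assumption and uses~(\ref{multi-par}) once more, directly, rather than only through the lemmas.
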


Recall that in Theorem~\ref{lem:single}, we upper-bound 
$\dist^2( W_t^{\#1}, w_*)$.
Simply combining Theorem~\ref{lem:single} and ~\ref{lem:key} leads to a distance upper-bound of $\dist^2( W_t^{\#K}, w_*)$ at certain iteration $t$.
The lower bound of the difference of the distances in Theorem~\ref{lem:key} shows a strict improvement due to over-parametrization when it is positive, 
which answers why over-parametrization helps gradient descent to enter the linear convergence regime faster ---
gradient descent for an over-parametrized network shrinks the distance to $w_*$ faster in the early stage.
Note that the lower bound is a quadratic function of $\sqrt{K}$ and is increasing for 
$1 \leq \sqrt{K} \leq \frac{c_{1,t} \sqrt{ 1 -2 t \theta } }{ c_{1,t}^2 + c_{2,t}^2}$, which means that up to a certain threshold of $K$, more over-parametrization could lead to more improvements. Moreover, if $c_{1,t}$ and $c_{2,t}$ further satisfy 
$\textstyle (*) \text{ }
2 c_{1,t} \big( \sqrt{2} \sqrt{ (1-2t\theta)} -1 \big) - c_{1,t}^2 - 2 c_{2,t}^2 > 0,$
then the lower bound of the difference for $K=2$ neurons is strictly positive and keeps being positive up to $\sqrt{K} \leq \lfloor \frac{ c_{1,t} \sqrt{1-2 t \theta} + \sqrt{ c_{1,t}^2 (1 - 2 t \theta) - (c_{1,t}^2 + c_{2,t}^2) (2 c_{1,t} - c_{1,t}^2) } }{ c_{1,t}^2 + c_{2,t}^2} \rfloor$, which gives an upper limit of the degree of over-parametrization that allows acceleration.
The condition (*) is easily satisfied when (1) $t \theta \ll 1$ so that $\sqrt{(1-2t\theta)} \approxeq 1$ and (2) $c_{2,t} \ll 1$, which happens when the approximated dynamics (\ref{eq:multi-dyn}) holds for a small $\theta$ and that the ratio of the perpendicular component to the parallel component of neuron $k$ decays sufficiently fast (Lemma~\ref{lem:norm_up}). 


\begin{proof}[of Theorem~\ref{lem:key}]

Let us compute
$\dist( W_t^{\#1}, w_*)$ and $\dist( W_t^{\#k}, w_*)$.
For $\dist( W_t^{\#1}, w_*)^2$, we have that
\begin{equation} \label{eq:t0}
\begin{split}
\dist( W_t^{\#1}, w_*)^2 & = \min_{q \in \{-1, +1\} }  \|  W_t^{\#1} - w_*q^\top \|^2
= \| \tw{1}{1}{t} \|^2 - 2 | \tparw{1}{1}{t} | + \| w_* \|^2
\\ & = \| \tw{1}{1}{t} \|^2 + \| w_* \|^2 - 2 \sqrt{\| \tparw{1}{1}{t} \|^2} .
\end{split}
\end{equation}
On the other hand, for $\dist^2( W_t^{\#K}, w_*)$, we have that
\begin{equation} \label{eq:t1}
\begin{split}
\dist^2( W_t^{\#K}, w_*) & = \min_{q \in \reals^K: \| q\|_2 \leq 1}  \|  W_t^{\#K} - w_*q^\top \|^2_F
\\ & 
= \min_{q \in \reals^K: \| q\|_2 \leq 1}  
\tr\big(  (W_t^{\#K} - w_* q^\top)^\top (W_t^{\#K} - w_* q^\top)  \big)
\\ & = \| W_t^{\#K} \|^2_F + \| w_* \|^2 - 2 \max_{q \in \reals^K: \| q\|_2 \leq 1} 
\tr\big(  (W_t^{\#K})^\top w_* q^\top \big)
\\ & 
= \| W_t^{\#K} \|^2_F + \| w_* \|^2 - 2 \sqrt{\sum_{k=1}^K | \tparw{k}{K}{t} |^2},
\end{split}
\end{equation}
where the last inequality is due to that
\begin{equation} \label{eq:t2}
\begin{split}
& \max_{q \in \reals^K: \| q\|_2 \leq 1} \tr\big(  (W_t^{\#K})^\top w_* q^\top \big) 
\overset{(a)}{=}  \max_{q \in \reals^K: \| q\|_2 \leq 1} \tr\big( 
\bar{v}
q^\top \big)
 =  \max_{q \in \reals^K: \| q\|_2 \leq 1} \tr\big( 
q^\top \bar{v}
 \big)
\overset{(b)}{=} \| \bar{v} \|. 
\end{split}
\end{equation}
where $(a)$ we denote $\bar{v} := [ \tparw{1}{K}{t}, \tparw{2}{K}{t}, \dots, \tparw{K}{K}{t} ]^\top \in \reals^K$ and (b) is because $q = \bar{v} / \| \bar{v} \|$. 
Combining (\ref{eq:t0}) and (\ref{eq:t1}), we have that
\begin{equation} \label{eq:?}
\begin{split}
& \dist^2( W_t^{\#1}, w_*) - \dist^2( W_t^{\#K}, w_*) 
\\ & =
2 ( \sqrt{\sum_{k=1}^K | \tparw{k}{K}{t} |^2} - \sqrt{ \| \tparw{1}{1}{t} \|^2 } ) - 
\big(  \| W_t^{\#K} \|^2_F - \| W_t^{\#1} \|^2_F  ).
\end{split}
\end{equation}
To continue, we need the lower bound of $\sqrt{\sum_{k=1}^K | \tparw{k}{K}{t} |^2} - \sqrt{ \| \tparw{1}{1}{t} \|^2 }$
and the upper bound of $\| W_t^{\#K} \|^2_F - \| W_t^{\#1} \|^2_F$.
By Lemma~\ref{lem:syn_up}, we have that
\begin{equation} \label{eq:lo}
\sqrt{\sum_{k=1}^K | \tparw{k}{K}{t} |^2} - \sqrt{ \| \tparw{1}{1}{t} \|^2 } \gtrsim (\sqrt{ ( 1 -2 t \theta) } \sqrt{K} - 1) \| \tparw{1}{1}{t} \|
= (\sqrt{ ( 1 -2 t \theta) } \sqrt{K} - 1) c_{1,t} \| w_* \|^2.
\end{equation}
On the other hand, for the difference of the norms,
we have that $\| W_t^{\#1} \|^2 \geq | \tw{1}{1}{t}(1) |^2 =  \frac{ | \tparw{1}{1}{t} |^2 }{ \| w_* \|^2  } $.
Furthermore,
by Lemma~\ref{lem:norm_up} and that $| \tparw{1}{1}{t} | = c_{1,t} \| w_* \|^2$ and the definition of $c_{2,t}$,
$ \| \tpenw{k}{K}{t} \|_2 \lesssim
\frac{ |\tparw{1}{1}{t}| \| \tpenw{k}{K}{0} \|  }{ | \tparw{k}{K}{0} |   } \frac{1}{\psi^t} = \frac{ |\tparw{1}{1}{t}| \| \tpenw{k}{K}{0} \|  }{ | \tw{k}{K}{0}[1] | \| w_* \|   } \frac{1}{\psi^t} \leq c_{2,t} \| w_* \| $.
So we have that
\begin{equation} \label{eq:up}
\begin{split}
& \| W_t^{\#K} \|^2_F  = \sum_{k=1}^K \| \tw{k}{K}{t} \|^2 
= \sum_{k=1}^K \big( \| \tpenw{k}{K}{t} \|^2 + \frac{1}{\| w_* \|^2} | \tparw{k}{K}{t} |^2 \big)
\\ & 
\lesssim
\sum_{k=1}^K \big( \big( \frac{ |\tparw{1}{1}{t}| \| \tpenw{k}{K}{0} \|  }{ | \tw{k}{K}{0}[1] | \| w_* \|   } \frac{1}{\psi^t}  \big)^2  + \frac{1}{\| w_* \|^2} | \tparw{k}{K}{t} |^2 \big)
\leq
K \| w_* \|^2 c_{2,t}^2 
+  \sum_{k=1}^K \frac{| \tparw{k}{K}{t} |^2}{ \| w_* \|^2 }  
\\ & \lesssim K \| w_* \|^2 ( c_{2,t}^2 + c_{1,t}^2 ),
\end{split}
\end{equation}
where the last inequality uses (\ref{multi-par}).
Therefore, by combining (\ref{eq:?},\ref{eq:lo},\ref{eq:up}),
\begin{equation} \label{dist:diff}
\begin{split}
& \dist^2( W_t^{\#1}, w_*) - \dist^2( W_t^{\#K}, w_*) 
\gtrsim 
\| w_* \|^2 \big( 2  c_{1,t}( \sqrt{ ( 1 -2 t \theta) }  \sqrt{K} - 1 )   
- K (c_{1,t}^2 + c_{2,t}^2) + c_{1,t}^2 \big).
\end{split}
\end{equation}

\end{proof}

\subsection{Is over-parametrization equivalent to using a larger step size?} \label{app:diff}



Following the simulation as Figure~\ref{fig:trts}, 
we tried different values of step sizes 
\[
\eta = \{1.0, 0.5, 0.4, 0.3, 0.2, 0.1, 0.05, 0.01, 0.005, 0.001\}
\]
for each model with different number of neurons $K= \{1,3,10\}$.
We report 
the quantity (\ref{eq:vHv}) over iteration $t$,
\[ 
\textstyle
\vec( w_* q_t^\top - W^{\#K}_t)^\top \nabla^2 f(W^{\#K}_t) \vec(w_* q_t^\top - W^{\#K}_t),
\]
 where $\nabla^2 f(W^{\#K}_t) \in \reals^{dK\times dK }$ is the Hessian and $w_* q_t^\top$ is the closet global optimal solution to $W^{\#K}_t$ and
the notation $\vec(\cdot)$ represents the vectorization operation of its matrix argument. 
Recall that the quantity can be viewed as a measure of the strong convexity as mentioned in the main text. 
Specifically, if the quantity is larger than $0$, then it suggests that the current optimization landscape is strongly convex with respect to 
$w_* q_t^\top$.

We found out that gradient descent with step size $\eta = \{1.0, 0.5, 0.4, 0.3\}$ either diverges or cannot converge towards zero testing error for all $K=\{1,3,10\}$, which means that $\eta = 0.2$ is basically the best step size of gradient descent for each model $K$. So the result suggests that even under optimal tuning of the step size $\eta$ for each model $K$, gradient descent for an over-parametrized model converges faster than the case for a smaller model. We show some results of using different $\eta$ on
Figure~\ref{fig:optimal-tuning}, Figure~\ref{fig:cannot-converge}, and Figure~\ref{fig:etascale}. Based on the empirical results, we conclude that the acceleration due to over-parametrization cannot be simply explained by that gradient descent uses a larger \emph{effective} step size, as the impacts due to parameters $\eta$ and $K$ seem to be complementary in the experiment.

\begin{figure}[h]
\centering
     \subfigure[\footnotesize ($\eta=0.2$) quantity (\ref{eq:vHv}) over $t$. \label{subfig-1:dummy}]{%
       \includegraphics[width=0.3\textwidth]{./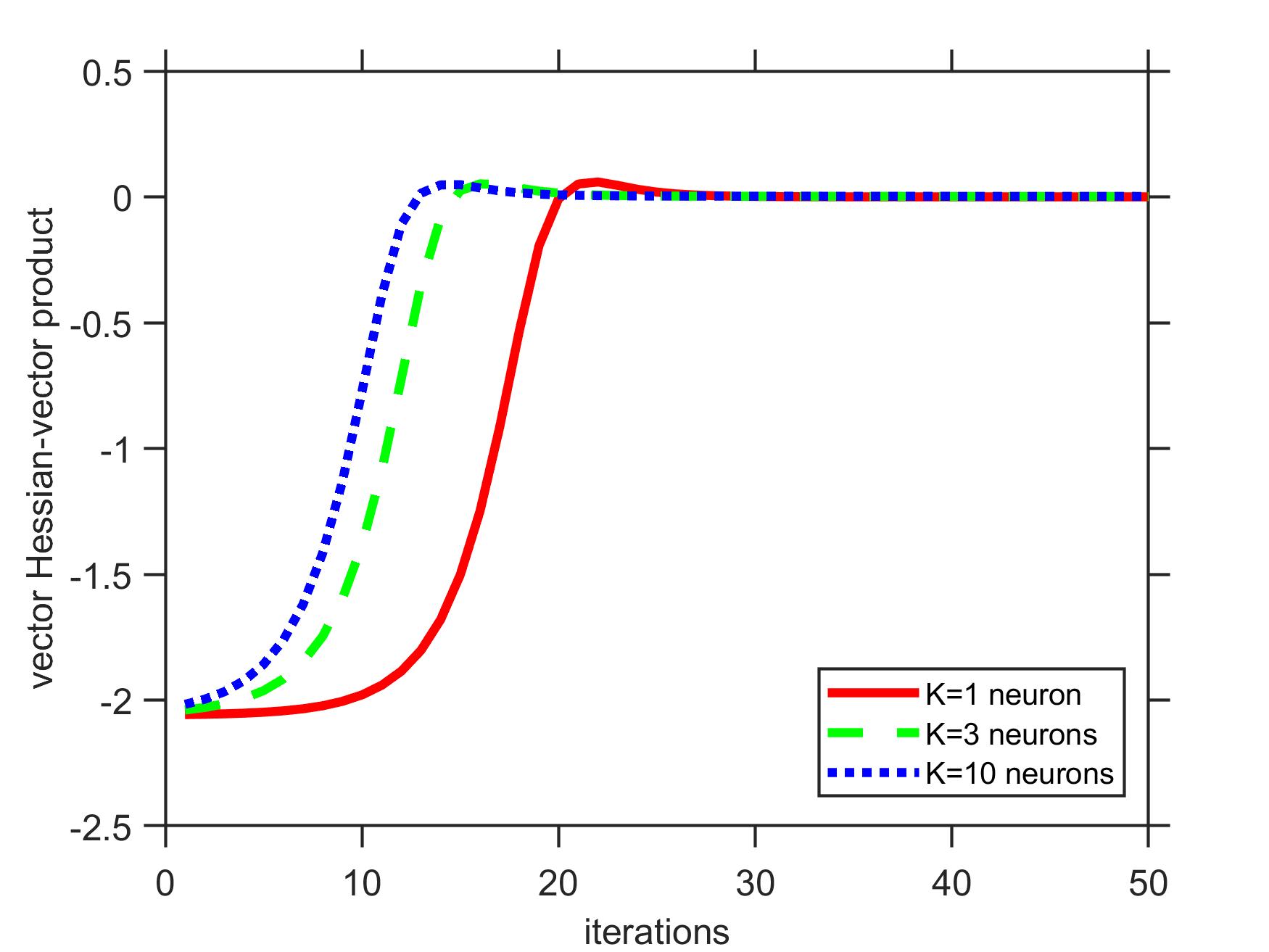}
     }  
     \subfigure[\footnotesize  ($\eta=0.05$) quantity (\ref{eq:vHv})  over $t$. \label{subfig-1:dummy}]{%
       \includegraphics[width=0.3\textwidth]{./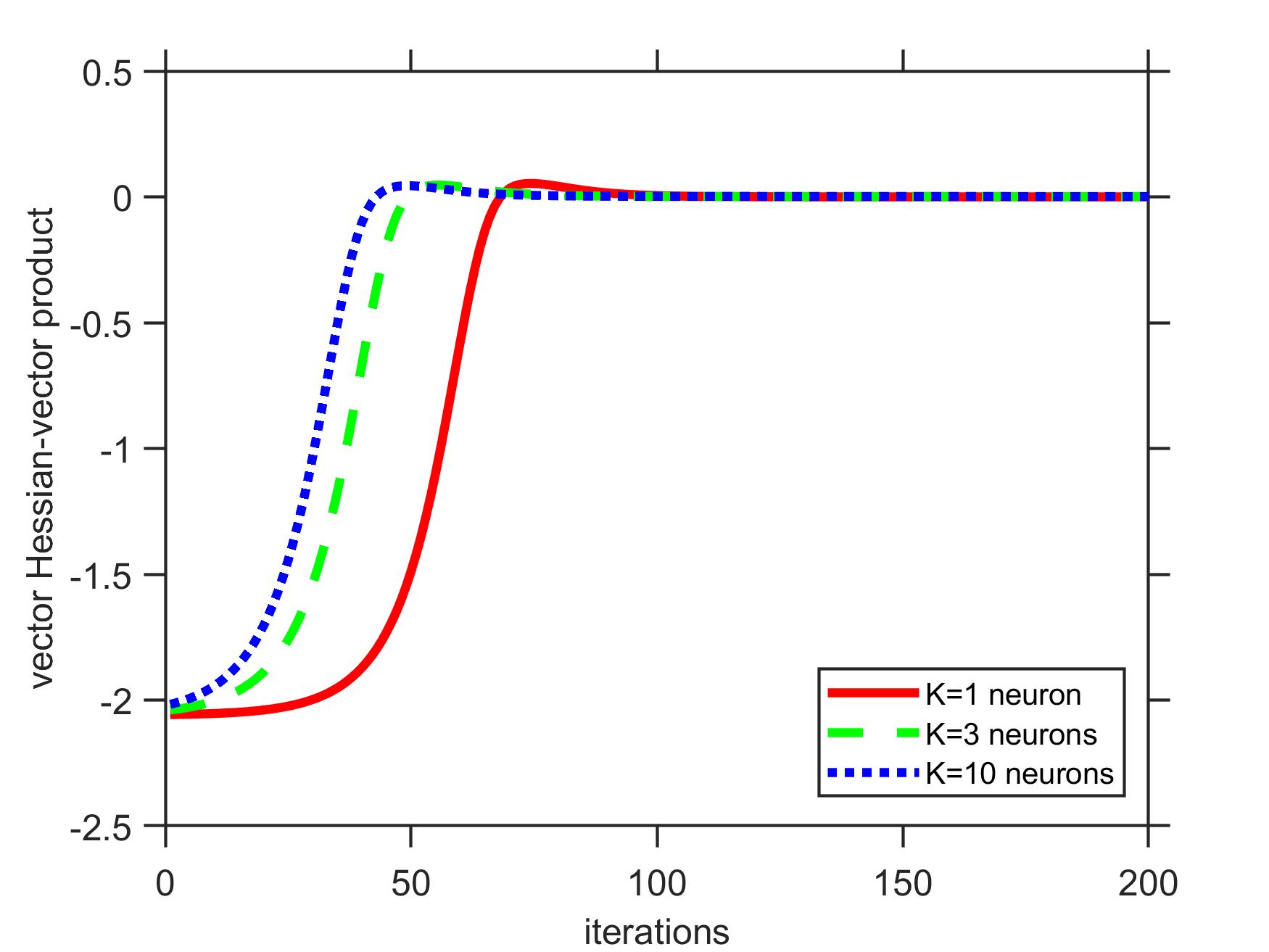}
      } 
     \subfigure[\footnotesize  ($\eta=0.01$) quantity (\ref{eq:vHv})  over $t$. \label{subfig-1:dummy}]{%
       \includegraphics[width=0.3\textwidth]{./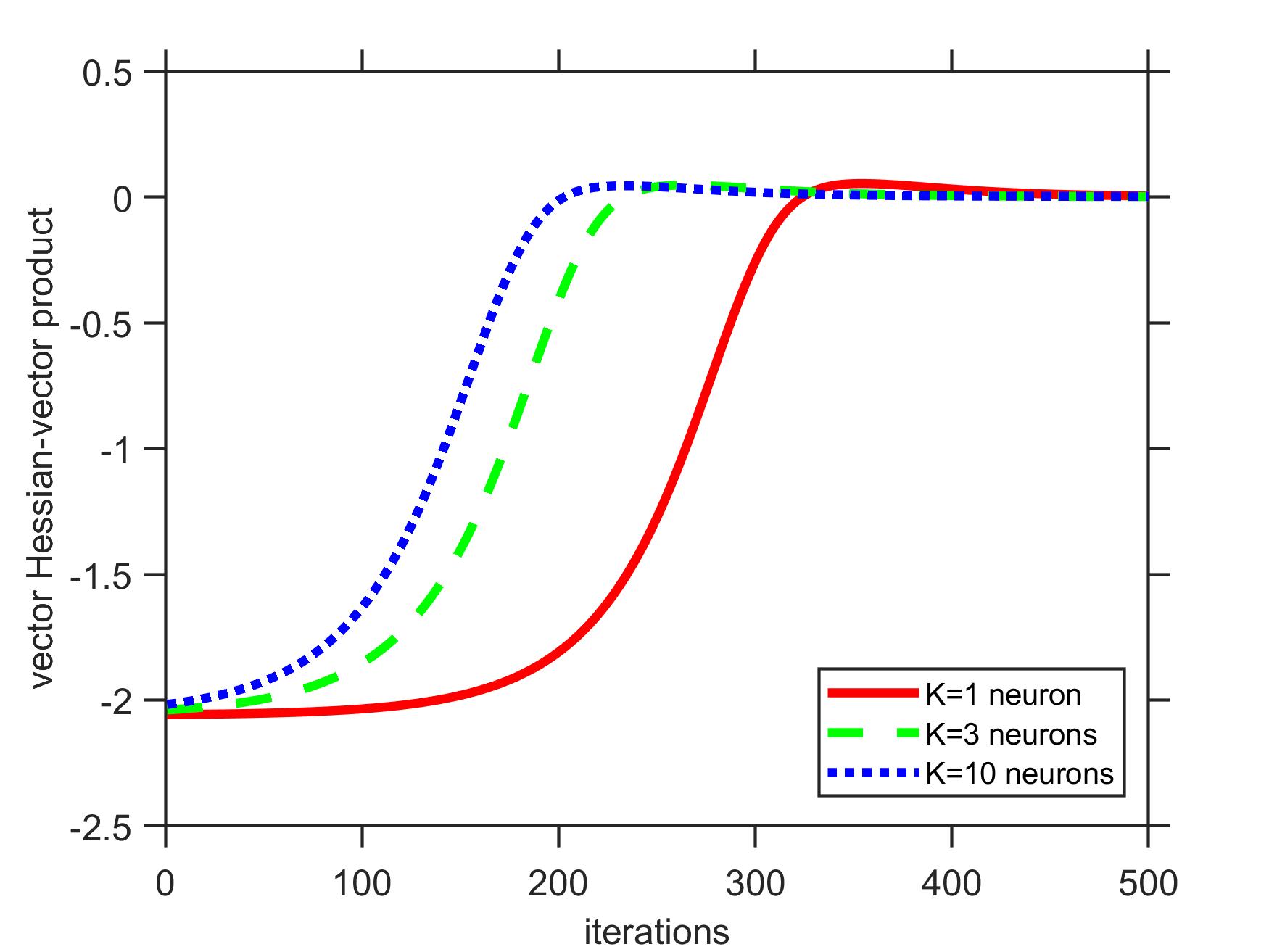}
      }
     \caption{  \footnotesize
     Gradient descent with different values of the step size.
     Note that the scales of the horizontal axes are different.  
     Both the step size $\eta$ and the degree of over-parametrization affect the time that gradient descent enters the linear convergence regime. A larger step size $\eta$ and a larger number of neurons $K$ help gradient descent to make progress faster.
      }
     \label{fig:optimal-tuning}
\end{figure}


\begin{figure}[h]
\centering
     \subfigure[\footnotesize  ($\eta=0.5$) training error over $t$. \label{subfig-1:dummy}]{%
       \includegraphics[width=0.4\textwidth]{./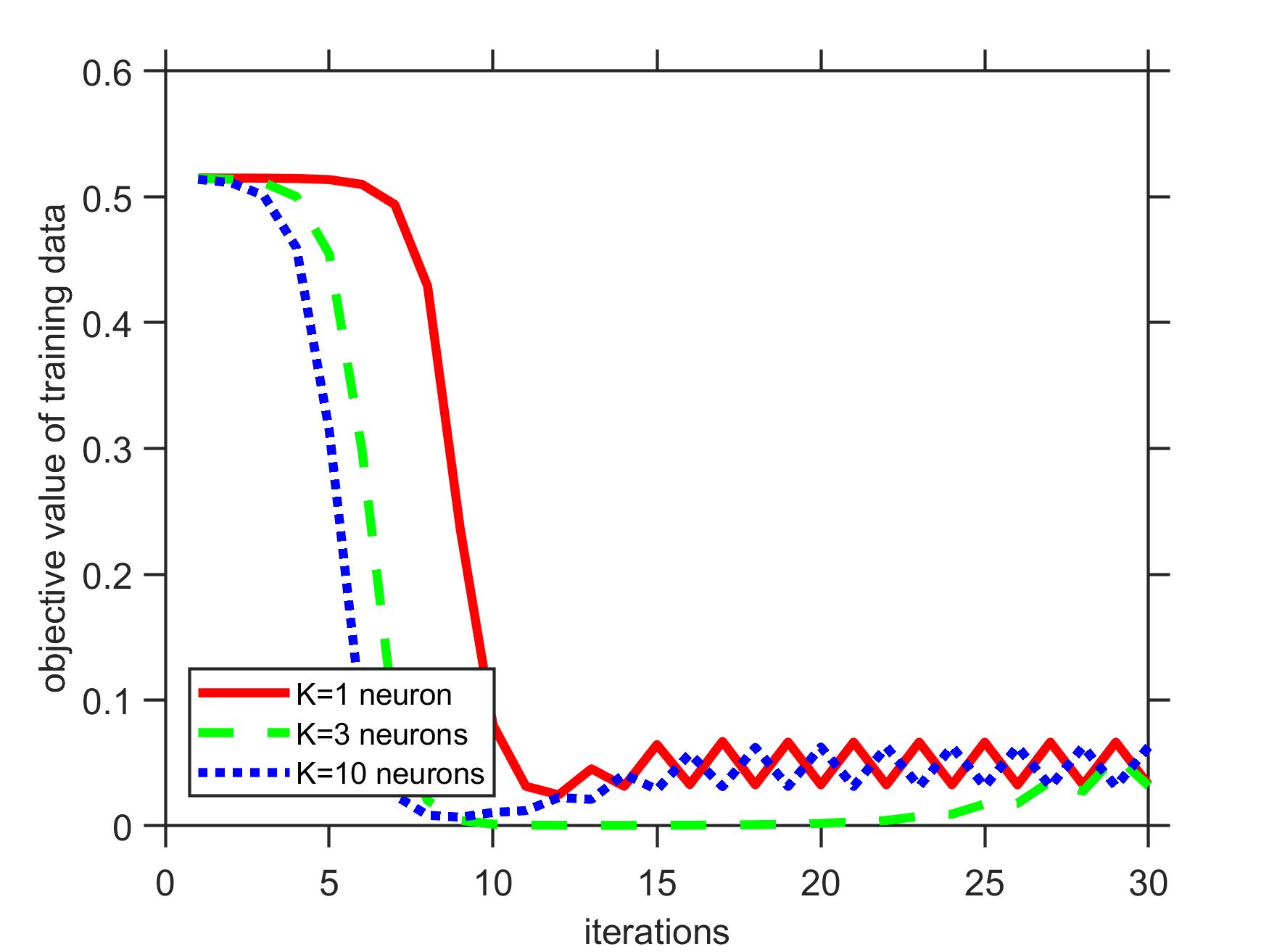}
      }  
     \subfigure[\footnotesize  ($\eta=0.5$) quantity (\ref{eq:vHv})  over $t$. \label{subfig-1:dummy}]{%
       \includegraphics[width=0.4\textwidth]{./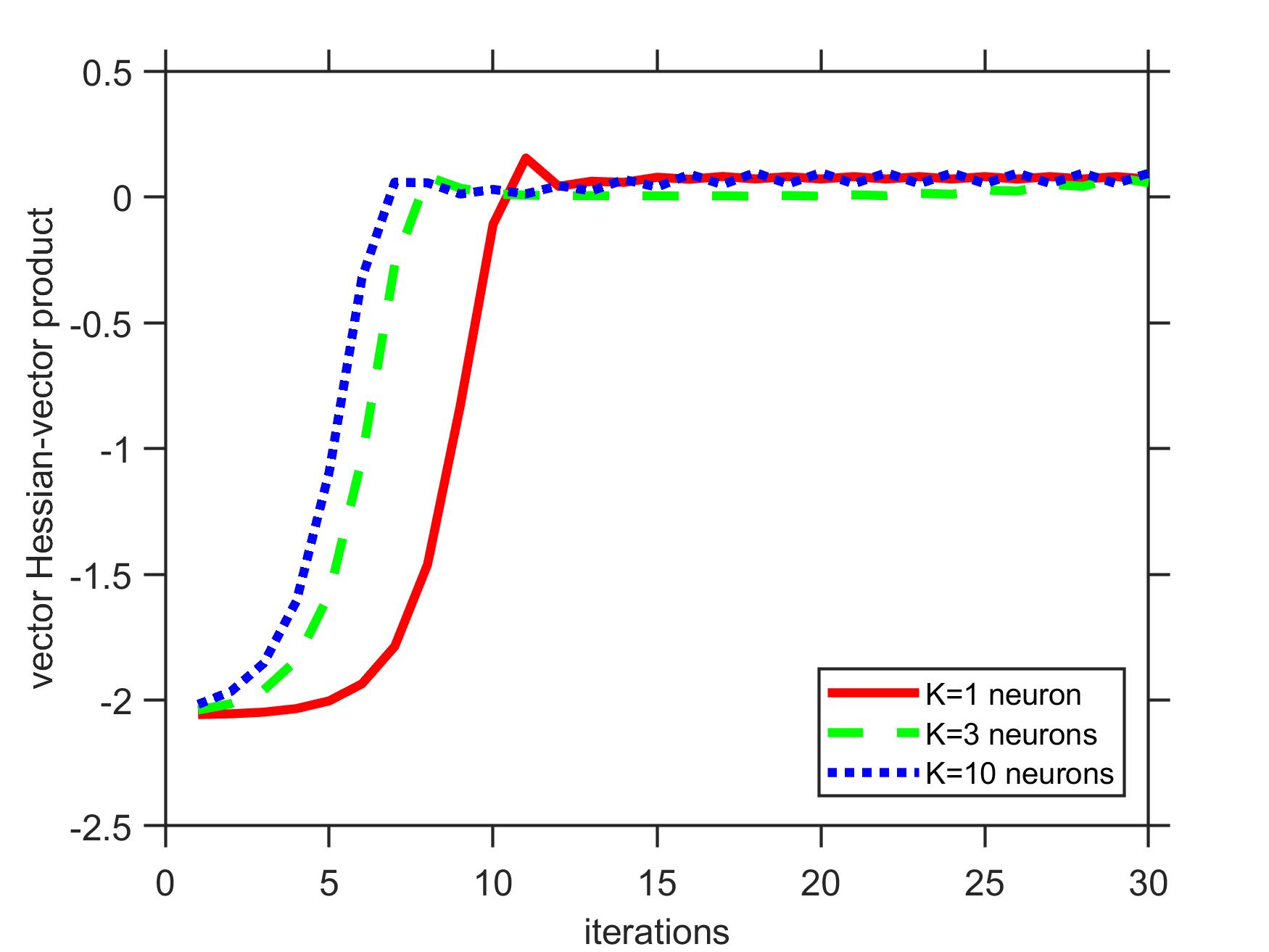}
      } \\
     \caption{ \footnotesize
     Gradient descent with $\eta = 0.5$. We see that gradient descent cannot converge to zero training (and testing) error. That is, the step size is too large to converge to a global optimal solution. Interestingly, we still observe that gradient descent requires fewer iterations to get closer to a global optimal point for a larger model, though it does not converge to a global optimal point using the large step size.
      }
     \label{fig:cannot-converge}
\end{figure}

\begin{figure}[h]
  \centering
    \includegraphics[width=0.5\textwidth]{./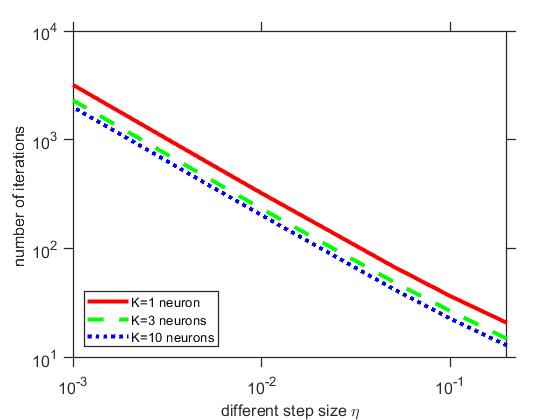}
    \caption{ \footnotesize We plot the number of iterations required for the metric
    $\textstyle
\vec( w_* q_t^\top - W^{\#K}_t)^\top \nabla^2 f(W^{\#K}_t) \vec(w_* q_t^\top - W^{\#K}_t)$ to be positive (the y-axis) under different values of the step size $\eta$ (the x-axis).  
  } \label{fig:etascale}
\end{figure}

\section{Conclusion}
We study over-parametrization for learning a single teacher nueron with quadratic activation. 
We answer why gradient descent
can achieve a faster convergence for training a larger network,
and we also show the acceleration due to over-parametrization cannot be simply explained by that gradient descent uses a larger effective step size.
We hope our work can serve as a good starting point of understanding when and why over-parametrization leads to acceleration in modern non-convex optimization. Future works include considering different activation and multiple teacher neurons.


\acks{The authors acknowledge support of NSF IIS Award 1910077. The work was performed when JW was at Georgia Tech.}

\bibliography{acml21}

\begin{thebibliography}{37}
\providecommand{\natexlab}[1]{#1}
\providecommand{\url}[1]{\texttt{#1}}
\expandafter\ifx\csname urlstyle\endcsname\relax
  \providecommand{\doi}[1]{doi: #1}\else
  \providecommand{\doi}{doi: \begingroup \urlstyle{rm}\Url}\fi

\bibitem[Arora et~al.(2018)Arora, Cohen, and Hazan]{ACH18}
Sanjeev Arora, Nadav Cohen, and Elad Hazan.
\newblock On the optimization of deep networks: Implicit acceleration by
  overparameterization.
\newblock \emph{ICML}, 2018.

\bibitem[Arora et~al.(2019)Arora, Cohen, Hu, and Luo]{ACHL19}
Sanjeev Arora, Nadav Cohen, Wei Hu, and Yuping Luo.
\newblock Implicit regularization in deep matrix factorization.
\newblock \emph{NerurIPS}, 2019.

\bibitem[Brutzkus and Globerson(2019)]{BG19}
Alon Brutzkus and Amir Globerson.
\newblock Why do larger models generalize better? a theoretical perspective via
  the xor problem.
\newblock \emph{ICML}, 2019.

\bibitem[Cand{\'e}s et~al.(2013)Cand{\'e}s, Strohmer, and Voroninski]{CSV13}
Emmanuel~J. Cand{\'e}s, Thomas Strohmer, and Vladislav Voroninski.
\newblock Phaselift: Exact and stable signal recovery from magnitude
  measurements via convex programming.
\newblock \emph{Communications on Pure and Applied Mathematics}, 2013.

\bibitem[Chen et~al.(2019)Chen, Chi, Fan, Ma, and Yan]{CCFMY18}
Yuxin Chen, Yuejie Chi, Jianqing Fan, Cong Ma, and Yuling Yan.
\newblock Gradient descent with random initialization: Fast global convergence
  for nonconvex phase retrieval.
\newblock \emph{Mathematical Programming}, 2019.

\bibitem[Davis et~al.(2018)Davis, Drusvyatskiy, and Paquette]{DDP18}
Damek Davis, Dmitriy Drusvyatskiy, and Courtney Paquette.
\newblock The nonsmooth landscape of phase retrieval.
\newblock \emph{IMA Journal on Numerical Analysis}, 2018.

\bibitem[Du and Lee(2018)]{DL18}
Simon Du and Jason Lee.
\newblock On the power of over-parametrization in neural networks with
  quadratic activation.
\newblock \emph{ICML}, 2018.

\bibitem[Emschwiller et~al.(2020)Emschwiller, Gamarnik, Kızıldag, and
  Zadik]{EGKZ20}
Matt Emschwiller, David Gamarnik, Eren~C. Kızıldag, and Ilias Zadik.
\newblock Neural networks and polynomial regression. demystifying the
  overparametrization phenomena.
\newblock \emph{arXiv:2003.10523}, 2020.

\bibitem[Fannjiang and Strohmer(2020)]{AS20}
Albert Fannjiang and Thomas Strohmer.
\newblock The numerics of phase retrieval.
\newblock \emph{Acta Numerica}, 2020.

\bibitem[Frei et~al.(2020)Frei, Cao, and Gu]{FCG20}
Spencer Frei, Yuan Cao, and Quanquan Gu.
\newblock Agnostic learning of a single neuron with gradient descent.
\newblock \emph{arXiv:2005.14426}, 2020.

\bibitem[Gamarnik et~al.(2019)Gamarnik, Kızıldag, and Zadik]{GKZ19}
David Gamarnik, Eren~C. Kızıldag, and Ilias Zadik.
\newblock Stationary points of shallow neural networks with quadratic
  activation function.
\newblock \emph{arXiv:1912.01599}, 2019.

\bibitem[Ge et~al.(2019)Ge, Wang, and Zhao]{GWZ19}
Rong Ge, Runzhe Wang, and Haoyu Zhao.
\newblock Mildly overparametrized neural nets can memorize training data
  efficiently.
\newblock \emph{arXiv:1909.11837}, 2019.

\bibitem[Gidel et~al.(2019)Gidel, Bach, and Lacoste-Julien]{GBL19}
Gauthier Gidel, Francis Bach, and Simon Lacoste-Julien.
\newblock Implicit regularization of discrete gradient dynamics in linear
  neural networks.
\newblock \emph{NeurIPS}, 2019.

\bibitem[Goel et~al.(2017)Goel, Kanade, Klivans, and Thaler]{GKKT17}
Surbhi Goel, Varun Kanade, Adam Klivans, and Justin Thaler.
\newblock Reliably learning the relu in polynomial time.
\newblock \emph{COLT}, 2017.

\bibitem[Goel et~al.(2019)Goel, Klivans, and Meka]{GKM19}
Surbhi Goel, Adam Klivans, and Raghu Meka.
\newblock Time/accuracy tradeoffs for learning a relu with respect to gaussian
  marginals.
\newblock \emph{NeurIPS}, 2019.

\bibitem[Goldt et~al.(2019)Goldt, Advani, Saxe, Krzakala, and
  Zdeborova]{GASKZ19}
Sebastian Goldt, Madhu Advani, Andrew~M. Saxe, Florent Krzakala, and Lenka
  Zdeborova.
\newblock Dynamics of stochastic gradient descent for two-layer neural networks
  in the teacher-student setup.
\newblock \emph{NeurIPS}, 2019.

\bibitem[Gunasekar et~al.(2017)Gunasekar, Woodworth, Bhojanapalli, Neyshabur,
  and Srebro]{GWBNS17}
Suriya Gunasekar, Blake Woodworth, Srinadh Bhojanapalli, Behnam Neyshabur, and
  Nathan Srebro.
\newblock Implicit regularization in matrix factorization.
\newblock \emph{NIPS}, 2017.

\bibitem[Hoffer et~al.(2017)Hoffer, Hubara, and Soudry]{HHS17}
Elad Hoffer, Itay Hubara, and Daniel Soudry.
\newblock Train longer, generalize better: closing the generalization gap in
  large batch training of neural networks.
\newblock \emph{NIPS}, 2017.

\bibitem[Kakade et~al.(2011)Kakade, Kanade, Shamir, and Kalai]{KKSK11}
Sham~M. Kakade, Varun Kanade, Ohad Shamir, and Adam Kalai.
\newblock Efficient learning of generalized linear and single index models with
  isotonic regression?
\newblock \emph{NeurIPS}, 2011.

\bibitem[Kalan et~al.(2019)Kalan, Soltanolkotabi, and Avestimehr]{KSA19}
Seyed Mohammadreza~Mousavi Kalan, Mahdi Soltanolkotabi, and A.~Salman
  Avestimehr.
\newblock Fitting relus via sgd and quantized sgd.
\newblock \emph{ISIT}, 2019.

\bibitem[Kazemipour et~al.(2019)Kazemipour, Larsen, and Druckmann]{KLD19}
Abbas Kazemipour, Brett Larsen, and Shaul Druckmann.
\newblock No spurious local minima in deep quadratic networks.
\newblock \emph{arXiv:2001.00098}, 2019.

\bibitem[Kingma and Ba(2015)]{KB15}
Diederik~P. Kingma and Jimmy Ba.
\newblock Adam: A method for stochastic optimization.
\newblock \emph{ICLR}, 2015.

\bibitem[Li et~al.(2019)Li, Ma, Chen, and Chi]{LMCC19}
Yuanxin Li, Cong Ma, Yuxin Chen, and Yuejie Chi.
\newblock Nonconvex matrix factorization from rank-one measurements.
\newblock \emph{AISTATS}, 2019.

\bibitem[Li et~al.(2018)Li, Ma, and Zhang]{LMZ18}
Yuanzhi Li, Tengyu Ma, and Hongyang Zhang.
\newblock Algorithmic regularization in over-parameterized matrix sensing and
  neural networks with quadratic activations.
\newblock \emph{COLT}, 2018.

\bibitem[Livni et~al.(2014)Livni, Shalev-Shwartz, and Shamir]{LSS14}
Roi Livni, Shai Shalev-Shwartz, and Ohad Shamir.
\newblock On the computational efficiency of training neural networks.
\newblock \emph{NeurIPS}, 2014.

\bibitem[Mannelia et~al.(2020)Mannelia, Vanden-Eijnden, and
  Zdeborov{\'a}]{MVZ20}
Stefano~Sarao Mannelia, Eric Vanden-Eijnden, and Lenka Zdeborov{\'a}.
\newblock Optimization and generalization of shallow neural networks with
  quadratic activation functions.
\newblock \emph{arXiv:2006.15459}, 2020.

\bibitem[Mei et~al.(2018)Mei, Bai, and Montanari]{MBM17}
Song Mei, Yu~Bai, and Andrea Montanari.
\newblock The landscape of empirical risk for non-convex losses.
\newblock \emph{PNAS}, 2018.

\bibitem[Nguyen and Hein(2017)]{NH17}
Quynh Nguyen and Matthias Hein.
\newblock The loss surface of deep and wide neural networks.
\newblock \emph{ICML}, 2017.

\bibitem[Shechtman et~al.(2015)Shechtman, Eldar, Cohen, Chapman, Miao, and
  Segev]{SECCMS15}
Yoav Shechtman, Yonina~C. Eldar, Oren Cohen, Henry~Nicholas Chapman, Jianwei
  Miao, and Mordechai Segev.
\newblock Phase retrieval with application to optical imaging: a contemporary
  overview.
\newblock \emph{IEEE signal processing magazine}, 2015.

\bibitem[Soltanolkotabi(2014)]{S14}
Mahdi Soltanolkotabi.
\newblock Algorithms and theory for clustering and nonconvex quadratic
  programming.
\newblock \emph{Stanford University Ph. D. Dissertation}, 2014.

\bibitem[Soltanolkotabi(2017)]{S17}
Mahdi Soltanolkotabi.
\newblock Learning relus via gradient descent.
\newblock \emph{NeurIPS}, 2017.

\bibitem[Soltanolkotabi et~al.(2018)Soltanolkotabi, Javanmard, and Lee]{SJL18}
Mahdi Soltanolkotabi, Adel Javanmard, and Jason~D. Lee.
\newblock Theoretical insights into the optimization landscape of
  over-parameterized shallow neural networks.
\newblock \emph{IEEE Transactions on Information Theory}, 2018.

\bibitem[Sun et~al.(2016)Sun, Qu, and Wright]{SQW16}
Ju~Sun, Qing Qu, and John Wright.
\newblock A geometric analysis of phase retrieval.
\newblock \emph{IEEE ISIT}, 2016.

\bibitem[Tian(2020)]{T20}
Yuandong Tian.
\newblock Student specialization in deep rectified networks with finite width
  and input dimension.
\newblock \emph{ICML}, 2020.

\bibitem[Venturi et~al.(2019)Venturi, Bandeira, and Bruna]{VBB19}
Luca Venturi, Afonso~S Bandeira, and Joan Bruna.
\newblock Spurious valleys in one-hidden-layer neural network optimization
  landscapes.
\newblock \emph{JMLR}, 2019.

\bibitem[White et~al.(2016)White, Sanghavi, and Ward]{WSW16}
Chris~D. White, Sujay Sanghavi, and Rachel Ward.
\newblock The local convexity of solving systems of quadratic equations.
\newblock \emph{Results in Mathematics}, 2016.

\bibitem[Yehudai and Shamir(2020)]{YS20}
Gilad Yehudai and Ohad Shamir.
\newblock Learning a single neuron with gradient methods.
\newblock \emph{arXiv:2001.05205}, 2020.

\end{thebibliography}


\section{Proof of Lemma~\ref{lem:linear} } \label{app:benign}

Recall the optimization problem is
\[
\textstyle
 \min_{W \in \reals^{d \times K} }f(W)  :=    \frac{1}{4n} \sum_{i=1}^n \big( (x_i^\top w^{(1)})^2 + (x_i^\top w^{(2)})^2 + ... + (x_i^\top w^{(K)})^2  - y_i    \big)^2,
\]
and the notation that $\nabla F(W):= \E_{x}[ \nabla f(W) ] $ and
$\nabla^2 F(W):= \E_{x}[ \nabla^2 f(W) ] $.
We first introduce some key lemmas. 
Lemma~\ref{lem:strcvx} below says that for any $W \in \reals^{d\times K}$, if the iterate is sufficiently close to a global solution $w_* q^\top$, then the landscape is essentially strongly convex. Lemma~\ref{lem:smooth}, on the other hand, shows the smoothness of the landscape.

\begin{lemma} \label{lem:strcvx}
(locally strong convexity)
Assume that $\dist(W_{t_0}, w_*) :=\| W_{t_0} - w_* q_{t_0}^\top \|_F \leq \nu \| w_* \|_2$, where $W_{t_0} \in \reals^{d\times K}$, $\nu > 0$, $q_{t_0} := \arg\min_{q \in \reals^K: \| q\|_2 \leq 1}  \|  W_{t_0} - w_*q^\top \|_F$, and $w_* \in \reals^d$ being the teacher neuron. Then
\[ \textstyle
\vec(V)^\top \nabla^2 F(W_{t_0}) \vec(V) 
\geq 
2 \tr( q_{t_0} w_*^\top V q_{t_0} w_*^\top V  ) + (2 - 14 \nu - 2 \nu^2)  \| w_* \|^2_2 \| V \|^2_F
\]
for any $V \in \reals^{d \times K}$, where $\tr(\cdot)$ denotes the matrix trace.
\end{lemma}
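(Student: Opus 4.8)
The plan is to reduce everything to a closed form for the population Hessian and then perturb about a global optimum. \emph{Step 1 (closed form).} Differentiating $f(W)=\tfrac{1}{4n}\sum_i(\|x_i^\top W\|^2-(x_i^\top w_*)^2)^2$ twice shows the $(k,\ell)$ block of $\nabla^2 f(W)$ to be $\tfrac1n\sum_i[2(x_i^\top w^{(k)})(x_i^\top w^{(\ell)})+\delta_{k\ell}(\|x_i^\top W\|^2-(x_i^\top w_*)^2)]x_ix_i^\top$, hence $\vec(V)^\top\nabla^2 f(W)\vec(V)=\tfrac1n\sum_i[2(\sum_k(x_i^\top w^{(k)})(x_i^\top v^{(k)}))^2+(\|x_i^\top W\|^2-(x_i^\top w_*)^2)\|x_i^\top V\|^2]$ for any $V\in\reals^{d\times K}$. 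Taking the expectation over $x\sim\N(0,I_d)$ and applying the Gaussian fourth-moment (Isserlis) identities $\E[(x^\top a)^2(x^\top b)^2]=\|a\|^2\|b\|^2+2\langle a,b\rangle^2$ and $\E[(x^\top a)(x^\top b)(x^\top c)(x^\top d)]=\langle a,b\rangle\langle c,d\rangle+\langle a,c\rangle\langle b,d\rangle+\langle a,d\rangle\langle b,c\rangle$ termwise, I would obtain
\[
\vec(V)^\top\nabla^2 F(W)\vec(V)=2(\tr(W^\top V))^2+2\tr(W^\top W\,V^\top V)+2\tr((W^\top V)^2)+\|W\|_F^2\|V\|_F^2+2\|W^\top V\|_F^2-\|w_*\|^2\|V\|_F^2-2\|V^\top w_*\|^2 ,
\]
which for $K=1$ specializes to $(3\|w\|^2-\|w_*\|^2)\|v\|^2+6\langle w,v\rangle^2-2\langle w_*,v\rangle^2$ (a useful sanity check).

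\emph{Step 2 (perturb and bound).} Writing $q:=q_{t_0}$ and $E:=W_{t_0}-w_*q^\top$ with $\|E\|_F\le\nu\|w_*\|$, I would substitute $W_{t_0}=w_*q^\top+E$ and expand. The piece $2\tr((W_{t_0}^\top V)^2)$ splits as $2\tr(qw_*^\top V\,qw_*^\top V)+4\tr(qw_*^\top V\,E^\top V)+2\tr((E^\top V)^2)$, producing the cross term of the lemma together with corrections of first and second order in $E$; the piece $\|W_{t_0}\|_F^2\|V\|_F^2=(\|w_*\|^2\|q\|^2+2\langle w_*q^\top,E\rangle_F+\|E\|_F^2)\|V\|_F^2$ is what I would pair against $-\|w_*\|^2\|V\|_F^2$ to recover the leading curvature; and $2\|W_{t_0}^\top V\|_F^2-2\|V^\top w_*\|^2$, together with the manifestly nonnegative terms $2(\tr(W_{t_0}^\top V))^2$ and $2\tr(W_{t_0}^\top W_{t_0}V^\top V)$, would be handled separately. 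Each $E$-dependent correction is controlled by Cauchy--Schwarz with $\|E\|_{\mathrm{op}}\le\|E\|_F\le\nu\|w_*\|$, $\|q\|\le1$ and $\|V^\top w_*\|\le\|w_*\|\|V\|_F$; for instance $|\tr(qw_*^\top V\,E^\top V)|\le\|q\|\,\|V^\top w_*\|\,\|E^\top V\|_F\le\nu\|w_*\|^2\|V\|_F^2$ and $|\tr((E^\top V)^2)|\le\|E^\top V\|_F^2\le\nu^2\|w_*\|^2\|V\|_F^2$. Collecting the first-order corrections should yield the $-14\nu$ and the second-order ones the $-2\nu^2$.

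\textbf{Main obstacle.} The substantive step is securing the clean coefficient $2$ in front of $\|w_*\|^2\|V\|_F^2$. Near a global optimum the matrix $W_{t_0}^\top W_{t_0}$ is close to the rank-one matrix $\|w_*\|^2qq^\top$, so $2\tr(W_{t_0}^\top W_{t_0}V^\top V)$ admits no lower bound of the form $c\|V\|_F^2$ with $c$ bounded away from $0$, while at the same time $\|W_{t_0}\|_F^2\|V\|_F^2$ almost cancels $-\|w_*\|^2\|V\|_F^2$. Recovering the full curvature in the $\|V\|_F^2$ direction therefore requires a careful cancellation among $2\tr(W_{t_0}^\top W_{t_0}V^\top V)$, $\|W_{t_0}\|_F^2\|V\|_F^2-\|w_*\|^2\|V\|_F^2$, $2\|W_{t_0}^\top V\|_F^2-2\|V^\top w_*\|^2$, $2(\tr(W_{t_0}^\top V))^2$ and $2\tr((W_{t_0}^\top V)^2)$ — exploiting the optimality conditions that $q_{t_0}$ satisfies on the ball $\{\|q\|_2\le1\}$ and, where needed, the fact that in the application $V$ equals $W_{t_0}-w_*q_{t_0}^\top$ rather than an arbitrary matrix. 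I expect the Hessian computation and the individual perturbation bounds to be routine, and arranging this cancellation so the aggregate error stays within $14\nu+2\nu^2$ to be where the real work lies.
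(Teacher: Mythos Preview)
Your approach is exactly the paper's: the same Gaussian-moment closed form (the paper's expression agrees with yours once one notes $\tr(W^\top WV^\top V)=\|WV^\top\|_F^2$), followed by the same substitution $W=w_*q^\top+\theta H$, $\|H\|_F=1$, $\theta=\nu\|w_*\|$, and term-by-term lower bounds via Cauchy--Schwarz.

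On your ``main obstacle'' the paper is less careful than you. After collecting the perturbation terms it arrives at
\[
2\,\tr\!\bigl(qw_*^\top V\,qw_*^\top V\bigr)+2\|Vqw_*^\top\|_F^2-(14\nu+2\nu^2)\|w_*\|^2\|V\|_F^2,
\]
and then replaces $\|Vqw_*^\top\|_F^2$ by $\|w_*\|^2\|V\|_F^2$, invoking ``$\|AB\|_F^2\ge\sigma_r^2(A)\|B\|_F^2$ with $r=\mathrm{rank}(A)$'' for $A=w_*q^\top$, $B=V^\top$. That inequality is false in general, and in fact $\|Vqw_*^\top\|_F^2=\|w_*\|^2\|Vq\|^2\le\|w_*\|^2\|V\|_F^2$ points the other way. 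Your suspicion is therefore well founded: at $\nu=0$ with $d=K=2$, $w_*=e_1$, $q=e_1$ and $V=e_1e_2^\top$ (a tangent direction to the manifold $\{w_*q^\top:\|q\|=1\}$), the Hessian quadratic form equals $0$ while the claimed lower bound is $2$, so the lemma as stated does not hold for arbitrary $V$. Your proposed remedy---restrict to $V=W_{t_0}-w_*q_{t_0}^\top$ and use that optimality of $q_{t_0}=W_{t_0}^\top w_*/\|W_{t_0}^\top w_*\|$ forces $V^\top w_*$ to be parallel to $q_{t_0}$, making $V$ Frobenius-orthogonal to the tangent space of the optimum manifold---is the correct way to recover the missing curvature for the downstream linear-convergence argument.
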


\begin{proof}
The proof is a modification of the proof of Lemma 14 in \cite{LMCC19}.
For notation brevity, we will suppress script $t_0$ in the following (i.e. $W \leftarrow W_{t_0}$, $q \leftarrow q_{t_0}$).
We have that
\[
\begin{split}
& \vec(V)^\top \nabla^2 f(W) \vec(V)
\\ &
= \frac{1}{m} 
\sum_{i=1}^m \vec(V)^\top \big[ \big( ( \| x_i^\top W \|_2^2 - y_i ) I_K + 2 W^\top x_i x_i^\top W  \big) \otimes x_i x_i^\top   \big] \vec(V)
\\ &
= \frac{1}{m}
\sum_{i=1}^m \big( \| x_i^\top W \|_2^2 - y_i \big) \vec(V)^\top \vec( x_i x_i^\top V) + \frac{1}{m} \sum_{i=1}^m \vec(V)^\top \vec( 2 x_i x_i^\top V W^\top x_i x_i^\top W).
\\ &
= \frac{1}{m}
\sum_{i=1}^m \big[ \big( \| x_i^\top W \|_2^2 -  \| x_i^\top w_* q^\top \|_2^2 \big) \| x_i^\top V \|_2^2 + 2 \big( x_i^\top W V^\top x_i \big)^2  \big].
\end{split}
\]
By taking the expectation above over $x_i \sim N(0,I_d)$ and using Lemma~\ref{lem:gaussian}, we have that
\begin{equation} \label{eq:a1}
\begin{split} 
 \E[ \vec(V)^\top \nabla^2 f(W) \vec(V) ]
& = \| W \|^2_F \| V \|^2_F + 2 \| V^\top W \|^2_F 
- \big(  \| w_* q^\top \|^2_F \| V \|^2_F + 2 \| V^\top w_* q^\top \|^2_F \big)
\\ & 
+ 2\big( \tr( W^\top V )^2 + \tr( W^\top V W^\top V) + \| W V^\top \|^2_F \big).
\end{split}
\end{equation}
Now let us set $W = w_* q^\top + \theta H$ with a $H$ satisfying $\| H \|_F =1 $ and $\theta := \nu \| w_* \|$ for some number $\nu > 0$. 
\begin{equation} \label{eq:a2}
\begin{split} 
\| W \|^2_F 
& = \| w_* q^\top \|^2_F  + \theta^2 \| H \|^2_F + 2 \theta \langle w_* q^\top,  H \rangle
\\ & 
\geq \| w_* \|^2 - 2 \theta \| w_* q^\top \|_F \| H \|_F 
\\ &
\geq \| w_* \|^2 - 2 \theta \| w_* \| \| H \|_F  
\\ 
\| V^\top W \|^2_F & = \| V^\top w_* q^\top \|^2_F + \theta^2 \| V^\top H \|^2_F + 2 \theta \tr( V^\top w_* q^\top H^\top  V )
\\ & 
\geq \| V^\top w_* q^\top \|^2_F - 2 \theta \| w_* q^\top \| \| H \|_F \|V \|^2_F
\\ 
\| V W^\top \|^2_F & = \| V q w_*^\top  \|^2_F + \theta^2 \| V H^\top \|^2_F + 2 \theta \tr( V w_*^\top q H  V^\top )
\\ & \geq \| V q  w_*^\top \|^2_F - 2 \theta \| w_*^\top q \| \| H \|_F \|V \|^2_F
\\ & = \|   V q  w_*^\top  \|^2_F - 2 \theta \| w_*  \| \| H \|_F \|V \|^2_F
\\  \tr( W^\top V W^\top V)
& = \tr( q w_*^\top V q w_*^\top V  )
+ 2 \theta \tr( H^\top V q w_*^\top V )
+ \theta^2 \tr( H^\top V H^\top V)  
\\ & 
= \tr( q w_*^\top V q w_*^\top V  )
- 2 \theta \| w_* q^\top \| \| H \|_F \| V \|^2_F
- \theta^2 \| H \|^2_F \| V \|^2_F.
\end{split}
\end{equation}
Combining (\ref{eq:a1}) and (\ref{eq:a2}),
together with the bilinear property of the expectation so that
$\vec(V)^\top \nabla^2 F(w) \vec(V) = \E[ \vec(V)^\top \nabla^2 f(W) \vec(V) ]$,
 we have that
\begin{equation}
\begin{split}
\vec(V)^\top \nabla^2 F(W) \vec(V) 
 \geq & 2 \tr( q w_*^\top V q w_*^\top V  ) + 2 \|   V q  w_*^\top  \|^2_F  
\\ & - 14 \theta \| w_* \| \| H \|_F \| V \|^2_F - 2 \theta^2 \| H \|^2_F \| V \|^2_F
\\
\overset{(a)}{=} & 
2 \tr( q w_*^\top V q w_*^\top V  ) + 2  \| w_* \|^2 \| V \|^2_F  
\\ & - 14 \theta \| w_* \| \| H \|_F \| V \|^2_F -2  \theta^2 \| H \|^2_F \| V \|^2_F 
\\ = &
2 \tr( q w_*^\top V q w_*^\top V  ) + (2 - 14 \nu - 2 \nu^2)  \| w_* \|^2 \| V \|^2_F , 
\end{split}
\end{equation}
where (a) is due to that $\| A B \|^2_F \geq \sigma_r^2(A) \| B \|^2_F$ with $r$ being the rank of $A$, which in our case $A :=  w_* q^\top$ is a rank one matrix and $B := V^\top$; consequently $\sigma_1^2(A) = \| w_* q^\top \|^2_F = \| w_* \|^2$.
\end{proof}

\begin{lemma} \label{lem:smooth}
(smoothness)
Assume that $\dist(W_{t_0}, w_*) := \| W_{t_0} - w_* q_{t_0}^\top \|_F \leq \nu \| w_* \|_2$ where $W_{t_0} \in \reals^{d\times K}$, $\nu > 0$, $q_{t_0} := \arg\min_{q \in \reals^K: \| q\|_2 \leq 1}  \|  W - w_*q^\top \|_F$, and $w_* \in \reals^d$ being the teacher neuron. Then,
$
\textstyle
\| \nabla^2 F(W) \|_2
\leq  (15 + 16 \nu^2 ) \| w_* \|^2.
$
\end{lemma}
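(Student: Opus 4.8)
\textbf{Proof plan for Lemma~\ref{lem:smooth}.}
The plan is to reduce the spectral-norm bound to a scalar quadratic-form estimate and then recycle the closed-form Hessian expression already obtained inside the proof of Lemma~\ref{lem:strcvx}. Since $\nabla^2 F(W)$ is symmetric, $\|\nabla^2 F(W)\|_2 = \sup_{\|V\|_F = 1} |\vec(V)^\top \nabla^2 F(W) \vec(V)|$, and by bilinearity of the expectation this quadratic form equals $\E[\vec(V)^\top \nabla^2 f(W)\vec(V)]$. Equation~(\ref{eq:a1}) in the proof of Lemma~\ref{lem:strcvx} already evaluates this as the seven-term expression
\[
\|W\|_F^2 \|V\|_F^2 + 2\|V^\top W\|_F^2 - \|w_* q^\top\|_F^2\|V\|_F^2 - 2\|V^\top w_* q^\top\|_F^2 + 2\big(\tr(W^\top V)^2 + \tr(W^\top V W^\top V) + \|WV^\top\|_F^2\big),
\]
so all that remains is to upper bound the absolute value of this quantity over $V$ with $\|V\|_F = 1$.

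First I would substitute $W = w_* q^\top + \theta H$ with $\|H\|_F = 1$ and $\theta = \nu\|w_*\|$, exactly the decomposition used in~(\ref{eq:a2}). The reason this helps is a cancellation: the $\theta^0$ part of $\|W\|_F^2\|V\|_F^2$ is $\|w_* q^\top\|_F^2\|V\|_F^2$, which exactly kills the third term, and the $\theta^0$ part of $2\|V^\top W\|_F^2$ is $2\|V^\top w_* q^\top\|_F^2$, which exactly kills the fourth term; hence the first four terms together contribute only $O(\theta)+O(\theta^2)$, i.e.\ $O(\nu+\nu^2)\|w_*\|^2$. Expanding the last three terms the same way (using in particular $\tr(q w_*^\top V q w_*^\top V) = (w_*^\top V q)^2$, a by-product of the computations in the proof of Lemma~\ref{lem:strcvx}), each of $2\tr(W^\top V)^2$, $2\tr(W^\top V W^\top V)$, $2\|WV^\top\|_F^2$ splits into a $\theta^0$ piece of size at most $2\|w_*\|^2\|V\|_F^2$ plus $O(\theta)+O(\theta^2)$ corrections.

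Then I would bound every resulting piece with elementary inequalities only: $\|q\|_2 \le 1$, $\|H\|_2 \le \|H\|_F = 1$, $\|AB\|_F \le \|A\|_2\|B\|_F$, $|\tr(AB)| \le \|A\|_F\|B\|_F$, $\|w_* q^\top\|_F \le \|w_*\|$, and $\|V\|_F = 1$. Summing the contributions gives a bound of the form $(c_0 + c_1\nu + c_2\nu^2)\|w_*\|^2$ with $c_0 = 6$; finally I would absorb the linear term through $2\nu \le 1+\nu^2$ and regroup the quadratic coefficients to reach $\|\nabla^2 F(W)\|_2 \le (15 + 16\nu^2)\|w_*\|^2$. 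As a consistency check, Lemma~\ref{lem:strcvx} shows the smallest eigenvalue is at least $(2 - 14\nu - 2\nu^2)\|w_*\|^2 \ge -(14\nu+2\nu^2)\|w_*\|^2$, which sits well inside this upper bound, so the spectral norm is indeed controlled by the largest eigenvalue.

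The only real obstacle is careful accounting, not any new idea: bounding each of the seven terms crudely by $\|W\|_F^2 \le (1+\nu)^2\|w_*\|^2$ loses the cancellation described above and leaves both a larger constant and a stubborn linear-in-$\nu$ term, so one has to keep the first four terms grouped before estimating and track which $\theta$-corrections are genuinely $O(\nu)$ versus $O(\nu^2)$ in order to land on the stated constants.
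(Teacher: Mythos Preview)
Your approach is correct and genuinely different from the paper's. The paper does not go through the quadratic form at all: it writes the Hessian explicitly in Kronecker form,
\[
\nabla^2 F(W)=\E\big[\big((\|x^\top W\|_2^2-\|x^\top w_* q^\top\|_2^2)I_K+2W^\top xx^\top W\big)\otimes xx^\top\big],
\]
bounds $\|\cdot\|_2$ directly by $\|I_K\otimes A\|_2\le\|A\|_2$ and the triangle inequality, and then applies the Gaussian moment identities (Lemma~\ref{lem:gaussian}) to the two resulting $d\times d$ matrices. This yields $7\|W\|_F^2+\|w_*q^\top\|_F^2+2\|WW^\top-w_*w_*^\top\|_2$, after which the substitution $W=w_*q^\top+\theta H$ is plugged in only at the very end. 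Your route instead recycles the scalar identity~(\ref{eq:a1}) already derived for Lemma~\ref{lem:strcvx} and exploits the cancellation between the first four terms before estimating; that is more economical (no new expectation computation) at the price of the seven-term bookkeeping you flag. One small caveat: with the crude term-by-term bounds you sketch, the linear coefficient comes out around $c_1\approx 18$ and $c_2\approx 9$, so after $2\nu\le 1+\nu^2$ you land at $15+18\nu^2$ rather than $15+16\nu^2$; this is harmless for the downstream use (Lemma~\ref{lem:linear} only needs $\nu\lesssim 0.14$, and indeed the paper itself states the constant inconsistently as $13$ versus $15$), but if you want the exact stated constant you will need to squeeze a couple of the cross terms slightly tighter than the generic $|\tr(AB)|\le\|A\|_F\|B\|_F$ you propose.
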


\begin{proof}
For notation brevity, we will suppress script $t_0$ in the following (i.e. $W \leftarrow W_{t_0}$, $q \leftarrow q_{t_0}$). We have that
\begin{equation}
\begin{split}
\| \nabla^2 F(W) \|_2
& = \| \E \big[ [\big( \| x^\top W \|^2_2 - \| x^\top w_* q^\top \|^2_2  \big) I_K + 2 W^\top x x^\top W ]    \otimes x x^\top \big]\|_2
\\ & \leq \| \E \big[ [ \big| \| x^\top W \|^2_2 - \| x^\top w_* q^\top \|^2_2 \big|  I_K + 2 \| x^\top W \|^2_2 I_K ]    \otimes x x^\top \big]\|_2
\\ & \overset{(a)}{ \leq} 
\| \E \big[ \big| \|x^ \top W \|^2_2 - \| x^\top w_* q^\top \|^2_2  \big|  x x^\top  \big]\|_2
+ 
2 \| \E \big[ \| x^\top W \|^2_2 x x^\top \big] \|_2
\\ & \overset{(b)}{ = } 
\| \E \big[ ( \|x^ \top W \|^2_2 - \| x^\top w_* q^\top \|^2_2 )  x x^\top  \big]\|_2
+ 2 \| \| W \|^2_F I_d + 2 W W^\top \|_2
\\ & \overset{(c)}{ = } 
\| ( \| W \|^2_F - \| w_* q^\top \|^2_F ) I_d + 2 ( W W^\top - w_* w_*^\top)  \|_2
+ 2 \| \| W \|^2_F I_d + 2 W W^\top \|_2
\\ & \leq
 7 \| W \|^2_F + \| w_* q^\top \|^2_F  + 2 \| W W^\top - w_* w_*^\top \|_2 \\ & \leq
 (15 + 16 \nu^2 ) \| w_* \|^2 , 
\end{split}
\end{equation}
where (a) is due to $\| I \otimes A \|_2 \leq \| I \|_2 \| A \|_2 = \| A \|_2$, (b,c) is due to Lemma~\ref{lem:gaussian},
and the last inequality is by setting $W = w_* q^\top + \theta H$ with a $H$ satisfying $\| H \|_F =1 $ and $\theta := \nu \| w_* \|$.

\end{proof}

Lemma~\ref{lem:strcvx} and Lemma~\ref{lem:smooth} together implies that when the the iterate is in a neighborhood of a global optimal solution, then gradient descent has a linear convergence rate.

\noindent
\textbf{Lemma~\ref{lem:linear}:}
\textit{
(locally linear convergence)
Suppose that at time $t_0$, $\dist(W_{t_0}, w_*) :=\| W_{t_0} - w_* q_{t_0}^\top \| \leq \nu \| w_* \|$ where $W_{t_0} \in \reals^{d\times K}$, $\nu > 0$ satisfies $2 - 14 \nu - 2\nu^2 > 0$, and $q_{t_0} := \arg\min_{q \in \reals^K: \| q\|_2 \leq 1}  \|  W_{t_0} - w_*q^\top \|$. Then,
gradient descent with the step size $\eta \leq \frac{ 2 - 14 \nu - 2\nu^2 }{  (15 + 16 \nu^2 )^2 \| w_* \|^4}$
generates iterates $\{ W_{t} \}_{t \geq t_0}$ satisfying
\[
\textstyle
\dist^2(W_{t+1},w_*) \leq 
( 1 -  \eta  (2 - 14 \nu -2 \nu^2) ) \dist^2( W_{t}, w_*).
\]
}

\begin{proof}

We have that
\begin{equation} \label{eq:b0}
\begin{split}
\dist^2( W_{t+1}) & := \| W_{t+1} - w_* q_{t+1}^\top \|^2_F 
\\ & \leq  \| W_{t+1} - w_* q_t^\top \|^2_F
\\ & =  \| W_t - \eta \nabla F(W_t) - w_* q_t^\top \|^2_F
\\ & \overset{(a)}{=}  \| w_t - \bar{w}_*  - \eta \vec\big( \nabla F(W_t) - \nabla F(w_* q_t^\top) \big) \|^2_F
\\ & \overset{(b)}{=}  \| w_t - \bar{w}_*  - \eta \big( \int_0^1 \nabla^2 F(W_t(\tau) ) d \tau \big) (w_t - \bar{w}_*) \|^2_F
\\ & = ( w_t - \bar{w}_* )^\top \big( I_{dK} -  \eta \int_0^1 \nabla^2 F(W_t(\tau) ) d \tau      \big)^2 ( w_t - \bar{w}_* )  
\\ & \leq \|  w_t - \bar{w}_* \|^2_2 - 2 \eta 
 ( w_t - \bar{w}_* )^\top \big( \int_0^1 \nabla^2 F(W_t(\tau) ) d \tau      \big) ( w_t - \bar{w}_* )
\\ &  + \eta^2 \|    \int_0^1 \nabla^2 F(W_t(\tau) ) d \tau \|_2^2 \|  w_t - \bar{w}_* \|^2_2. 
\end{split}
\end{equation}
where $(a)$ we use the notations that 
$w_t := \vec(W_t)$ and $\bar{w}_* := \vec(w_* q_t^\top)$ 
and that $\nabla F (w_* q_t^\top) = 0$
and $(b)$ we denote $W_t(\tau) := w_* q_t^\top + \tau \big( W_t - w_* q_t^\top \big)$.

Notice that
$\dist( W_t(\tau), w_*) \leq \| W_t(\tau) - w_* q_t^\top \| 
= \tau \| W_t - w_* q_t^\top \| \leq \nu \| w_* \|$.
So we can invoke Lemma~\ref{lem:strcvx} to obtain that
\begin{equation} \label{eq:b1}
\begin{split}
& ( w_t - \bar{w}_* )^\top \big( \int_0^1 \nabla^2 F(W_t(\tau) ) d \tau      \big) ( w_t - \bar{w}_* )
\\ & 
\geq 2 \tr( q_t w_*^\top ( W_t - w_* q_t^\top ) q_t w_*^\top ( W_t - w_* q_t^\top ) )
+ (2 - 14 \nu -2 \nu^2)  \| w_* \|^2 \| W_t - w_* q_t^\top \|^2_F
\\ & 
\geq  (2 - 14 \nu -2 \nu^2)  \| w_* \|^2 \| W_t - w_* q_t^\top \|^2_F.
\end{split}
\end{equation}
where the last inequality is due to that
$q_t w_*^\top ( W_t - w_* q_t^\top )$ is a symmetric matrix
since $q_t= \frac{W_t^\top w_*}{ \| W_t^\top w_* \| }$
so that 
$\tr( q_t w_*^\top ( W_t - w_* q_t^\top ) q_t w_*^\top ( W_t - w_* q_t^\top ) ) = \| q_t w_*^\top ( W_t - w_* q_t^\top ) \|^2_F \geq 0$.
Furthermore, by Lemma~\ref{lem:smooth}, we have that
\begin{equation}
\| \nabla^2 F(W) \|_2
\leq  (15 + 16 \nu^2 ) \| w_* \|^2.
\end{equation}
So
\begin{equation} \label{eq:b2}
\eta^2 \|    \int_0^1 \nabla^2 F(W_t(\tau) ) d \tau \|_2^2 \|  w_t - \bar{w}_* \|^2_2 
\leq \eta^2 (15 + 16 \nu^2 )^2 \| w_* \|^4 \|  w_t - \bar{w}_* \|^2_2 .
\end{equation}
Combining (\ref{eq:b0}), (\ref{eq:b1}), and (\ref{eq:b2}), we get
\begin{equation}
\begin{split}
\dist^2( W_{t+1}, w_*) 
 \leq & \|  w_t - \bar{w}_* \|^2_2 - 2 \eta 
 ( w_t - \bar{w}_* )^\top \big( \int_0^1 \nabla^2 F(W_t(\tau) ) d \tau      \big) ( w_t - \bar{w}_* )
\\ &  + \eta^2 \|    \int_0^1 \nabla^2 F(W_t(\tau) ) d \tau \|_2^2 \|  w_t - \bar{w}_* \|^2_2
\\ \leq & \|  W_t - w_* q_t^\top \|^2_F - 2 \eta  (2 - 14 \nu - 2 \nu^2)  \| w_* \|^2 \| W_t - w_* q_t^\top \|^2_F
\\ & + \eta^2 (15 + 16 \nu^2 )^2 \| w_* \|^4 \| W_t - w_* q_t^\top \|^2_F
\\ \leq & ( 1 -  \eta  (2 - 14 \nu - 2 \nu^2) ) \dist^2( W_{t}, w_*), 
\end{split}
\end{equation}
where the inequality we use that $\eta \leq \frac{ 2 - 14 \nu - 2 \nu^2 }{
  (15 + 16 \nu^2 )^2 \| w_* \|^4} $.

\end{proof}

\begin{lemma} (Lemma~12 in \cite{LMCC19}) \label{lem:gaussian}
Suppose $x \sim N(0,I_d)$. Then for any fixed matrices $W,V \in \reals^{d \times r}$. We have that
\[
\begin{aligned}
\E[ \| x^\top V \|_2^2 \| x^\top W \|_2^2 ] & = \| V \|^2_F \| W \|^2_F +  2 \| V^\top W \|^2_F
\\
\E[(x^\top W V^\top x)^2 ] & = \big( \tr(W^\top V) \big)^2 + \tr( W^\top V W^\top V ) + \| W V^\top \|^2_F.
\end{aligned}
\]
\end{lemma}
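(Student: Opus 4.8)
\textbf{Proof proposal for Lemma~\ref{lem:gaussian}.}
The plan is to establish both identities as routine fourth-moment computations for the isotropic Gaussian $x \sim N(0,I_d)$, using Isserlis' (Wick's) theorem as the only tool: for jointly centered-Gaussian scalars $Z_1,Z_2,Z_3,Z_4$,
\[
\E[Z_1 Z_2 Z_3 Z_4] = \E[Z_1 Z_2]\,\E[Z_3 Z_4] + \E[Z_1 Z_3]\,\E[Z_2 Z_4] + \E[Z_1 Z_4]\,\E[Z_2 Z_3],
\]
together with the elementary identity $\E[(a^\top x)(b^\top x)] = a^\top b$ for fixed $a,b\in\reals^d$ (since $\E[xx^\top]=I_d$).

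For the first identity I would expand both squared norms over columns. Writing $v_1,\dots,v_r$ and $w_1,\dots,w_r$ for the columns of $V$ and $W$, we have $\|x^\top V\|_2^2 = \sum_{j} (v_j^\top x)^2$ and $\|x^\top W\|_2^2 = \sum_{k} (w_k^\top x)^2$, hence $\E[\|x^\top V\|_2^2 \|x^\top W\|_2^2] = \sum_{j,k} \E[(v_j^\top x)^2 (w_k^\top x)^2]$. Applying the Wick formula with $Z_1 = Z_2 = v_j^\top x$ and $Z_3 = Z_4 = w_k^\top x$ gives $\E[(v_j^\top x)^2 (w_k^\top x)^2] = \|v_j\|^2 \|w_k\|^2 + 2(v_j^\top w_k)^2$. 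Summing over $j,k$ and recognizing $\sum_j \|v_j\|^2 = \|V\|_F^2$, $\sum_k\|w_k\|^2 = \|W\|_F^2$, and $\sum_{j,k}(v_j^\top w_k)^2 = \sum_{j,k}(V^\top W)_{jk}^2 = \|V^\top W\|_F^2$ yields the first claim.

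For the second identity I would first rewrite the scalar $x^\top W V^\top x$ as $x^\top M x$ with $M := W V^\top \in \reals^{d\times d}$, using that a scalar equals its own trace together with the cyclic invariance of the trace. Expanding the square coordinate-wise, $(x^\top M x)^2 = \sum_{i,l,p,q} M_{il} M_{pq}\, x_i x_l x_p x_q$, and the Wick formula applied to the independent standard Gaussians $x_i$ gives $\E[x_i x_l x_p x_q] = \delta_{il}\delta_{pq} + \delta_{ip}\delta_{lq} + \delta_{iq}\delta_{lp}$, so the sum collapses to $(\tr M)^2 + \|M\|_F^2 + \tr(M^2)$. It then remains to translate back: $\tr M = \tr(W V^\top) = \tr(W^\top V)$, $\|M\|_F^2 = \|W V^\top\|_F^2$, and $\tr(M^2) = \tr(W V^\top W V^\top) = \tr(W^\top V W^\top V)$, the last equality following from cyclicity of the trace and $\tr(A) = \tr(A^\top)$ applied with $A = V^\top W V^\top W$.

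I do not expect any genuine obstacle here: the statement is elementary, and the only place requiring care is the index bookkeeping that converts the coordinate sums into Frobenius-norm and trace expressions, in particular verifying the identity $\tr(W V^\top W V^\top) = \tr(W^\top V W^\top V)$. An alternative route that avoids invoking Wick's theorem would be to rotate coordinates so that one fixed vector aligns with $e_1$ and integrate directly, but with two free matrices this is messier, so I would stick with the Wick expansion.
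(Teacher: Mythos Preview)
Your proposal is correct. The paper itself does not supply a proof of this lemma at all: it is stated there solely as a citation (``Lemma~12 in \cite{LMCC19}'') and invoked as a black box in the proofs of Lemmas~\ref{lem:strcvx} and~\ref{lem:smooth}. Your Wick/Isserlis expansion is the standard self-contained argument and all of the bookkeeping checks out, including the trace identity $\tr(WV^\top WV^\top)=\tr(W^\top V W^\top V)$, so you have strictly more than what the paper provides.
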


\section{Proof of Theorem~\ref{lem:single}} \label{app:single}

\begin{proof}
Since it is clear about the number of the student neuron (which is $1$), in the following, we suppress the subscript $\#1$ and the superscript $(1)$ for the brevity of notations.
Recall the dynamics,
\begin{equation}
\begin{aligned}
w_{t+1}^{\parallel} 
& = 
w_t^{\parallel} \big( 1 + \eta ( 3 \| w_* \|^2 - 3 \| w_t \|^2   ) \big) 
\\ 
w_{t+1}^{\perp} 
& = 
w_t^{\perp} \big( 1 + \eta ( \| w_* \|^2 - 3 \| w_t \|^2   ) \big).
\end{aligned}
\end{equation}
and note that 
$w_t^{\parallel} = w_t[1] \| w_* \|$ so we have that
\begin{equation}
w_{t+1}[1] 
= 
w_t[1] \big( 1 + \eta ( 3 \| w_* \|^2 - 3 \| w_t \|^2   ) \big) 
\end{equation}

For a number $\gamma > 0$,
define
$\textstyle
T_{\gamma}:= \min \{t : 
| |w_t[1]| - \| w_* \| | \leq \gamma \text{ and } \| w_t^\perp \| \leq \gamma \}$.
We can decompose the square of the distance term as follows,
\[
\dist^2( W_t^{\#1}, w_*) 
 = | |w_{t}[1]| - \| w_* \| |^2 +  \| w_t^\perp \|^2.
\]
In the latter part of this proof, we will show that $\| w_t^{\perp} \| \leq \gamma$ for all $t \leq T_{\gamma}$ .
Let us upper-bound the norm of $\| w_t \|^2$ for $t \leq T_{\gamma}$ as follows.
\begin{equation}
\begin{split}
\| w_t \|^2 & =  w_t[1]^2 + \| w_t^\perp \|^2
\leq ( \| w_* \| - \gamma)^2  + \gamma^2
\\ & 
=  \| w_* \|^2 - 2 \gamma \| w_* \| +  2 \gamma^2,
\end{split}
\end{equation}
where the inequality is because $w_t[1]^2 \leq ( \| w_* \| - \gamma )^2$ for $t \leq T_{\gamma}$ by the definition and that $\| w_t^{\perp} \| \leq \gamma$ for all $t \leq T_{\gamma}$ proved in the latter part.
Hence, we have that
\begin{equation} \label{eq:sgrow}
\begin{split}
| w_{t+1}[1] | & = | w_{t}[1] | \big( 1 + \eta ( 3 \| w_* \|^2 - 3 \| w_t \|^2   ) \big)
\geq | w_t[1] | \big( 1 + \eta ( 3 \|w_*\|^2 - 3 ( \| w_* \|^2 - 2 \gamma \| w_* \| +  2 \gamma^2 ) )    \big)
\\ & := | w_t[1] | \big( 1 + \eta \Delta  \big)
= | w_0[1] | \big( 1 + \eta \Delta  \big)^{t+1},
\end{split}
\end{equation}
with
\[
\Delta := 6 \gamma ( \| w_* \| - \gamma ) > 0.
\]
Based on (\ref{eq:sgrow}), it takes at most number of iterations
\[
\begin{aligned}
T_{\gamma} \leq \frac{ \log ( \frac{ \| w_* \| - \gamma  }{ |w_0[1]| }  ) }{ \log( 1 + \eta \Delta) }
\end{aligned}
\]
for $w_t[1]$ to rises above $\| w_* \| - \gamma$ if $w_0[1] > 0$.
Similarly, if $w_0[1] < 0$, it takes $T_{\gamma}$ iterations for $w_0[1]$ to satisfy $w_t[1] \leq - \| w_* \| + \gamma$.

Now we switch to show that for $0 \leq t \leq T_{\gamma}$,
$\gamma \geq \| w_{t}^\perp \|. $
We are going show that the perpendicular component $\| w_t^\perp \|$ start decaying before it could have increased above $\gamma$. 
From the dynamics 
$\| w_{t+1}^{\perp} \| = \| w_t^{\perp} \| \big( 1 + \eta ( \| w_* \|^2 - 3 \| w_t \|^2   ) \big)$, we see that once $\| w_t \|^2 \geq \frac{1}{3} \| w_* \|^2$, the size of the perpendicular component $\| w_t^{\perp} \|$ starts decaying. On the other hand, $\| w_t^{\perp} \|$ is increasing when $\| w_t \|^2 \leq \frac{1}{3} \| w_* \|^2$,
which leads to the following before the perpendicular component starts decaying,
\begin{equation} \label{eq:ngrow}
\begin{split}
\| w_t \|^2 \geq w_t^2[1] = \frac{1}{ \| w_* \|^2} 
| w_t^\parallel |^2 
& = \frac{1}{ \| w_* \|^2} | w_{t-1}^\parallel |^2 \big( 1 + \eta ( 3 \| w_* \|^2 -3 \| w_{t-1} \|^2 )  \big)^2
\\ & 
\geq \frac{1}{ \| w_* \|^2} | w_{t-1}^\parallel |^2 \big( 1 + 2 \eta  \| w_* \|^2  \big)^2
\\ &
\geq \frac{1}{ \| w_* \|^2} | w_{0}^\parallel |^2 \big( 1 + 2 \eta  \| w_* \|^2  \big)^{2t} =
| w_{0}[1] |^2 \big( 1 + 2 \eta  \| w_* \|^2  \big)^{2t}, 
\end{split}
\end{equation}
which means that that the size of $\| w_t \|^2$ grows at the rate at least
$\big( 1 + 2 \eta  \| w_* \|^2  \big)^{2}$ before the perpendicular component $\| w_t^\perp \|$ starts decaying.

The inequality (\ref{eq:ngrow}) also implies that the number of iterations such that 
$\| w_t \|^2 \leq \frac{1}{3} \| w_* \|^2$ is 
 at most 
\begin{equation}
t^* \leq \frac{1}{2} \frac{ \log ( \frac{\| w_* \|^2 }{3 | w_0[1] |^2 }  ) }{ \log \big( (1 + 2 \eta  \| w_* \|^2)^2  \big)  }.
\end{equation}
After $t^*$, we have that $\| w_t^\perp \|$ is decaying.
So we only have to show that for $0 \leq t \leq t^*$
 the perpendicular component never rise above $\gamma$.
It suffices to show that an upper bound of $\| w_{t}^\perp \|$ for $0 \leq t \leq t^*$
is not greater than $\gamma$,
\begin{equation}
\begin{split}
\| w_{t}^{\perp} \| 
& = 
\| w_{t-1}^{\perp} \| \big( 1 + \eta ( \| w_* \|^2 - 3 \| w_{t-1} \|^2   ) \big) 
\\ &
\leq 
\| w_{t-1}^{\perp} \| \big( 1 + \eta ( \| w_* \|^2 - 3 w_{t-1}^2[1] ) \big)
\\ & 
\leq 
\| w_0^{\perp} \| \cdot
\Pi_{s=0}^{t^*-1} \big( 1 + \eta ( \| w_* \|^2 - 3 w_s^2[1] ) \big)
\\ & 
\overset{(a)}{\leq} 
\| w_0^{\perp} \| \cdot
\Pi_{s=0}^{t^*-1} \big( 1 + \eta ( \| w_* \|^2 - 3 w_0^2[1] (1 + \eta \Delta)^{2s}     )    \big)
\\ & 
\overset{?}{\leq} \gamma,
\end{split}
\end{equation}
where (a) we use (\ref{eq:sgrow}). 
By taking logarithm on the both sides of $\overset{?}{\leq}$ , it suffices to show that
\begin{equation}
\begin{split}
& \log \| w_0^\perp \| + \sum_{s=0}^{t^* - 1} 
\log \big(  1 + \eta ( \| w_* \|^2 - 3 w_0^2[1] (1 + \eta \Delta)^{2s}     ) \big)
\overset{?}{\leq} \log \gamma.
\end{split}
\end{equation}

Using the fact that $\log x \geq 1 - \frac{1}{x}$ and that $\log (1+x) \leq x$ for $x>-1$,
it suffices to show that
\[
\begin{split}
& \sum_{s=0}^{t^* - 1} 
\eta ( \| w_* \|^2 - 3 w_0^2[1] (1 + \eta \Delta)^{2s}     ) \overset{?}{ \leq }
1 - \frac{ \| w_0^\perp \| }{ \gamma }.
\end{split}
\]
which can be guaranteed if 
\begin{equation} \label{eq:tif}
\begin{split}
& \eta t^* \| w_* \|^2 \leq  3 \eta w_0^2[1]
\frac{  (1+\eta \Delta)^{2t^*} -1  }{ (1+\eta \Delta)^2 -1  }
+ \frac{9}{10},
\end{split}
\end{equation}
where we use that $\gamma \geq 10 \| w_0^\perp\|$. 
So it suffices to have the step size satisfy $\eta = c / \| w_* \|^2$ for some sufficiently small constant $c>0$. 

\end{proof}

\section{Proof of Lemma~\ref{lem:syn_up} and Lemma~\ref{lem:norm_up}} \label{app:oparam}

\noindent
\textbf{Lemma~\ref{lem:syn_up}:}
\textit{ 
Suppose that the approximated dynamics (\ref{eq:multi-dyn2}) and (\ref{multi-par}) hold from iteration $0$ to iteration $t$.
Then, the network with a single neuron and an over-parametrized network trained by GD with the same step size $\eta$
has
$\sqrt{\sum_{k=1}^K | \tparw{k}{K}{t} |^2} \gtrsim \sqrt{ ( 1 - 2 t \theta) } \sqrt{K} | \tparw{1}{1}{t} |$.
}

\begin{proof} 

From the dynamics (\ref{eq:multi-dyn2}), we have that
\begin{equation} \label{eq:par1}
\begin{aligned}
| \tparw{k}{K}{t} |
\geq  &
( 1 - \theta)
| \tparw{k}{K}{t-1} |  \big( 1 + \eta ( 3 \| w_* \|^2 - 3 \| \tw{k}{K}{t-1} \|^2  ) \big)
\\ \geq  &
( 1 - \theta )^t | \tparw{k}{K}{0} |
\cdot \Pi_{s=0}^{t-1}  
\big( 1 + \eta ( 3 \| w_* \|^2 - 3 \| \tw{k}{K}{s} \|^2  ) \big).
\end{aligned}
\end{equation}
Therefore,
\begin{equation} \label{eq:par2}
\begin{split}
&\frac{ \sum_{k=1}^K | \tparw{k}{K}{t} |^2  }{ | \tparw{1}{1}{t} |^2 }
\overset{(a)}{ \geq } 
\frac{ \sum_{k=1}^K 
( 1 - \theta )^{2t}  |\tparw{k}{K}{0}|^2
\cdot \Pi_{s=0}^{t-1}  
\big( 1 + \eta ( 3 \| w_* \|^2 - 3 \| \tw{k}{K}{s} \|^2  ) \big)^2.
   }{  |\tparw{1}{1}{0}|^2
\cdot \Pi_{s=0}^{t-1}  
\big( 1 + \eta ( 3 \| w_* \|^2 - 3 \| \tw{1}{1}{s} \|^2  ) \big)^2    }
\\ &
\overset{(b)}{\gtrsim}
\sum_{k=1}^K 
( 1 - \theta )^{2t} \frac{ |\tparw{k}{K}{0}|^2 }{  |\tparw{1}{1}{0}|^2     }
\overset{(c)}{\geq} ( 1 - 2 t \theta ) \sum_{k=1}^K \frac{ |\tparw{k}{K}{0}|^2 }{  |\tparw{1}{1}{0}|^2     }
\overset{(d)}{\gtrsim} ( 1 - 2 t \theta ) K,
\end{split}
\end{equation}
where (a) is due to (\ref{eq:par1}) and the recursive expansion, (b) is by using that $\| \tw{1}{1}{s} \|^2 \gtrsim \| \tw{k}{K}{s} \|^2$ 
so that 
$\Pi_{s=0}^{t-1} \big( 1 + \eta ( 3 \| w_* \|^2 - 3 \| \tw{k}{K}{s} \|^2  ) \big)^2 \gtrsim \Pi_{s=0}^{t-1} \big( 1 + \eta ( 3 \| w_* \|^2 - 3 \| \tw{1}{1}{s} \|^2  ) \big)^2 $,
(c) is by $(1+x)^r \geq 1 + rx $ for $x>-1$ and $r \in R \setminus (0,1)$, and (d) is by the initialization such that each node is i.i.d. initialized from the gaussian distribution.

\end{proof}

\noindent
\textbf{Lemma \ref{lem:norm_up}:} 
\textit{ 
Suppose that $\eta \leq \frac{1}{3 \| w_* \|^2}$.
By following the conditions as Lemma~\ref{lem:syn_up},
we have that
$
\| \tpenw{k}{K}{t} \|
\lesssim \frac{ |\tparw{k}{K}{t}| \| \tpenw{k}{K}{0} \|  }{ | \tparw{k}{K}{0} |   } \frac{1}{\psi^t}
\lesssim 
\frac{ | \tparw{1}{1}{t} | \| \tpenw{k}{K}{0} \|  }{ | \tparw{k}{K}{0} |   } \frac{1}{\psi^t},$
where $\psi:= (1- \theta - \vartheta - \theta \vartheta  ) \big( 1 + \eta \| w_* \|^2  \big)$.
}

\begin{proof}
For brevity, let us suppress the subscript $\#K$ for the moment.
By the approximated dynamics (\ref{eq:multi-dyn2}), we have that 
\begin{equation}
\begin{split}
& \frac{ |w_{t+1}^{(k),\parallel}| }{ \| w_{t+1}^{(k),\perp} \| }
 \geq
\frac{ (1-\theta) ( 1 + \eta ( 3 \| w_* \|^2 - \| w_t^{(k)}\|^2  ) )  }{ (1+\vartheta)(   1 + \eta ( \| w_* \|^2  - \| w_t^{(k)}\|^2  ) ) }
\frac{ |w_{t}^{(k),\parallel}| }{ \| w_{t}^{(k),\perp} \| }
\\ & \geq (1- \theta - \vartheta - \theta \vartheta ) \big( 1 + 2 \eta \| w_* \|^2 - \eta ^2 ( 3 \| w_* \|^2 - \| w_t^{(k)}\|^2  )  ( \| w_* \|^2  - \| w_t^{(k)}\|^2  )   \big)  \frac{ |w_{t}^{(k),\parallel}| }{ \| w_{t}^{(k),\perp} \| }
\\ & \gtrsim \psi \frac{ |w_{t}^{(k),\parallel}| }{ \| w_{t}^{(k),\perp} \| },
\end{split}
\end{equation}
where the second inequality we use $\frac{1+a}{1+b} \geq 1 + a - b - ab$ for $b \geq -1$ and
 the last inequality is because 
$(1- \theta - \vartheta - \theta \vartheta ) \big( 1 + 2 \eta \| w_* \|^2 - \eta ^2 ( 3 \| w_* \|^2 - \| w_t^{(k)}\|^2  )  ( \| w_* \|^2  - \| w_t^{(k)}\|^2  ) \big)
\gtrsim
(1- \theta - \vartheta - \theta \vartheta  ) \big( 1 + \eta \| w_* \|^2  \big)
:= \psi,$
as $\| \tw{k}{K}{t}\|^2 \lesssim \| \tw{1}{1}{t} \|^2 < \| w_* \|^2$ and that $\eta \leq \frac{1}{3 \| w_* \|^2}$.

Consequently,
$\frac{ |w_{t}^{(k),\parallel}| }{ \| w_{t}^{(k),\perp} \| }
\gtrsim 
\psi^t
\frac{ |w_{0}^{(k),\parallel}| }{ \| w_{0}^{(k),\perp} \| }.$
Namely,
\begin{equation}
\begin{split}
\| \tpenw{k}{K}{t} \| \lesssim
\frac{ |\tparw{k}{K}{t}| \| \tpenw{k}{K}{0} \|  }{ | \tparw{k}{K}{0} |   } \frac{1}{\psi^t}.
\end{split}
\end{equation}
Moreover, by the condition that
$|\tparw{1}{1}{t}| \gtrsim |\tparw{k}{K}{t}|,$
we have that
\begin{equation}
\begin{split}
\| \tpenw{k}{K}{t} \|
& \lesssim \frac{ |\tparw{k}{K}{t}| \| \tpenw{k}{K}{0} \|  }{ | \tparw{k}{K}{0} |   } \frac{1}{\psi^t}
\lesssim
\frac{ |\tparw{1}{1}{t}| \| \tpenw{k}{K}{0} \|  }{ | \tparw{k}{K}{0} |   } \frac{1}{\psi^t}.
\end{split}
\end{equation}

\end{proof}


\end{document}